\newtheorem{theorem}{Theorem}[section]
\newtheorem{lemma}[theorem]{Lemma}
\newtheorem{corollary}[theorem]{Corollary}
\newtheorem{remark}[theorem]{Remark}
\newcommand{\reals}{\mathbb{R}}
\newcommand{\sign}{\mathrm{sign}}
\newcommand{\relu}[1]{\left[ #1 \right]_+}
\newcommand{\bx}{\mathbf{x}}
\newcommand{\bw}{\mathbf{w}}
\newcommand{\bz}{\mathbf{z}}
\newcommand{\by}{\mathbf{y}}
\newcommand{\bmu}{\boldsymbol{\mu}}
\newcommand{\btheta}{\boldsymbol{\theta}}
\newcommand{\Dcal}{\mathcal{D}}
\newcommand{\norm}[1]{\|#1\|}
\newcommand{\abs}[1]{\left|#1\right|}
\newtheorem{assumption}{Assumption}[section]
\newcommand{\subsecref}[1]{Subsection~\ref{#1}}
\newcommand{\figref}[1]{Fig.~\ref{#1}}
\renewcommand{\eqref}[1]{Eq.~(\ref{#1})}
\newcommand{\lemref}[1]{Lemma~\ref{#1}}
\newcommand{\thmref}[1]{Thm.~\ref{#1}}
\def\moverlay{\mathpalette\mov@rlay}
\def\mov@rlay#1#2{\leavevmode\vtop{%
   \baselineskip\z@skip \lineskiplimit-\maxdimen
   \ialign{\hfil$\m@th#1##$\hfil\cr#2\crcr}}}
\newcommand{\charfusion}[3][\mathord]{
    #1{\ifx#1\mathop\vphantom{#2}\fi
        \mathpalette\mov@rlay{#2\cr#3}
      }
    \ifx#1\mathop\expandafter\displaylimits\fi}
\newcommand{\printfnsymbol}[1]{%
  \textsuperscript{\@fnsymbol{#1}}%
}
\title{Provable Privacy Attacks on Trained Shallow Neural Networks}
\author[1]{Guy Smorodinsky}
\author[2]{Gal Vardi}
\author[1]{Itay Safran}
\affil[1]{Ben-Gurion University, Israel}
\affil[2]{Weizmann Institute of Science, Israel}
\date{}
\begin{document}

\maketitle

\begin{abstract} 
    We study what provable privacy attacks can be shown on trained, 2-layer ReLU neural networks. We explore two types of attacks; data reconstruction attacks, and membership inference attacks. We prove that theoretical results on the implicit bias of 2-layer neural networks can be used to provably reconstruct a set of which at least a constant fraction are training points in a univariate setting, and can also be used to identify with high probability whether a given point was used in the training set in a high dimensional setting. To the best of our knowledge, our work is the first to show provable vulnerabilities in this implicit-bias-driven setting.
\end{abstract}

\section{Introduction}
Recent work \citep{haim2022reconstructing} showed that the \emph{implicit bias} of training algorithms can be exploited to reconstruct parts of a dataset used to train neural networks. Under certain conditions, trained neural networks exhibit properties related to this bias, enabling data extraction. Subsequent studies expanded these techniques to broader settings \citep{buzaglo2023reconstructing, buzaglo2023WeightDecay, andrew2023one, ye2023leave, boenisch2024have,oz2024reconstructing}, highlighting this vulnerability as a practical concern. However, despite their theoretical motivation, these works do not explain why reconstruction is possible, as networks satisfying these properties could have been trained on many datasets, some vastly different from the original.

In this paper, we take what is to the best of our knowledge the first step in developing a theoretical understanding of the privacy vulnerabilities induced by the above implicit bias, by showing that such attacks can be provably executed on trained neural networks under various assumptions. This indicates that such attacks are successful since \emph{all} neural networks satisfying these properties must store at least some information on the training data, which can be used by a malicious attacker. More specifically, we use known results on the implicit bias of ReLU neural networks, which establish that such networks tend to converge to a certain margin maximization solution \citep{KKT2019,KKT2020}. This characterization of the implicit bias of neural networks allows us to rigorously analyze certain cases in which the neural network memorizes the training data. In particular, this includes examples where an attacker is capable of reconstructing certain portions of the data in a univariate setting, or perform membership inference attacks with high success rates in a high dimensional setting, effectively distinguishing between instances that are in the training set and fresh instances that were generated by the same distribution that was used to generate the training set.

While our attacks are applicable under certain input dimensions, we also conduct experiments demonstrating that these vulnerabilities can pose a broader concern, even when our assumptions on the dimension of the input
are not met. Nevertheless, it is currently not clear what is the extent of the vulnerabilities that we reveal, and to what extent they can be circumvented. We leave the intriguing question of how to provably defend against such exploits to future work, and we hope that our work will pave the way for and motivate additional rigorous study of privacy attacks and defenses in trained neural networks.

The remainder of the paper is structured as follows: after specifying our contributions in more detail below, we turn to discuss related work. In Section~\ref{sec:background} we present our notations, some required background, and the main assumptions we make throughout the paper. In Section~\ref{sec:oneDim} we study data reconstruction in the univariate setting, and in Section~\ref{sec:highDim} we study membership inference attacks in high dimensions. Lastly, in Section~\ref{sec:experiments}, we conduct experiments to empirically support our findings, even in cases where our assumptions do not necessarily hold.

\subsection*{Our contributions}
Our main contribution is to provide rigorous guarantees in this implicit-bias-driven setting, as to the best of our knowledge, all previous work is empirical. In more detail, our contributions can be summarized as follows:
\begin{itemize}
    \item We prove that in the univariate case, under Assumption~\ref{asmp:KKT}, which states that the weights of a trained neural network reach a stationary point of a maximum-margin problem that can be expressed as a function of the training data, an attacker can reconstruct a portion of the training data with a constant probability, which is independent of the training set and the size of the network.
    We show how to extract that portion of the training data in Algorithm~\ref{algorithm:large_set}.

    \item We show that in the high dimensional case, under Assumption~\ref{asmp:highDimAssumptions}, i.e.\ that the vectors in the training data are nearly orthogonal w.h.p., an attacker can execute a membership inference attack with high success rates. We also prove that some commonly used continuous distributions satisfy Assumption~\ref{asmp:highDimAssumptions}, and we provide in \subsecref{subsec:highDimExamples} examples of different attacks that can be performed depending on the information available to the attacker.

    \item We empirically demonstrate that the membership inference attack we analyze in Section~\ref{sec:highDim} may still be executed successfully when we slightly relax Assumption~\ref{asmp:highDimAssumptions}. This suggests that the vulnerabilities we study in this paper are potentially even more widespread than what our theory establishes.
\end{itemize}

\subsection*{Related Work}

Privacy attacks in neural networks have been extensively studied in recent years. Since this paper focuses on two specific types of attacks, here we only review papers that also study these kinds of attacks, or those that closely relate to it.

\paragraph{Data reconstruction attacks.}

Data reconstruction attacks aim to fully recover the training set or parts of it. These include attacks on generative models such as large language models \citep{carlini2019secret,carlini2021extracting,nasr2023scalable}, diffusion models \citep{somepalli2022diffusion,carlini2023extracting}, and in federated learning settings \citep{zhu2019deep,he2019model,hitaj2017deep,geiping2020inverting,huang2021evaluating,wen2022fishing}. Perhaps the most relevant works on reconstruction attacks are \citet{haim2022reconstructing,buzaglo2023reconstructing,oz2024reconstructing}. Using a known result on the implicit bias of trained neural networks, they define and optimize over a loss function, which upon empirical minimization, allows for the recovery of some of the training set. Inspired by these works, we use the same constraints implied by the implicit bias to study this problem, but to rigorously \emph{prove} the existence of privacy vulnerabilities rather than empirically demonstrate them.



\paragraph{Membership inference attacks.}
The second type of attacks considered in this paper are membership inference attacks \citep{MembershipInference, 10.1145/3523273, olatunji2021membership, shejwalkar2021membership}, which determine whether a specific data point was part of the training set. These attacks exploit the tendency of machine learning models to make more confident predictions on training data than on new, test data. \citet{olatunji2021membership} applied this confidence-based technique to graph neural networks. \citet{jha2020extension,farokhi2020modelling} used information-theoretic tools to upper bound the success probability of membership inference attacks, whereas our work establishes provable lower bounds. \citet{attias2024information} also derive such attacks but focus on models with convex objective functions. Crucially, unlike our work, none of these studies examine the privacy implications of the \emph{implicit bias} of trained neural networks.

\paragraph{Differential privacy.}
Differential privacy \citep{dwork2006differential,DeepLearningWithDifferentialPrivacy,gong2020preserving,InvestigatingTrade-offsinUtilityFairnessandDifferentialPrivacyinNeuralNetworks} provides a strong theoretical framework for ensuring that sharing dataset information does not overly expose specific instances. However, we focus on a setting where the implicit bias of neural networks inherently involves some data leakage, making it non-differentially private. Our work examines the extent of information that can be extracted under these assumptions and is thus not directly comparable to studies on differential privacy.

\paragraph{Benign overfitting.}
A related phenomenon in deep learning theory that may contribute to privacy vulnerabilities is benign overfitting \citep{benign1, benign2, benign3}. Here, a neural network achieves perfect training accuracy while still generalizing well to unseen data, indicating that even high-performing models can memorize their training sets and become susceptible to privacy attacks. Although this offers a theoretical explanation, like our work, it does not directly provide or prove a method for extracting training set information.

\section{Background, preliminaries and notation} \label{sec:background}

In this section, we introduce the notations and settings used throughout this paper and discuss relevant background.

We consider a binary classification setting, where each data instance consists of a pair $(\bx,y) \in \reals^d \times \{-1,1\}$, and we define the training set as $\{(\bx_i, y_i)\}_{i=1}^n$ 
which consists of $n$ data points. We let $\Phi(\btheta; \cdot):\reals^d \rightarrow \reals$ denote a neural network, where $\btheta \in \reals^k$ are the parameters of the network represented as a vector.
Let $\ell:\reals \rightarrow \reals$ denote the exponential loss function $z\mapsto e^{-z}$ or the logistic loss function $z\mapsto \log(1+e^{-z})$, and let $L(\btheta)\coloneqq \frac{1}{n}\sum_{i=1}^n \ell(y_i \cdot \Phi(\btheta;\bx_i))$ be the empirical (training) loss.
A network $\Phi(\btheta; \bx)$ is called \emph{homogeneous} if there exists $c > 0$ such that for every $b > 0$, $\btheta$ and $\bx$, it holds that $\Phi(b \cdot \btheta; \bx) = b^c\Phi(\btheta; \bx)$.
We use the shorthand $\relu{x} \coloneqq \max(0,x)$ for the ReLU activation, and thus a homogeneous 2-layer ReLU network has the form
$\Phi(\btheta, \bx) = \sum_{j=1}^k v_j \relu{\bw_j^\top \bx + b_j}$ where $\btheta$ encapsulates the parameters $\{\bw_j, v_j, b_j\}_{j=1}^k$. We denote the ($d-1$)-dimensional unit sphere in $\reals^d$ by $\mathbb{S}^{d-1}\coloneqq\{\bx\in\reals^d:\norm{\bx}_2=1\}$ . We use standard asymptotic notation (e.g.\ $O,o,\Omega$, etc.). 

The following known result characterizes the implicit bias in homogeneous neural networks by showing that these networks converge to a critical point of a certain margin-maximization problem.

\begin{theorem}[paraphrased version of \citet{KKT2019}, \citet{KKT2020}]\label{thm:implicit_bias}
    Let $\Phi(\btheta; x)$ be a homogeneous ReLU neural network. 
    Consider minimizing the logistic ($z\mapsto \log(1+e^{-z})$) or the exponential ($z\mapsto e^{-z}$) loss using gradient flow (which is a continuous time analog of gradient descent) over a binary classification set $\{(x_i, y_i)\}_{i=1}^n \subseteq \reals^d \times \{-1 ,1\}$. Assume that there is a time $t_0$ where $L(\btheta(t_0)) < \frac{1}{n}$. Then, gradient flow converges in direction\footnote{We say that gradient flow \emph{converges in direction} to $\hat{\btheta}$ if $\lim_{t \rightarrow \infty}\frac{\btheta(t)}{\|\btheta(t)\|} = \frac{\hat{\btheta}}{\|\hat{\btheta}\|}$.} to a first order stationary point (KKT point) of the following maximum-margin problem:
    \begin{equation}\label{eq:maximal_margin}
      \min_{\btheta} \frac{1}{2} \|\btheta\|^2 ~~\text{s.t} ~~\forall i \in [n] ~~ y_i \Phi(\btheta;x_i) \geq 1.
    \end{equation}
\end{theorem}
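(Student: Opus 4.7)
The plan is to follow the \emph{smoothed normalized margin} approach of \citet{KKT2019} and \citet{KKT2020}. For a $c$-homogeneous network $\Phi$, I would define the unnormalized smoothed margin $\tilde{q}(\btheta) := -\log L(\btheta)$ and its normalized version $\tilde{\gamma}(\btheta) := \tilde{q}(\btheta)/\norm{\btheta}^c$. The hypothesis $L(\btheta(t_0)) < 1/n$ guarantees that $y_i \Phi(\btheta(t_0);\bx_i) > 0$ for every $i$ (otherwise any single misclassified sample would contribute at least $\ell(0) = 1$ or $\log 2$ to the sum, pushing $L$ above $1/n$), i.e.\ the data are separated at time $t_0$. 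Since $L$ is non-increasing along gradient flow, separation is preserved for all $t \geq t_0$, making $\tilde{q}$ and $\tilde{\gamma}$ well-defined and positive on $[t_0, \infty)$.

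Next, I would establish three dynamical properties of the gradient flow for $t \geq t_0$: (i) $L(\btheta(t)) \to 0$; (ii) $\norm{\btheta(t)} \to \infty$; and (iii) $\tilde{\gamma}(\btheta(t))$ is monotonically non-decreasing. Property (i) follows from the standard energy identity $\tfrac{d}{dt} L = -\norm{\nabla L}^2 \leq 0$ together with coercivity arguments ruling out stalling at a positive value. Property (ii) then follows because on any bounded set of $\btheta$, $L$ is bounded below by a positive constant. Property (iii) is the technical core: using Euler's identity for $c$-homogeneous functions, $\inner{\btheta, \partial_\btheta \Phi(\btheta;\bx)} = c\,\Phi(\btheta;\bx)$ interpreted via the Clarke subdifferential to accommodate ReLU's non-smoothness, together with a Cauchy--Schwarz bound relating $\inner{\btheta, -\nabla L}$ to $\norm{\nabla L}\cdot \norm{\btheta}$, I would derive
\[
\frac{d}{dt}\log \tilde{q}(\btheta(t)) \;\geq\; \frac{d}{dt}\log \norm{\btheta(t)}^c,
\]
from which monotonicity of $\tilde{\gamma}$ is immediate.

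The main obstacle, and by far the most delicate step, is translating these dynamics into the KKT characterization of the limit direction. Since $\tilde{\gamma}$ is non-decreasing and bounded above (by compactness of the unit sphere and separability of the data), it converges to some $\gamma^\star > 0$. Any subsequential limit $\bar{\btheta}$ of $\btheta(t)/\norm{\btheta(t)}$ is then, after rescaling to feasibility, a global maximizer of the margin in \eqref{eq:maximal_margin}. To obtain KKT stationarity one writes
\[
\dot{\btheta}(t) \;=\; -\nabla L(\btheta(t)) \;=\; -\frac{1}{n}\sum_{i=1}^n \ell'\p{y_i \Phi(\btheta(t);\bx_i)}\, y_i\, \partial_\btheta \Phi(\btheta(t);\bx_i),
\]
defines the non-negative dual candidates $\lambda_i(t) := -\tfrac{1}{n}\ell'\p{y_i \Phi(\btheta(t);\bx_i)}$, and shows that after suitable rescaling these concentrate asymptotically on the constraints that are tight at the limit (the ``support vectors''), so that $\bar{\btheta}$ admits a representation as a conic combination of (Clarke subgradients of) $y_i \Phi(\bar{\btheta};\bx_i)$ satisfying primal feasibility, dual feasibility, and complementary slackness. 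The hard part is controlling convergence of these multipliers and the limiting behavior of the Clarke subdifferential along a non-smooth trajectory, which is exactly where the bulk of the technical effort in \citet{KKT2019, KKT2020} sits; I would invoke their framework for the subdifferential calculus and limiting arguments rather than reconstruct it from scratch.
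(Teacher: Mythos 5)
The paper does not actually prove \thmref{thm:implicit_bias} --- it is imported verbatim (as a ``paraphrased version'') from \citet{KKT2019} and \citet{KKT2020}, so there is no in-paper proof to compare against. Your sketch correctly reproduces the high-level architecture of the Lyu--Li argument: define the smoothed normalized margin $\tilde{\gamma} = -\log L(\btheta)/\norm{\btheta}^c$, show separation persists and $\norm{\btheta(t)}\to\infty$, prove $\tilde{\gamma}$ is non-decreasing via Euler's identity for homogeneous functions plus Cauchy--Schwarz, and then extract KKT multipliers from the decomposition of $\dot{\btheta}$ as a conic combination of per-sample subgradients. You are also right that the genuinely hard technical content --- Clarke subdifferential calculus along a non-smooth trajectory and showing the multipliers asymptotically concentrate on the tight constraints --- is where the bulk of those papers' effort sits, and deferring to their framework there is the appropriate choice for a known result.

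One small inaccuracy worth flagging: your claim that $L(\btheta(t_0)) < 1/n$ forces separation because a misclassified point contributes ``at least $\ell(0)=1$ or $\log 2$'' only goes through cleanly for the exponential loss, where $\ell(0)=1$. For the logistic loss $\ell(0)=\log 2 < 1$, so a misclassified sample contributes only $\geq \log(2)/n$ to the average, which does not contradict $L < 1/n$. The precise hypothesis in \citet{KKT2019} is conditioned on the threshold $\ell(0)/n$ (equivalently, that every sample is correctly classified at time $t_0$), so either that sharper hypothesis should be invoked, or one should note that the paraphrased threshold $1/n$ in the theorem statement is calibrated to the exponential loss and is slightly loose for the logistic case.
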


In light of the above result, it is natural to study the privacy implications under the assumption that our network has converged in direction to a KKT point of the above maximum-margin problem.\footnote{Formally, this means that we assume that the parameter vector $\btheta$ of the network satisfies $\btheta=\gamma\hat{\btheta}$, where $\hat{\btheta}$ is a KKT point and $\gamma>0$ is some scalar.} This implies a series of constraints on its weights, that are captured in the following assumption we make throughout the paper:

\begin{assumption} \label{asmp:KKT}
    Let $\Phi(\btheta; \bx)$ be a 2-layer neural network, and let $m\coloneqq \min_i \abs{\Phi(\btheta;\bx_i)} > 0$.
    We are given access to $\Phi(\btheta,\cdot)$, and we have full knowledge of the vector $\btheta$.\footnote{Many of our results or similar ones can be proven even with only partial access to the network's weights, however for the sake of simplicity we assume full knowledge of the weights.} Moreover, we have that $\btheta$ satisfies the following KKT conditions of \eqref{eq:maximal_margin}:
    \begin{align}
        &\btheta = \sum_{i=1}^n\lambda_iy_i\nabla_{\btheta}\Phi(\btheta;\bx_i),\label{eq:theta}\\
        &\forall i\in[n],~~~y_i\Phi(\btheta;\bx_i)\ge m>0,\label{eq:correct}\\
        &\lambda_1,\dots,\lambda_n \geq 0, \\
        &\forall i\in[n],~~\text{ if }~y_i\Phi(\btheta;\bx_i)\neq m ~ \text{ then } \lambda_i=0. \label{eq:zero_lam}
    \end{align}
\end{assumption}

Since modern neural networks typically possess the power to perfectly interpolate the data or even random noise \citep{zhang2016understanding}, it is reasonable to assume that the trained network perfectly classifies the entire dataset, and that the requirements of \thmref{thm:implicit_bias} are thus satisfied. While the network might fail to achieve this in practice, we note that understanding potential privacy vulnerabilities is relevant mainly in cases where optimization has succeeded and obtained a useful network that exhibits good performance. Hence, we believe that this assumption on the success of the training process is natural when studying privacy. Moreover, in the literature on implicit bias, it is common to make a similar assumption, and to explore properties of the trained network in this case \citep{vardi2023implicit}.

We refer to the parameter $m$ as \emph{the margin's value}, and we say that a set of points $A\subseteq\reals^d$ \emph{lies on the margin} if $\Phi(\btheta;\bx)$ equals the margin's value for all $\bx\in A$. We stress that, in general, the attacker does not have knowledge of the value of $m$. Nevertheless, it is still possible that the attacker might be able to either deduce this value or obtain it in some way, and even if they cannot, this merely results in a single additional hyperparameter that the attacker must accommodate for, which indicates that our proposed attacks can reveal unwanted information. Throughout this paper, we present several results which vary based on the information that we have on $m$.

\section{One dimensional input} \label{sec:oneDim}
In this section, we consider univariate neural networks with ReLU activations. Such a network takes the form
\begin{equation}\label{eq:univariate}
    x\mapsto\sum_{j=1}^kv_j\relu{w_jx+b_j},
\end{equation}
where $x \in \reals$. Note that this computes a piecewise linear function (in $x$), and its breakpoints (that is, points where the function changes its linearity) are $\{-\frac{b_j}{w_j}\}_{j=1}^k$. Assume w.l.o.g.\ 
$-\frac{b_1}{w_1} < \ldots < -\frac{b_k}{w_k}$.
Throughout this section, we assume that the attacker has knowledge of the value of the margin.

\subsection{Warming up -- the case $n=k=1$}
It is easy to show that for the simple case of $n=k=1$ there is a single possible solution, and thus the attacker can always recover the dataset:

\begin{theorem}
    Suppose that $\Phi(\btheta;\cdot)$ is a univariate neural network as in \eqref{eq:univariate}, and that Assumption \ref{asmp:KKT} holds. Moreover, suppose that $n=k=1$. Then, there exists a single solution $x$. Moreover, it can be easily recovered if $m$ is known.
\end{theorem}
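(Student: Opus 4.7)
The plan is to exploit the KKT conditions (\ref{eq:theta})--(\ref{eq:zero_lam}) componentwise on the three-parameter vector $\btheta=(v_1,w_1,b_1)$ to solve for the unique training point $(x_1,y_1)$ directly in terms of the known weights. First I would observe that $y_1\Phi(\btheta;x_1)\ge m>0$ forces both $v_1\ne 0$ and $w_1 x_1+b_1>0$, so the ReLU is active at $x_1$ and $\Phi$ is smooth there. This lets me compute $\nabla_\btheta\Phi(\btheta;x_1)=(w_1 x_1+b_1,\,v_1 x_1,\,v_1)$ without subgradient ambiguity.

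Next I would plug this gradient into (\ref{eq:theta}) to obtain the three scalar identities
\begin{align*}
v_1&=\lambda_1 y_1(w_1 x_1+b_1),\\
w_1&=\lambda_1 y_1 v_1 x_1,\\
b_1&=\lambda_1 y_1 v_1.
\end{align*}
Since $v_1\ne 0$, the first identity forces $\lambda_1\ne 0$, hence $\lambda_1>0$, and then the complementary slackness condition (\ref{eq:zero_lam}) activates the margin constraint: $y_1 v_1(w_1 x_1+b_1)=m$.

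Finally, I would read off the unknowns from this system. The third identity gives $b_1=\lambda_1 y_1 v_1\ne 0$, so since $y_1\in\{-1,1\}$ and $\lambda_1>0$, one has $y_1=\sign(b_1 v_1)$, uniquely determined by $\btheta$. Dividing the second identity by the third immediately yields $x_1=w_1/b_1$, giving the single candidate $(x_1,y_1)$ that the attacker outputs. Alternatively, once $y_1$ is fixed, knowledge of $m$ lets the attacker recover $x_1$ by solving the margin equation $w_1 x_1+b_1=m/(y_1 v_1)$ directly; the two routes give the same $x_1$, and $m$ can also serve as a consistency check since the system implies $m=|v_1|^3/|b_1|$.

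There is no serious obstacle here; the entire argument is a small piece of algebra. The only step that deserves care is justifying that every denominator appearing above is nonzero, which is exactly the chain $m>0\Rightarrow v_1\ne 0\Rightarrow \lambda_1>0\Rightarrow b_1\ne 0$ established at the outset.
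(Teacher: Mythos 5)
Your proof is correct and takes a genuinely different route from the paper's. The paper first rules out $\lambda_1=0$ (otherwise $\btheta=0$, contradicting the margin condition), concludes from complementary slackness that $y_1\Phi(\btheta;x_1)=m$, and then argues \emph{geometrically}: with $k=1$ the function $x\mapsto v_1\relu{w_1x+b_1}$ is zero where the neuron is inactive and has nonzero slope where it is active, so the level set $\{\pm m\}$ is a single point, which must be $x_1$. You instead read the three scalar components of the stationarity condition \eqref{eq:theta} directly, which gives the explicit formulas $x_1=w_1/b_1$ and $y_1=\sign(b_1v_1)$ after establishing the chain $m>0\Rightarrow v_1\neq0\Rightarrow\lambda_1>0\Rightarrow b_1\neq0$. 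The algebraic route buys two things: it recovers $x_1$ from $\btheta$ alone without ever invoking knowledge of $m$ (with $m=|v_1|^3/|b_1|$ available as a derived consistency check), and it handles the degenerate case $w_1=0$ cleanly (the stationarity condition on $w_1$ forces $x_1=0$) where the paper's appeal to a ``non-zero slope'' does not literally apply. The paper's geometric route, on the other hand, is closer in spirit to what generalizes to larger $k$ in Section~\ref{sec:oneDimGeneralCase}, where one tracks margin crossings rather than solving the full stationarity system.
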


\begin{proof}
Assume by contradiction that $y_1\Phi(\btheta;x_1)\neq m$. Then by \eqref{eq:zero_lam}, we have that \eqref{eq:theta} must equal zero, implying that $\Phi$ is the zero function, which contradicts \eqref{eq:correct}. We thus deduce that $y_1\Phi(\btheta;x_1) = m$. Since $y_1 \in \{\pm 1\}$, we must have $\Phi(\btheta; x_1) \in \{\pm m\}$. By our assumption that $k=1$, $\Phi$ takes the form $\Phi(\btheta;x)=v_1\relu{w_1 x+b_1}$. This function equals zero whenever the ReLU neuron is inactive, and is necessarily not zero (since it would contradict \eqref{eq:correct}) whenever the neuron is active, thus it has a non-zero slope and equals either $-m$ or $m$ at a unique point which is necessarily $x_1$.
\end{proof}

Although the above example is highly degenerate, it nevertheless highlights the danger and exemplifies the impact this theoretical tool may have in practice, and further motivates us to explore whether such vulnerabilities exist in more general settings.

\subsection{The general univariate case} \label{sec:oneDimGeneralCase}
As we will see in this subsection, fully recovering the dataset in the general univariate case is significantly more complex than in the previous case, if possible at all. However, we show that under our assumptions, some information about the training set can still be extracted.

Our previous analysis relied on the KKT conditions, which suggested that points on the margin are potential training points. However, it is unclear whether this holds in general or what fraction of such points actually belong to the training set. In the univariate case, the neural network may either cross the margin with a nonzero slope or remain flat along an interval at the margin. In the former case, at most two candidate points arise per linear interval where the network crosses the margin. In the latter, there is a continuum of candidates. However, a careful analysis shows that in both cases, there is a finite set of candidates must contain a training point.

The following theorems address these cases and establish the existence of a discrete set guaranteed to contain a training point.
All proofs can be found in Appendix~\ref{sec:oneDimProofs}.

\begin{theorem}\label{thm:non_constant_intervals}
    Let $\Phi(\btheta; x)$ be a 2-layer  univariate network satisfying Assumption \ref{asmp:KKT}. Let $[-\frac{b_{i-1}}{w_{i-1}}, -\frac{b_i}{w_i}]$ and $[-\frac{b_i}{w_i}, -\frac{b_{i+1}}{w_{i+1}}]$ be two adjacent intervals which none of them is constant on the margin. Then, there must be a training point in the interval $[-\frac{b_{i-1}}{w_{i-1}}, -\frac{b_{i+1}}{w_{i+1}}]$, and this training point must lie on the margin. In addition, the number of points lying on the margin in this interval is at most $4$.
\end{theorem}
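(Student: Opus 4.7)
The plan is to handle the two claims separately. For the upper bound of $4$, observe that $\Phi$ is linear on each of the sub-intervals $I_1 := [-b_{i-1}/w_{i-1}, -b_i/w_i]$ and $I_2 := [-b_i/w_i, -b_{i+1}/w_{i+1}]$. On a sub-interval where the slope is nonzero, each of the equations $\Phi(x)=m$ and $\Phi(x)=-m$ has at most one solution, giving at most $2$ margin points; on a sub-interval where the slope is zero, the hypothesis of not being constant on the margin forces the value to differ from $\pm m$, yielding $0$ margin points. Summing over $I_1$ and $I_2$ gives at most $4$ margin points in $J := [-b_{i-1}/w_{i-1}, -b_{i+1}/w_{i+1}]$.

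For the existence part, I would argue by contradiction: suppose no training point in $J$ has $\lambda_l>0$. Reading off the stationarity equation \eqref{eq:theta} coordinate-wise for the parameters $w_j$ and $b_j$ of a hidden neuron gives
\[
w_j \;=\; v_j\!\!\sum_{l\,:\, w_j x_l + b_j>0}\!\!\lambda_l y_l x_l, \qquad b_j \;=\; v_j\!\!\sum_{l\,:\, w_j x_l + b_j>0}\!\!\lambda_l y_l.
\]
Only $\lambda_l>0$ training points contribute to these sums, and by hypothesis all such points lie outside $J$. Consequently, for each $j\in\{i-1,i,i+1\}$ the effective activating set depends only on $\sign(w_j)$: it equals $\{l : x_l>-b_{i+1}/w_{i+1},\,\lambda_l>0\}$ when $w_j>0$, and $\{l : x_l<-b_{i-1}/w_{i-1},\,\lambda_l>0\}$ when $w_j<0$.

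The crux is now the following: whenever two indices $j\neq j'$ in $\{i-1,i,i+1\}$ satisfy $\sign(w_j)=\sign(w_{j'})$, the corresponding KKT sums have identical values, so dividing the two equations yields $-b_j/w_j = -b_{j'}/w_{j'}$, contradicting the strict ordering of breakpoints. Since $\sign(w_{i-1}),\sign(w_i),\sign(w_{i+1})\in\{+1,-1\}$, the pigeonhole principle forces at least two of the three signs to coincide, so such a pair must exist. This compels some $x_l\in J$ to have $\lambda_l>0$, and by complementary slackness \eqref{eq:zero_lam} this $x_l$ lies on the margin.

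The main obstacle I anticipate is the bookkeeping at the boundaries of $J$---handling strict versus non-strict activation inequalities when a training point lands exactly on a breakpoint---and dispatching degenerate neurons with $v_j w_j = 0$ (which contribute nothing to $\Phi$ and may be dropped without changing its piecewise-linear structure). These details should be routine once the pigeonhole-plus-KKT skeleton above is in place; the conceptual content of the proof is entirely captured by the observation that same-sign $w$'s at consecutive breakpoints with no $\lambda>0$ points between them force the breakpoints to collide.
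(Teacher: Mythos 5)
Your proposal is correct and follows essentially the same approach as the paper: pigeonhole on $\sign(w_j)$ among the three breakpoint neurons, then the KKT stationarity equations \eqref{eq:w_j} and \eqref{eq:b_j} show two same-sign neurons that see the same set of $\lambda_l>0$ training points must share a breakpoint, contradicting the strict ordering; and the bound of $4$ comes from each linear piece meeting $\{\pm m\}$ at most twice. The only place where you are slightly more explicit than the paper is in stating the contradiction hypothesis as ``no $\lambda_l>0$ training point in $J$'' rather than ``no training point in $J$,'' which is the right formulation and makes the complementary-slackness step at the end cleaner.
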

The proof of the above theorem relies on the observation that for any three breaking points, two of them must belong to neurons with the same sign of the parameter $w$. If these two neurons are active on the same set of training points, then by Assumption~\ref{asmp:KKT}, they merge into a single neuron, therefore there must exist some training point between them. Moreover, this training point must lie on the margin. Since each interval crosses the margin at most twice, the number of possible points lying on the margin is at most four.

Having presented our theorem for the case where the neural network is not constant on the margin, we now present our theorem for the complementary case where it is constant.
 
\begin{theorem}\label{thm:local_minimum}
    Let $\Phi(\btheta; x)$ be a 2-layer univariate network satisfying Assumption \ref{asmp:KKT}. In addition, assume the following:
    \begin{itemize}
        \item There is a neuron $c_1$ that is active on all the points in the training set.
        \item $\Phi(\btheta; x)$ is a local optimum of \eqref{eq:maximal_margin}.
        \item $\Phi(\btheta; x)$ alternatingly lies on the margin on three adjacent intervals, i.e.\ it is constant on $[-\frac{b_{i-2}}{w_{i-2}},-\frac{b_{i-1}}{w_{i-1}}]$ and on $[-\frac{b_i}{w_i},-\frac{b_{i+1}}{w_{i+1}}]$ (but not in between) and lies on the margin, for some $i$.
    \end{itemize}
    Then, either $-\frac{b_{i-1}}{w_{i-1}}$ or $-\frac{b_i}{w_i}$ is a training point.
\end{theorem}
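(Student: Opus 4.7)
The plan is to argue by contradiction. Suppose that neither $\alpha := -b_{i-1}/w_{i-1}$ nor $\beta := -b_i/w_i$ is a training point. Since $\Phi$ is linear on the middle interval $[\alpha,\beta]$ with boundary values $\pm m$ (the ``alternating'' hypothesis forces these to be opposite margin values), every interior point $x \in (\alpha,\beta)$ satisfies $|\Phi(x)| < m$ and hence cannot satisfy the margin condition \eqref{eq:correct}. Combined with our contradiction hypothesis, no training point lies anywhere in $[\alpha,\beta]$.

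Next I would perform a case analysis on the signs of $w_{i-1}$ and $w_i$. If they share the same sign (WLOG both positive), neurons $i-1$ and $i$ are active on $(\alpha,\infty)$ and $(\beta,\infty)$, respectively. Since $[\alpha,\beta]$ contains no support vectors, both neurons have the same set of active support vectors $S=\{j:\lambda_j>0,\,x_j>\beta\}$. Applying \eqref{eq:theta} to the pairs $(b_{i-1},w_{i-1})$ and $(b_i,w_i)$ then gives
\[
-\alpha=\frac{b_{i-1}}{w_{i-1}}=\frac{\sum_{j\in S}\lambda_j y_j}{\sum_{j\in S}\lambda_j y_j x_j}=\frac{b_i}{w_i}=-\beta,
\]
contradicting $\alpha<\beta$; the subcase where both are negative is entirely symmetric.

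If $w_{i-1}$ and $w_i$ have opposite signs (WLOG $w_{i-1}>0$, $w_i<0$), the active support-vector sets of the two neurons are disjoint and the direct KKT argument fails. Here I would invoke local optimality. A short computation using the KKT identities together with an AM-GM argument shows that the contribution of neurons $i-1$ and $i$ to $\|\btheta\|^2$ equals $\frac{4m\,(\sqrt{1+\alpha^2}+\sqrt{1+\beta^2})}{\beta-\alpha}$. Perturbing only the biases $b_{i-1}$ and $b_i$ (thereby moving $\alpha$ and $\beta$) and using the neuron $c_1$, which by hypothesis is active on every training point, as a compensating degree of freedom to preserve $\Phi$ at every training point outside $[\alpha,\beta]$, the first-order optimality conditions $\partial_\alpha\|\btheta\|^2=\partial_\beta\|\btheta\|^2=0$ can be simplified (by squaring-and-adding the two resulting equations) to the identity
\[
\beta-\alpha=\sqrt{1+\alpha^2}+\sqrt{1+\beta^2}.
\]
This is impossible: since $\sqrt{1+\alpha^2}>|\alpha|$ and $\sqrt{1+\beta^2}>|\beta|$, the right-hand side strictly exceeds $|\alpha|+|\beta|\ge\beta-\alpha$, yielding the desired contradiction.

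The hard part will be the opposite-signs case: constructing the compensating perturbation through $c_1$ explicitly and verifying that the modified parameters preserve every margin constraint on training points to the left of $\alpha$ and to the right of $\beta$, so that the first-order analysis is legitimate and the forbidden identity above really must hold at a local minimum.
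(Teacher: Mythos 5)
Your high-level strategy matches the paper's: argue by contradiction, observe that the middle interval $(\alpha,\beta)$ can contain no training points (since $\Phi$ is linear there and strictly between $-m$ and $m$ in the interior), rule out the case $\sign(w_{i-1})=\sign(w_i)$ via the KKT merging identity, and then in the opposite-signs case use local optimality of \eqref{eq:maximal_margin} via a norm-decreasing feasible perturbation. Your same-sign elimination is correct and is essentially a re-derivation of Lemma~\ref{lma:training_points_in_interval}, which the paper simply cites at this point. So the outline is sound and aligned with the paper.

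The gap — which you yourself flag — is precisely where the paper does all its real work. You assert, without derivation, that the contribution of neurons $i-1,i$ to $\|\btheta\|^2$ equals $\tfrac{4m\left(\sqrt{1+\alpha^2}+\sqrt{1+\beta^2}\right)}{\beta-\alpha}$, that first-order stationarity in $(\alpha,\beta)$ (after compensating with $c_1$) ``simplifies by squaring-and-adding'' to $\beta-\alpha=\sqrt{1+\alpha^2}+\sqrt{1+\beta^2}$, and that the compensating perturbation via $c_1$ preserves all margin constraints. None of these is checked, and they are exactly the nontrivial claims. The paper's proof does not shortcut to a clean closed-form identity of this kind: it instead builds an explicit perturbed network $\Phi(\btheta_\delta;\cdot)$ that moves one breakpoint, rescales the two neurons $i-1,i$ to preserve the values on training points, and shifts $w_1,b_1$ of the always-active neuron $c_1$ to absorb the resulting global linear change; it then expands $\|\btheta_\delta\|^2$ to first order in $\delta$. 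Crucially, this produces a \emph{three-way} case analysis on the sign of a certain coefficient, including a degenerate third case in which the natural first-order term vanishes identically and a different perturbation (moving the \emph{other} breakpoint) must be used, with its own nondegeneracy argument reducing to showing $-(b_{i-1}+b_i)^2<0$ unless the breakpoints coincide. Your sketch gives no indication of how to handle this degenerate branch, and a pure ``set the derivative to zero'' argument will not automatically see it. Until you actually construct the compensating perturbation, verify feasibility for training points on both sides of $[\alpha,\beta]$, carry out the expansion, and deal with the vanishing-first-order case, the opposite-signs branch remains an unproved claim rather than a proof.
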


If by contradiction neither $-\frac{b_{i-1}}{w_{i-1}}$ nor $-\frac{b_i}{w_i}$ is a training point, then we can construct a modified network with a slightly different breaking point $-\frac{b_i}{w_i} + \epsilon$ for any $\epsilon>0$. We show that this new network has strictly smaller norm, yet it is still a feasible solution for \eqref{eq:maximal_margin} - A contradiction to $\Phi(\btheta, \cdot)$ having minimal norm.

We note that in terms of the structure of the function $\Phi(\btheta;\cdot)$, the above case analysis is exhaustive (excluding degenerate cases such as a network $\Phi(\btheta;\cdot)$ which consists of at most two different intervals, on which it is linear). This holds true since if the conditions in \thmref{thm:local_minimum} do not hold, then this implies that $\Phi(\btheta;\cdot)$ does not lie on the margin in two adjacent intervals, hence the conditions for \thmref{thm:non_constant_intervals} must hold. We also remark that we have assumed that there is a neuron which is active on all the training data points, which typically makes sense in settings where the network is highly over-parameterized for example, but even if this assumption does not hold, then we can enforce it by modifying our architecture to have a linear neuron with no activation function in the first hidden layer.

The next result demonstrates how our previous two theorems can be combined to construct a set of which at least a quarter of the instances are training points.

\begin{theorem}\label{thm:main_theorem_with_constant_intervals}
    Let $\Phi:\mathbb{R} \rightarrow \mathbb{R}$ be a 2-layer homogeneous network satisfying Assumption \ref{asmp:KKT}. In addition, assume the following:
    \begin{itemize}
         \item There is a neuron $c_1$ that is active on all the points in the training set.
        \item $\Phi(\btheta; x)$ is a local optimum of \eqref{eq:maximal_margin}.
    \end{itemize}
    Then, the following algorithm builds a finite set of which a constant fraction $p \geq \frac{1}{4}$ of the points are training points.
    \begin{algorithm}[H]
        \caption{Build a finite set of candidates}\label{algorithm:large_set}
        \begin{algorithmic}[1]
            \STATE $S \gets \emptyset$
            \FOR{$i$ = 1 to $n - 2$}
                \STATE $x \gets -\frac{b_i}{w_i}$
                \STATE $y \gets -\frac{b_{i+1}}{w_{i+1}}$
                \STATE $z \gets -\frac{b_{i+2}}{w_{i+2}}$
                \IF{both $[x,y]$ and $[y,z]$ do not lie on the margin}
                    \STATE $S \gets S \cup \{p : p \in [x,y] \cap p \text{ is on the margin} \} \cup \{p : p \in [y,z] \cap p \text{ is on the margin} \}$
                \ENDIF
                \IF{$[x,y]$ lies on the margin \AND $i < n-2$}
                    \STATE $t \gets -\frac{b_{i+3}}{w_{i+3}}$
                    \IF {$[z,t]$ lies on the margin}
                        \STATE $S \gets S \cup \{ y \} \cup \{ z \}$
                    \ENDIF
                \ENDIF
            \ENDFOR
        \end{algorithmic}
    \end{algorithm}
\end{theorem}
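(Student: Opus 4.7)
The plan is to analyze each iteration of \algref{algorithm:large_set} separately and then lift the per-iteration guarantees into a global bound on the fraction of training points in $S$. In iteration $i$, the algorithm inspects three consecutive breakpoints $x<y<z$ and checks two conditional branches. The first branch (neither $[x,y]$ nor $[y,z]$ lies on the margin) is exactly the hypothesis of \thmref{thm:non_constant_intervals}, which supplies a training point on the margin in $[x,z]$ and caps the total number of on-margin points in $[x,z]$ at four; thus this iteration adds at most $4$ candidates, of which at least $1$ is a training point. The second branch ($[x,y]$ and $[z,t]$ both lie on the margin) is exactly the hypothesis of \thmref{thm:local_minimum} with constant-on-margin intervals $[x,y]$, $[z,t]$ and intermediate non-constant $[y,z]$, so either $y$ or $z$ is a training point; here exactly $2$ candidates are added with at least $1$ a training point. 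I would additionally verify that the two branches are jointly exhaustive for non-degenerate $\Phi(\btheta;\cdot)$ (as noted in the remark following \thmref{thm:local_minimum}), so iterations that trigger neither branch add no candidates and cannot dilute the ratio.

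Next I would lift the per-iteration fraction $\geq 1/4$ to the global fraction $\geq 1/4$ for $S$. Because $S$ is a set, on-margin points in $[y,z]$ are candidates both in iteration $i$ and in iteration $i+1$, so naive summation across iterations does not directly suffice. My approach is to repartition $S$ according to the interval between consecutive breakpoints in which each candidate lies: each non-constant interval contributes at most two on-margin points, so $|S|$ is bounded by twice the number of witnessed non-constant intervals plus a boundary contribution from the constant-on-margin case. On the other hand, each consecutive pair of non-constant intervals is certified by \thmref{thm:non_constant_intervals} to contain a training point, and a single training point can certify at most a constant number (at most two) of such consecutive pairs, since it lies in a single interval. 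Combining these yields the desired $\geq 1/4$ bound; the constant-on-margin contribution is absorbed with room to spare because its per-iteration ratio is already $1/2$.

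The main obstacle is this lifting step: showing that the per-iteration $1/4$ guarantee survives the deduplication implicit in forming the set $S$. The worrying scenario is a configuration in which one training point certifies many iterations while the associated non-training candidates are all distinct, inflating the denominator relative to the numerator. Ruling this out requires a careful accounting of how many iterations a single on-margin point (training or non-training) participates in, and balancing this against the per-interval cap of two on-margin points. A secondary subtlety is the interface where a constant-on-margin region meets a non-constant one: at such a boundary one must ensure that training points are tallied without double-counting across the two conditional branches, and that the hypothesis of \thmref{thm:local_minimum} that the ``middle'' interval $[y,z]$ is non-constant is indeed implied by the structure checked in the algorithm's second branch.
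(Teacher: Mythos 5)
Your per-iteration analysis matches the paper's: in the first branch you invoke \thmref{thm:non_constant_intervals} (at most $4$ on-margin points, at least one a training point), and in the second branch you invoke \thmref{thm:local_minimum} (two candidates, at least one a training point). You then correctly flag something the paper's own proof glosses over: the paper simply asserts ``so the ratio of training points in $S$ is at least $\frac14$,'' but successive iterations share the middle interval $[y,z]$, so the sets added on consecutive iterations overlap and the per-iteration fraction does not automatically transfer to the union $S$. Identifying this as the real content of the theorem is to your credit.

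However, the accounting you sketch does not actually close the gap. With $m$ consecutive non-constant intervals you bound $|S|\le 2m$ and bound the number of distinct on-margin training points by $r\ge \lceil (m-1)/2\rceil$ (each consecutive pair is certified and one training point covers at most two pairs). This gives $r/|S|\ge (m-1)/(4m)$, which is strictly \emph{less} than $1/4$ for every $m$; it tends to $1/4$ but never reaches it, so ``combining these yields the desired $\geq 1/4$'' is not yet justified, and the ``room to spare'' claim is exactly the point that needs to be proved. What rescues the bound is a strictly tighter structural relation between $m$ and $r$ coming from the KKT system rather than from \thmref{thm:non_constant_intervals} alone: by \lemref{lma:training_points_in_interval} every breakpoint is determined by the set of on-margin training points the corresponding neuron is active on, so there are exactly two breakpoints strictly between consecutive on-margin training points, exactly one to the left of the smallest, and exactly one to the right of the largest; hence $m\le 2r-1$, and then $r/|S|\ge r/(4r-2)>1/4$. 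You would need to make this refinement (rather than the looser $m\le 2r+1$ implicit in ``one training point covers at most two pairs'') and also handle the interleaving of constant-on-margin runs, for which \thmref{thm:local_minimum} gives a $1/2$ fraction per such run, before the global $\ge 1/4$ conclusion is actually established.
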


The above algorithm essentially iterates over the linear intervals of the network, and uses either \thmref{thm:non_constant_intervals} or \thmref{thm:local_minimum} based on the structure of $\Phi(\btheta;\cdot)$ to add a constant number of candidate points, until the final set of points is constructed. We point out that we have assumed that $\btheta$ is a local optimum of \eqref{eq:maximal_margin} rather than just a critical point. It is known that in general, not all critical points of \eqref{eq:maximal_margin} are also local optima, and that gradient flow may converge to a critical point which is not a local optimum (see \citet[Example~1]{safran2022effective}), but it is not clear what is the `typical' behavior of gradient flow in this context. We also remark that despite our requirement to have full knowledge of $\btheta$, the above results can also be implemented with partial knowledge of $\btheta$.\footnote{For example, if we have access to $\Phi(\btheta;\cdot)$ and only the breakpoints where the network changes its linearity are known, we can still interpolate and compute the points which cross the margin.} In any case, we leave the exploration of other privacy related questions on relaxations of our assumptions for future work.

\section{High dimensional input}\label{sec:highDim}

Having discussed the one-dimensional setting, we now investigate the case $\bx \in \mathbb{R}^d$ where $d$ is large. In this case, it is not obvious how to reconstruct the training data using an approach which is similar to the previous section: even if one can identify a $(d-1)$-dimensional manifold (which corresponds to domain points that lie on the margin) in which the data is contained, there is still a continuum of potential candidates. For this reason, we instead investigate a different variant of privacy vulnerability, called \emph{membership inference} queries: Given a point $\bx \in \mathbb{R}^d$ which is either a random point from the training set or a freshly sampled test point, sampled from the same distribution used to generate the training set -- can the attacker tell how $\bx$ was generated with high probability?

In high dimensional settings, under many commonly used data distributions, we have that the dataset is nearly orthogonal with high probability. We exploit this property to show that also with high probability over drawing the training set, all the points in the training set will lie on the margin. On the other hand, if we draw a new data point from the same distribution, the neural network will output a target value which is typically much smaller than the margin. These key observations will allow us to make the distinction between training points and test points, effectively answering membership inference queries.

\begin{remark}[Black box attacks]
    We note that since our results in this section are only based on querying the value of $\Phi(\btheta;\cdot)$, the attacker need not know $\btheta$ to successfully execute the membership inference attack, and therefore the attack can also be applied in the black box model.
\end{remark}

We now formally state our assumptions on the underlying distribution $\mathcal{D}$ which generates the dataset:
\begin{assumption}\label{asmp:highDimAssumptions}
    The following holds for some $\tau > 0$.
    \begin{enumerate}
        \item For $\bx_1,\bx_2 \sim \mathcal{D}, ~~ \Pr[n \cdot |\bx_1^\top \bx_2| \leq o(d)] \geq 1 - \frac{\tau}{n^2}$. \label{highDimPropOne}
        \item For $\bx \sim \mathcal{D}, ~~ p\left[\|\bx\|^2 \geq \Omega(d) \right] \geq 1-\frac{\tau}{n}$. \label{highDimPropTwo}
    \end{enumerate}
    where $n$ is the size of the training set.
\end{assumption}
Note that we do not make any assumptions on the labels of the data, and therefore our results hold for all possible labeling on the data. We also point out that even though this assumption may seem somewhat restrictive at a first glance, it can be expected to hold for continuous distributions in sufficiently large dimensions, and when the sample size is modest. We also prove that our assumption is satisfied by several rather standard data distributions. This includes (but is not limited to) the following concrete examples:
\begin{enumerate}
    \item The uniform distribution over the sphere $\sqrt{d} \cdot \mathbb{S}^{d-1}$, where $n = o\left(\frac{\sqrt{d}}{\log{d}} \right)$ and $\tau = o_d(1)$.
    
    \item The normal distribution $\mathcal{N}(\bmu, I)$ with mean $\bmu$, where $\|\bmu\|^2 = o(d)$, and where $n = \frac{o(d)}{\|\bmu\|^2 + d^\epsilon}$ for some $\frac{1}{2}~<~\epsilon~<~1$ and $\tau = o_d(1)$.
    
    \item Mixture of $k$ Gaussians with means $\bmu^{(1)},\dots,\bmu^{(k)}$, where $\|\bmu^{(1)}\|^2,\dots,\|\bmu^{(k)}\|^2 = o(d)$, identity covariance matrices, $n = \frac{o(d)}{\max \left\{\|\bmu^{(i)}\|^2\right \}_{i=1}^k + d^\epsilon}$ for some $\frac{1}{2}<\epsilon<1$, and $\tau=o_d(1)$.\label{asmp:mixtureOfGaussians}
\end{enumerate}
The first two examples are rather standard in the literature, whereas the last example is somewhat more complex, but is meant to exemplify a setting where our proposed attacks can be executed in the statistically learnable case.
For a more formal discussion about the statistically learnable case, we refer the reader to Appendix~\ref{section:learnableCase}.
For proofs that these distributions satisfy Assumption~\ref{asmp:highDimAssumptions}, we refer the reader to Appendix~\ref{section:proofsOfDistributions}.

Before we continue, we will introduce some further notation to be used throughout this section. Recall that $m>0$ denotes the value of the network's margin, and define $\delta \coloneqq \max_{i \neq j} \left\{|\bx_i^\top \bx_j| \right\}$ and $\Delta \coloneqq \min_{i \in [n]} \left\{\|x_i\|^2 \right\}$. Note that by Assumption~\ref{asmp:highDimAssumptions} and by the union bound, we have that $n \cdot \delta = o(\Delta)$ with probability at least $1-2\tau$.

Given a point $\bx \in \mathbb{R}^d$, we would like to know whether $\bx$ was in the training set, or if it was generated from the same distribution that generated the training set. As previously discussed, our strategy is to calculate the value of $|\Phi(\btheta;\bx)|$. We expect to see larger values that are closer to the margin when $\bx$ is in the training set, and smaller values when it is not. Formalizing this idea, the following theorem is used to determine w.h.p.\ whether a given point $\bx \in \reals^d$ is in fact a training point, or a test point which was freshly sampled from $\mathcal{D}$.


\begin{theorem}\label{thm:membershipInference}
Let $\mathcal{D}$ be a distribution on $\mathbb{R}^d$ that satisfies Assumption~\ref{asmp:highDimAssumptions}. Let $\bx \in \reals^d$ and let $\Phi(\btheta;\cdot)$ be a 2-layer neural network satisfying Assumption~\ref{asmp:KKT}. Then the following hold:
    \begin{itemize}
        \item With probability at least $1 -2\tau$ over the choice of the training set, if $\bx$ is a training point then $|\Phi(\btheta; \bx)| = m$.
        \item If $\bx \sim \mathcal{D}$ then with probability $1 - 4\tau$ over the sampling of $\bx$ and the sampling of the training data, $|\Phi(\btheta; \bx)|= O(\frac{n \cdot m \cdot \delta}{\Delta}) = o_d(m)$.
    \end{itemize}
\end{theorem}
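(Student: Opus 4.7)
The plan is to combine the KKT stationarity condition from Assumption~\ref{asmp:KKT} with the fact that a 2-layer homogeneous ReLU network $\Phi(\btheta;\cdot)$ is $2$-homogeneous in $\btheta$, which yields $\Phi(\btheta;\bz)=\tfrac{1}{2}\btheta^\top\nabla_{\btheta}\Phi(\btheta;\bz)$ by Euler's identity. Substituting the stationarity $\btheta=\sum_i\lambda_i y_i\nabla_{\btheta}\Phi(\btheta;\bx_i)$ then yields the central identity
\begin{equation*}
\Phi(\btheta;\bz)=\frac{1}{2}\sum_{i=1}^{n}\lambda_i y_i\,G_{i,\bz},\qquad
G_{i,\bz}:=\langle\nabla_{\btheta}\Phi(\btheta;\bx_i),\nabla_{\btheta}\Phi(\btheta;\bz)\rangle,
\end{equation*}
which reduces both parts of the theorem to controlling the gradient inner products $G_{i,\bz}$ for $\bz$ equal to a training point or to a freshly drawn test point.

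Next I would set up the good event. A union bound over the $\binom{n}{2}$ pairs of training points and the $n$ norm estimates in Assumption~\ref{asmp:highDimAssumptions} gives, with probability at least $1-2\tau$ over the training set, $|\bx_i^\top\bx_j|\le\delta$ with $n\delta=o(\Delta)$ and $\|\bx_i\|^2\ge\Delta$ for all $i\ne j$. On this event, I would analyze $G_{i,\bz}$ explicitly: writing $\nabla_{\btheta}\Phi(\btheta;\bz)$ coordinate-wise and substituting the KKT expressions for the weights ($\bw_j=v_j\sum_l\lambda_l y_l\bx_l N_j(\bx_l)$, with $N_j(\bx)$ the activation pattern, together with the analogous expression for $b_j$), each pre-activation $\bw_j^\top\bz+b_j$ becomes a linear combination of the quantities $\bx_l^\top\bz+1$. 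This exposes the key \emph{diagonal dominance} structure of $G_{i,\bz}$: when $\bz=\bx_k$ the only large contributions come from $l=k$ and scale with $\|\bx_k\|^2\ge\Delta$, whereas all other contributions are controlled by the small inner products $|\bx_l^\top\bz|\le\delta$.

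Given this diagonal-dominance bound, I would first prove the second bullet: for an independent $\bx\sim\Dcal$, an extra $1-2\tau$ probability ensures that $\bx$ is nearly orthogonal to every training point. Applied to the central identity with $\bz=\bx$, \emph{every} term $G_{i,\bx}$ is a pure cross term of order $\delta/\Delta$ relative to the diagonal $G_{ii}$; combining this with the relation $\lambda_i G_{ii}=\Theta(m)$ obtained from applying the central identity to $\bz=\bx_i$ then yields $|\Phi(\btheta;\bx)|=O(nm\delta/\Delta)=o_d(m)$. The first bullet follows by a short reduction: if some training point $\bx_k$ violates $|\Phi(\btheta;\bx_k)|=m$, then $|\Phi(\btheta;\bx_k)|>m$ by the definition of $m$ as the minimum, and \eqref{eq:zero_lam} forces $\lambda_k=0$. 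Since the training points are mutually nearly orthogonal on the good event, $\bx_k$ then plays the role of a freshly drawn point with respect to the active support $\{i:\lambda_i>0\}$, and the same cross-term bound yields $|\Phi(\btheta;\bx_k)|=o(m)$, contradicting $|\Phi(\btheta;\bx_k)|\ge m$.

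The main obstacle is making diagonal dominance of $G_{i,\bz}$ quantitative enough to withstand the union over the $n$ KKT multipliers. The dangerous terms are the products $(\bw_j^\top\bx_i+b_j)(\bw_j^\top\bz+b_j)$ arising from the $v_j$-component of the gradient: a naive Cauchy--Schwarz bound on them does not decay with $\delta$. Handling them requires iteratively unfolding the KKT expression for $\bw_j$ so that each factor itself decomposes into a diagonal piece plus $\delta$-small off-diagonal pieces, and then bookkeeping the resulting $O(n^2)$ cross terms so that their aggregate is controlled by $n\delta/\Delta=o(1)$ rather than by $n$. This is where the precise rate $O(nm\delta/\Delta)$ emerges and where Assumption~\ref{asmp:highDimAssumptions}'s hypothesis $n\delta=o(\Delta)$ is essential.
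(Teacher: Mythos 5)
Your Euler-identity reformulation $\Phi(\btheta;\bz)=\tfrac12\sum_i\lambda_i y_i\,G_{i,\bz}$ is an elegant bird's-eye view of what the paper does coordinate-wise: substituting the KKT expressions $\bw_j = v_j\sum_i\lambda_iy_i\bx_i\sigma'_{i,j}$ and $b_j=v_j\sum_i\lambda_iy_i\sigma'_{i,j}$ into the pre-activations and then into $\Phi$ is exactly the $(\bw,b)$-component of your Gram identity, so the two framings carry the same information. Your contradiction argument for the first bullet (if $\bx_k$ is not on the margin then $\lambda_k=0$ by complementary slackness, whence $\bx_k$ is off the active support and nearly orthogonal to it, forcing $|\Phi(\btheta;\bx_k)|=o(m)<m$) is correct and in fact a bit cleaner than the paper's route, which instead establishes $\lambda_k>0$ directly via a positive lower bound on $\sum_{j\in J_+}v_j^2\lambda_k\sigma'_{k,j}$.

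However, the quantitative core of your sketch is circular as written, and this is a genuine gap. You want to conclude $|\Phi(\btheta;\bx)|=O(nm\delta/\Delta)$ by showing each cross term $\lambda_iG_{i,\bx}$ is $O(m\delta/\Delta)$, and for that you invoke $\lambda_iG_{ii}=\Theta(m)$, which you say follows from applying the central identity at $\bz=\bx_i$. But that application gives $m=\tfrac12\lambda_iG_{ii}+\tfrac12\sum_{l\neq i}\lambda_ly_lG_{l,\bx_i}$, and extracting $\lambda_iG_{ii}=\Theta(m)$ requires the off-diagonal sum to be small — which is exactly the cross-term bound you are trying to establish, and which again depends on the unknown magnitudes of the $\lambda_l$. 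Your ``iterative unfolding'' of the KKT expression for $\bw_j$ does not break this loop; unfolding only reintroduces the same products $\lambda_lv_j^2\sigma'_{l,j}$. The paper breaks the circularity with a self-bounding argmax argument (its Lemma~\ref{lma:upper_bound_on_sum}): define $\alpha=\max_l\sum_{j\in J_+}v_j^2\lambda_l\sigma'_{l,j}$ and take $k$ the maximizer; then at $\bx_k$ both the diagonal contribution ($\ge(\Delta+1)\alpha$) and the off-diagonal contributions (each controlled by $\pm(\delta+1)\alpha$, since they involve the same $\alpha$-bounded sums) are expressible in units of $\alpha$, giving $m\ge\alpha(\Delta+1-2(\delta+1)(n-1))$ and hence a uniform bound $\alpha\le m/(\Delta+1-2(\delta+1)(n-1))$. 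This argmax trick, together with a non-trivial case analysis on the sign of $y_k$ (if $y_k=-1$ one derives a contradiction to $k$ being the maximizer), is the key missing lemma in your proposal. Without an explicit analogue of it, the $O(nm\delta/\Delta)$ rate — and therefore both bullets — is not established.
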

This theorem gives us a useful tool for performing membership inference attacks. Given a point $\bx \in \reals^d$, run $\bx$ through the neural network, and observe whether $|\Phi(\btheta; \bx)| = m$ or $|\Phi(\btheta; \bx)| = o_d(m)$.

The intuition behind the proof of the theorem can be explained as follows: Using Assumption \ref{asmp:KKT}, we show that the value $|\Phi(\btheta; \bx)|$ can be expressed as a weighted combination of $\{\bx_i^\top \bx\}_{i=1}^n$ (where $\{\bx_i\}_{i=1}^n$ are the training points). Using Assumption \ref{asmp:highDimAssumptions}, we know that if $\bx$ is in the training set, then $\bx = \bx_k$ for some $k \in [n]$ and $\|\bx_k\|^2$ must be large, while $\bx_j^\top \bx_k$ is small for all $j \neq k$, and therefore the weighted combination is large. On the other hand, when $\bx \sim \mathcal{D}$, then with high probability it is ``nearly orthogonal" to all training points, meaning that $\bx_j^\top \bx$ is small for all $j = 1,\dots,n$, and thus the weighted combination is small. For the complete proof of the theorem, we refer the reader to Appendix~\ref{sec:highDimProofs}.

Having presented our main tool in this section, we now turn to discuss several particular use cases, based on the amount of knowledge known to the attacker. Similarly to the previous section, we first assume that the value of the margin is known to the attacker. However, since an attacker cannot deduce the value of the margin in general, we also provide examples where membership inference questions can be answered without this knowledge.

\subsection{Example use cases of \thmref{thm:membershipInference}} \label{subsec:highDimExamples}
In all of the following cases, let $\Phi(\btheta; \bx)$ be a 2-layer neural network satisfying Assumption \ref{asmp:KKT}, and let $\mathcal{D}$ be a distribution that satisfies Assumption~\ref{asmp:highDimAssumptions}, so as to satisfy the assumptions in \thmref{thm:membershipInference}.

We begin with the simplest case, where the value of the margin is known to the attacker.
\begin{corollary}[Known margin value] \label{cor:knownMargin}
    Let $\bx \in \mathbb{R}^d$, assume that $d$ is sufficiently large, and further assume that we know the value of the margin $m$. Then, w.h.p.\ over the randomness in sampling the training set from $\Dcal$, we have that:
    \begin{itemize}
        \item If $\bx$ is in the training set then $|\Phi(\btheta;\bx)|=m$.
        \item If $\bx \sim \mathcal{D}$ is a fresh example, then w.h.p.\ over the randomness in sampling $\bx$, $|\Phi(\btheta;\bx)| < \frac{m}{2}$.
    \end{itemize}
\end{corollary}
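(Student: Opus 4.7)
The plan is to derive this corollary as an essentially immediate consequence of \thmref{thm:membershipInference}. The first bullet is a direct restatement of the first conclusion of \thmref{thm:membershipInference}: under Assumptions \ref{asmp:KKT} and \ref{asmp:highDimAssumptions}, with probability at least $1-2\tau$ over the sampling of the training set, every training point $\bx$ satisfies $|\Phi(\btheta;\bx)|=m$. Since $\bx$ being in the training set is exactly the hypothesis of the first bullet, nothing further is needed here.

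For the second bullet, I would invoke the second conclusion of \thmref{thm:membershipInference}, which states that when $\bx\sim\mathcal{D}$, with probability at least $1-4\tau$ over both the draw of $\bx$ and the draw of the training set, we have
\[
|\Phi(\btheta;\bx)| = O\!\left(\frac{n\cdot m\cdot \delta}{\Delta}\right)=o_d(m).
\]
The final step is to convert the qualitative bound $o_d(m)$ into the explicit threshold $m/2$ claimed in the corollary. By the definition of $o_d(m)$, there exists some $d_0$ such that for all $d\ge d_0$ the right-hand side is strictly less than $m/2$; the phrase ``assume that $d$ is sufficiently large'' in the corollary's statement is precisely what lets me absorb this $d_0$. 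Therefore on the same high-probability event guaranteed by \thmref{thm:membershipInference}, we conclude $|\Phi(\btheta;\bx)|<m/2$.

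Since the attacker knows $m$, the two conclusions together yield a decision rule: output ``training point'' iff $|\Phi(\btheta;\bx)|=m$ (or, robustly, iff $|\Phi(\btheta;\bx)|\ge m/2$), and this rule is correct on the stated high-probability event by a union bound over the two cases. There is no real obstacle to overcome; the only mild subtlety is keeping the probabilistic quantifiers straight, namely that the first bullet's event depends only on the training set while the second involves an additional randomness over $\bx$, so the $1-2\tau$ and $1-4\tau$ guarantees from \thmref{thm:membershipInference} can be cited as-is without any further union bound inside the corollary itself.
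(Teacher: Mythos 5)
Your proof is correct and follows essentially the same route as the paper: both derive the two bullets directly from \thmref{thm:membershipInference} and both use the ``$d$ sufficiently large'' hypothesis to convert the $o_d(m)$ bound $O\!\left(\tfrac{n\cdot m\cdot\delta}{\Delta}\right)$ into the explicit threshold $m/2$. The extra remarks on the decision rule and the probabilistic quantifiers are accurate but go slightly beyond what the paper's one-line proof spells out.
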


\begin{proof}
    From \thmref{thm:membershipInference} we know that w.h.p.\ over the choice of the training set we have that if $\bx$ is in the training set then $|\Phi(\btheta;\bx)| = m$ and if $\bx \sim \mathcal{D}$ then w.h.p.\
    $$|\Phi(\btheta; \bx)| \leq O\left(\frac{n \cdot m \cdot \delta}{\Delta}\right) = m \cdot O\left(\frac{n \cdot \delta}{\Delta}\right) < \frac{m}{2},$$
    where in the last inequality we used the fact that $O(\frac{n \cdot \delta}{\Delta}) = o_d(1)$, and the assumption that $d$ is sufficiently large.
\end{proof}

Thus, by the above, if the margin's value $m$ is known to the attacker, they can simply compute $|\Phi(\btheta; \bx)|$ and return that $\bx$ is in the training set if and only if $|\Phi(\btheta;\bx)| \approx m$.

As previously discussed, in general, the value of the margin is not known to the attacker. Nevertheless, under different assumptions, the attacker can still execute a successful membership inference attack.

\begin{corollary}[Leaked data point]\label{cor:marginDivide}
    Let $k$ be a constant (independent of $d$), let $\bz_1,\ldots,\bz_k \sim \mathcal{D}$ be $k$ points, and assume we know that at least one point in this set is in the training set. Let $\alpha = \max_{1 \leq i \leq k} \left\{|\Phi(\btheta;\bz_i)|\right\}$, then w.h.p.\ over the choice of the training set, we have for all $i \in [k]$:
    \begin{itemize}
        \item If $\bz_i$ is in the training set then $|\Phi(\btheta;\bz_i)|=\alpha$.
        \item If $\bz_i \sim \mathcal{D}$ then w.h.p.\ (over sampling $\bz_i$) $|\Phi(\btheta;\bz_i)| < \frac{\alpha}{2}$.
    \end{itemize}
\end{corollary}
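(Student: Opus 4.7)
The plan is to reduce Corollary~\ref{cor:marginDivide} to a direct application of \thmref{thm:membershipInference}, together with a union bound over the $k$ points. The intuition is that training points evaluate to exactly $\pm m$ on the network, while fresh samples from $\mathcal{D}$ evaluate to $o_d(m)$; since at least one $\bz_i$ is guaranteed to be a training point, the maximum $\alpha$ must equal $m$, and the gap between $m$ and $o_d(m)$ gives the factor-of-$2$ separation for sufficiently large $d$.

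More concretely, first I would partition the index set $[k]$ into $T=\{i:\bz_i\text{ is in the training set}\}$ and $F=[k]\setminus T$, where by hypothesis $T\neq\emptyset$. Applying the first bullet of \thmref{thm:membershipInference} once (an event of probability at least $1-2\tau$ over the training set), we get that for every $i\in T$, $|\Phi(\btheta;\bz_i)|=m$. Next, applying the second bullet of \thmref{thm:membershipInference} to each $i\in F$ and taking a union bound over $|F|\le k$ indices (which is valid because $k$ is a constant independent of $d$, so $4k\tau$ remains of the same order as $\tau$), we get that simultaneously for all $i\in F$, $|\Phi(\btheta;\bz_i)|\le O\!\left(\frac{n\cdot m\cdot \delta}{\Delta}\right)=o_d(m)$.

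Conditioning on both events above, which hold jointly w.h.p.\ over the sampling of the training set and of $\bz_1,\ldots,\bz_k$, the rest is bookkeeping. Since $T$ is nonempty and every index in $T$ contributes value $m$ while every index in $F$ contributes $o_d(m)<m$ (for $d$ large enough), the maximum $\alpha=\max_i|\Phi(\btheta;\bz_i)|$ is attained at some $i\in T$ and equals $m$. Consequently $|\Phi(\btheta;\bz_i)|=\alpha$ for every $i\in T$, establishing the first bullet, and for every $i\in F$ we have $|\Phi(\btheta;\bz_i)|=o_d(m)=o_d(\alpha)<\alpha/2$ once $d$ is sufficiently large, establishing the second.

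I do not expect a serious technical obstacle here: the result is essentially a corollary in the literal sense, unlocking the unknown-margin setting by replacing the (unobservable) quantity $m$ with the (observable) empirical maximum $\alpha$. The only care needed is (i) making sure that the union bound over $k$ points still leaves the failure probability of the form claimed in the statement, which is fine because $k=O(1)$, and (ii) being careful that the probability statements in \thmref{thm:membershipInference} apply correctly: the $1-2\tau$ event depends only on the training set and can be invoked once, whereas the $1-4\tau$ event must be invoked $|F|$ times, one per fresh sample.
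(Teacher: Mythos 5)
Your proposal is correct and follows essentially the same route as the paper: invoke Theorem~\ref{thm:membershipInference} together with a union bound over the $k$ points to establish that w.h.p.\ $\alpha = m$, and then read off the two bullets. The only cosmetic difference is that the paper concludes by citing Corollary~\ref{cor:knownMargin} while you re-derive the final separation directly; the underlying argument is identical.
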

\begin{proof}
    W.l.o.g.\ let $\bz_1$ be in the training set. Using \thmref{thm:membershipInference} and the union bound over $\bz_1,\ldots,\bz_k$, we have $\abs{\Phi(\btheta, \bz_i)} \leq m$ for all $i$ with probability at least $1 - 4 k \tau = 1 - o_d(1)$, so in particular $\alpha \leq m$.
    On the other hand, using \thmref{thm:membershipInference} again, we have that with probability at least $1 - 2\tau = 1 - o_d(1)$ we have that $\abs{\Phi(\btheta, \bz_1)} = m$, so $m \leq \alpha$. So we have that w.h.p.\ $m = \alpha$.
    Now we complete the proof by using Corollary~\ref{cor:knownMargin}.
\end{proof}

The above corollary implies that even if the attacker has no knowledge of the value of the margin, but has knowledge that at least one element in a set of size $k$ is in the training set, then this value must achieve the maximal prediction value in absolute value among the set. This allows the attacker to deduce the margin value by computing $\max_i |\Phi(\btheta;\bz_i)|$. Thereafter, the attacker can continue in the same manner as in Corollary~\ref{cor:knownMargin}.

One might argue that even the previous assumptions are somewhat restrictive, since they require that at least one training point is leaked a priori. The following corollary makes some additional assumptions on the underlying distribution and that the margin value is bounded rather than known, which is much milder than in the previous result.
\begin{corollary}[Bounded margin]\label{cor:marginIsBound}
Let $\mathcal{D}$ be a distribution that satisfies the following slightly stronger version of Assumption~\ref{asmp:highDimAssumptions}:

Let $\tau > 0$.
\begin{itemize}
    \item For $\bx,~\by \sim \mathcal{D},~n \cdot |\bx^\top \by| = o\left(\frac{d}{t(d)}\right)$ for some function $t(d)$ with probability at least $1 - \frac{\tau}{n^2}$.
    \item For $\bx \sim \mathcal{D}, ~~ \|\bx\|^2 = \Omega(d)$ with probability at least $1 - \frac{\tau}{n}$.
\end{itemize}
Furthermore, let $\bx \sim \mathcal{D}$ and suppose that $C < m < t(d)$ for some constant $C$. Then the following holds: 
\begin{itemize}
    \item W.p.\ at least $1 - 2\tau$ over the training set, if $\bx$ is in the training set then $|\Phi(\btheta; \bx)| > C$.
    \item If $\bx \sim \mathcal{D}$ then w.p.\ at least $1-4\tau$ over the training set and $\bx,~|\Phi(\btheta; \bx)| < o_d(1)$.
\end{itemize}
\end{corollary}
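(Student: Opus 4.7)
The plan is to reduce this corollary directly to Theorem~\ref{thm:membershipInference}, after first noting that the strengthened distributional hypothesis is strictly stronger than Assumption~\ref{asmp:highDimAssumptions}. Since $m > C > 0$ and $m < t(d)$, we have $t(d) > C$, hence $o(d/t(d))$ is asymptotically no larger than $o(d)$; thus the strengthened inner-product bound $n \cdot |\bx^\top \by| = o(d/t(d))$ implies the condition $n \cdot |\bx_1^\top \bx_2| \le o(d)$ required in Assumption~\ref{asmp:highDimAssumptions}, while the norm condition is identical. Consequently Theorem~\ref{thm:membershipInference} applies with the same parameter $\tau$, and we inherit its two probabilistic conclusions without further work.

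For the first bullet, suppose $\bx$ is a training point. The first part of Theorem~\ref{thm:membershipInference} gives that with probability at least $1 - 2\tau$ over the draw of the training set, $|\Phi(\btheta; \bx)| = m$, and by the bounded-margin hypothesis $m > C$, so $|\Phi(\btheta; \bx)| > C$. This matches the claimed bound exactly.

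For the second bullet, suppose $\bx \sim \mathcal{D}$ is freshly drawn. The second part of Theorem~\ref{thm:membershipInference} gives that with probability at least $1 - 4\tau$ over the training set and $\bx$,
$$|\Phi(\btheta; \bx)| = O\!\left(\frac{n \cdot m \cdot \delta}{\Delta}\right).$$
It remains to show the right-hand side is $o_d(1)$. A union bound over the $\binom{n}{2}$ pairs of training points (together with the strengthened inner-product bound) yields $n \cdot \delta = o(d/t(d))$ with high probability, while the strengthened norm condition yields $\Delta = \Omega(d)$. Plugging in $m < t(d)$ gives
$$\frac{n \cdot m \cdot \delta}{\Delta} \;<\; \frac{t(d) \cdot o(d/t(d))}{\Omega(d)} \;=\; \frac{o(d)}{\Omega(d)} \;=\; o_d(1),$$
which is exactly the desired bound. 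These high-probability events are already absorbed into the $1 - 4\tau$ guarantee of Theorem~\ref{thm:membershipInference}, since that theorem's proof itself invokes the same union bounds.

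The argument is essentially mechanical once Theorem~\ref{thm:membershipInference} is in hand; the only real content is the observation that introducing the parameter $t(d)$ lets us trade a stronger decorrelation assumption against a weaker requirement on $m$ (bounded rather than known), and that this tradeoff is balanced in the ratio $n \cdot m \cdot \delta / \Delta$ so the final $o_d(1)$ bound still goes through. The main (minor) obstacle is purely bookkeeping: verifying that $t(d) \ge C$ guarantees the strengthened assumption implies the original one, so that invoking Theorem~\ref{thm:membershipInference} is legitimate without any change in the $\tau$ parameters.
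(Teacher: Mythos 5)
Your proof is correct and follows essentially the same route as the paper's: both bullets are read off directly from Theorem~\ref{thm:membershipInference}, using $m > C$ for the training-point case and the chain $\frac{n \cdot m \cdot \delta}{\Delta} \le O\bigl(o(d/t(d)) \cdot t(d)/d\bigr) = o_d(1)$ for the fresh-sample case. Your additional bookkeeping (checking that the strengthened assumption with $t(d) > C$ implies Assumption~\ref{asmp:highDimAssumptions}, and noting that the union-bound events are already accounted for inside the theorem's $1 - 4\tau$) is a correct and slightly more careful spelling-out of what the paper leaves implicit, not a different argument.
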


\begin{proof}
    Assume that $\bx$ is in the training set. From \thmref{thm:membershipInference} we know that $|\Phi(\btheta; \bx)| = m > C$ with probability at least $1 - 2\tau$. Assume that $\bx$ is not in the training set. From \thmref{thm:membershipInference} and our stronger assumption on $\mathcal{D}$ we know that 
    \[
        |\Phi(\btheta; \bx)| = O\left(\frac{n \cdot \delta \cdot m}{\Delta}\right) \leq O\left(o \left(\frac{d}{t(d)} \right) \cdot \frac{m}{\Delta}\right) = O\left(o \left(\frac{d}{t(d)} \right) \cdot \frac{m}{d}\right) = o_d(1)~,
    \] with probability at least $1 - 4\tau$.
\end{proof}
This corollary implies the following: for $\bx \in \reals^d$, let us compute $|\Phi(\btheta; \bx)|$. If $\bx$ is not in the training set, then w.h.p.\ we get a number which is smaller than $C$, and if $\bx$ is in the training set then w.h.p. we get a number which is larger than $C$.

\begin{remark}[On the lower and upper bounds of the margin]
We argue that the margin assumptions used above are mild. With exponential or logistic loss functions (standard assumptions in this setting), the gradient becomes exponentially small as the margin grows. Thus, if the margin is even polylogarithmic in $d$, further training becomes highly inefficient. Conversely, a very small margin implies large loss on margin points, suggesting that training stopped prematurely. For formal justification, see Remarks~\ref{remark:lowerBoundOfMargin} and~\ref{remark:upperBoundOfMargin}.

\end{remark}



\section{An experiment for intermediate values of $d$} \label{sec:experiments}

Thus far, our theory addressed the one-dimensional case, as well as the high-dimensional case where the input's dimension is much larger than the training set size. This naturally raises the question of what happens in between these two regimes.

Exploring this question empirically, in this section, we conducted a few simulations focusing on the membership inference problem, and observed that while our theoretical results' assumptions do not necessarily hold, their implications are nevertheless still valid. 
We sampled training and test sets (both i.i.d.) from a mixture of 2 Gaussians distribution, trained a 2-layer neural network until reaching an approximate KKT point, and examined the network's predictions on both the training and the test sets in comparison to the margin. Our code is available \href{https://github.com/guy120494/Provable-Privacy-Attacks-on-Trained-Shallow-Neural-Networks}{here}.

More specifically, we conducted all our simulations using the following settings:
\begin{itemize}
    \item \textbf{Architecture:} We focused on 2-layer ReLU networks, where the hidden layer has 10,000 neurons. The neurons in the hidden layer each have a bias term while the second layer does not, thus making the network homogeneous.
    \item \textbf{Range of the input dimension:} We tested $d$ for various values in the range between 1 and 1000. This range includes values of $d$ where it is much larger than the training set as in our theoretical results, but also includes more moderate values of $d$ where our assumptions do not necessarily hold.
    \item \textbf{Data generation:} All points were sampled i.i.d.\ from a mixture of 2 Gaussians, with means $(\pm 1, 0, \dots, 0) \in \reals^d$ and identity covariance matrices. The training set contains 20 instances, since this small size ensures that Assumption~\ref{asmp:highDimAssumptions} holds for the larger values of $d$ that we tested. The test set contains 5,000 instances.
    \item \textbf{Training:} In order to converge faster to an approximate KKT point, we used a small initialization scheme as was done in \citet{haim2022reconstructing}.
\end{itemize}

Our experiment focused on studying two objectives. The first studies how many training points lie on the margin as a function of the dimension $d$,\footnote{It is noteworthy that a similar experiment was conducted in \citet{vardi2022gradient}, albeit under a different context where the adversarial robustness of the neural network is studied.} and the second studies how many test points that were sampled from the same distribution as the training set lie on or above the margin.

Our results demonstrate that network outputs can serve as effective tools for privacy attacks across a broader range of input dimensions, suggesting wider applicability of our theory. Specifically, \figref{fig:marginalTrainingPointRatio} shows that as input dimensions increase, more training points lie on the margin, indicating a higher probability of this occurrence. Similarly, \figref{fig:badPointsHighDim} and \figref{fig:badPointsLowDim} reveal that the number of test points lying on or above the margin decreases with higher dimensions, implying a reduced likelihood of test points from the same distribution doing so. Notably, these findings align with our theory and extend to much smaller dimensions than predicted. For instance, while \thmref{thm:membershipInference} suggests a minimum dimension of $d = n^2 = 400$\footnote{This is because of the fact that under the assumption $n=\sqrt{d}$, we have w.h.p.\ that $n\cdot|\bx_1^{\top}\bx_2|=\Theta(d)$, so Assumption~\ref{asmp:highDimAssumptions} is very unlikely to hold for values of $d$ that are smaller than that.} for a training set of size 20, our experiments show that nearly all test points fall below the margin even at $d = 100$, and about 80\% do so at $d = 20$, highlighting the potential for membership inference attacks at much lower dimensions.

\begin{figure}[t]
     \centering
     \begin{minipage}[t]{0.32\textwidth}
         \centering
         \includegraphics[width=\textwidth]{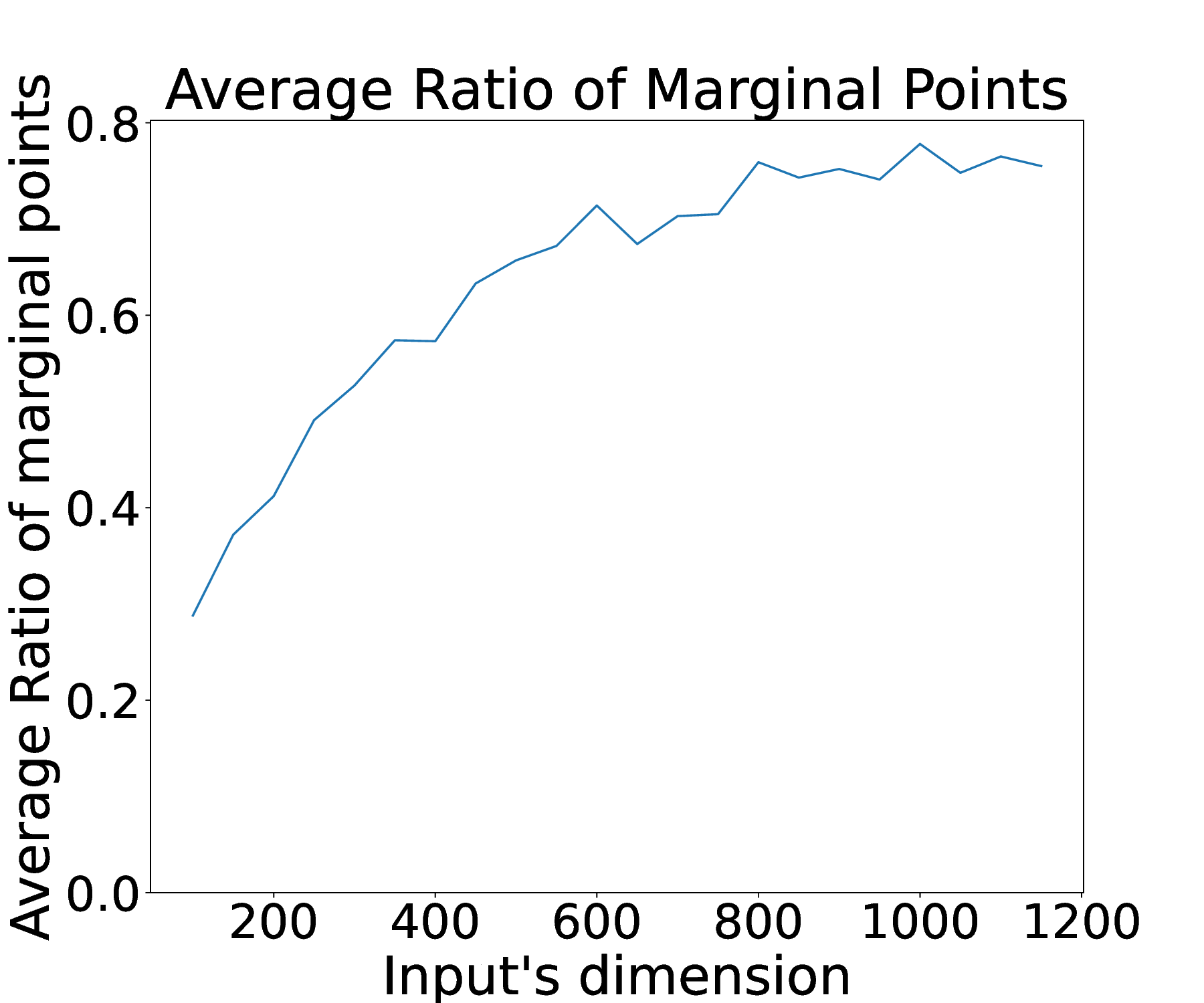}
         \caption{\small The percentage of training points that lie on the margin (up to a slack of 10\%) increases as the dimension increases.}
         \label{fig:marginalTrainingPointRatio}
     \end{minipage}
     \hfill
     \begin{minipage}[t]{0.32\textwidth}
         \centering
         \includegraphics[width=\textwidth]{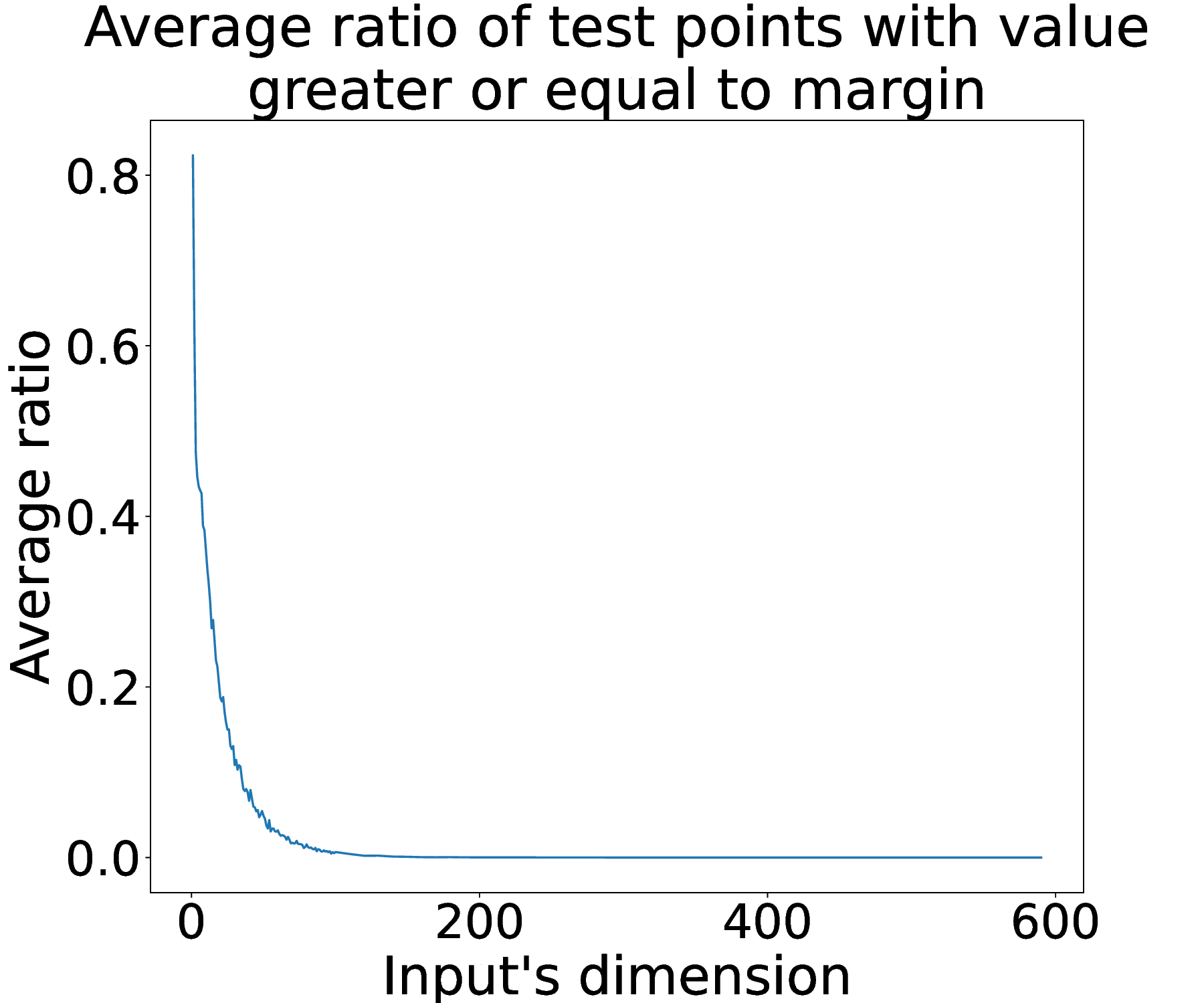}
         \caption{\small The percentage of test points that lie on or above the margin drops to zero for sufficiently large input dimensions, much earlier than what our theory predicts.}
         \label{fig:badPointsHighDim}
     \end{minipage}
     \hfill
     \begin{minipage}[t]{0.32\textwidth}
         \centering
         \includegraphics[width=\textwidth]{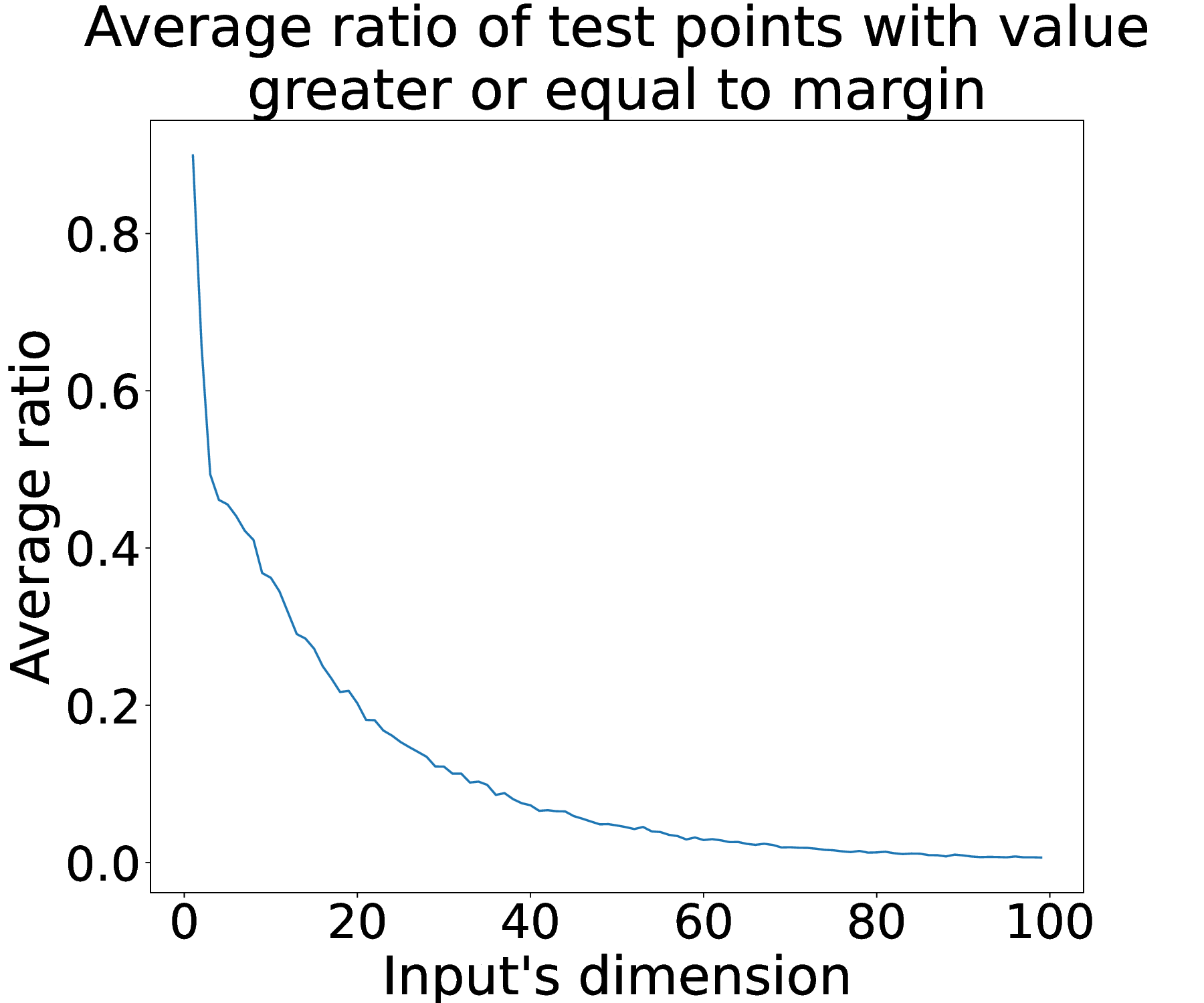}
         \caption{\small A closer look at the smaller values. The percentage of test points that lie on or above the margin decreases rapidly as the dimension increases.}
         \label{fig:badPointsLowDim}
     \end{minipage}
     \caption*{\small The relative values of training and test points compared to the value of the margin, where the above graphs were averaged over 50 instantiations.}
\end{figure}

Following our empirical findings, we conclude that our theory is expected to hold more generally, and that the magnitude of the output of the neural network on a data instance can provably reveal whether it is a training point or a test point with high success rates. This is in line with many empirical findings (see \citet{hu2022membership}), and provides a theoretical explanation for this phenomenon.

\subsection*{Acknowledgements}
GV is supported by research grants from the Center for New Scientists at the Weizmann Institute of Science, and the Shimon and Golde Picker -- Weizmann Annual Grant.
\clearpage
\bibliographystyle{plainnat}
\bibliography{citations}

\newpage
\appendix
\section{Proofs from \subsecref{sec:oneDimGeneralCase}} \label{sec:oneDimProofs}

We start by stating a few notations:
Denote by $\sigma'_{j}$ the subgradient of $\relu{\bw_j^\top \bx + b_j}$. If $\bw_j^\top \bx + b_j \neq 0$ then $\sigma'_{j}$ is well defined, and if $\bw_j^\top \bx + b_j = 0$ then $\sigma'_{j} \in [0,1]$. In any case, $\sigma'_{j} \geq 0$.
For a training point $\bx_i$, denote by $\sigma'_{i,j}$ the subgradient of $\relu{\bw_j^\top \bx_i + b_j}$.

For all $j \in [k]$ that the partial derivatives of our 2-layer homogeneous neural network are given by
\begin{align*}
    &\frac{\partial}{\partial v_j}\Phi(\btheta;\bx) = \relu{\bw_j^\top \bx+b_j},\\
    &\frac{\partial}{\partial \bw_j}\Phi(\btheta;\bx) = v_jx\sigma'_j,\\
    &\frac{\partial}{\partial b_j}\Phi(\btheta;\bx) = v_j\sigma'_j.
\end{align*}
Combining the above with the KKT conditions, we arrive at
\begin{align}
    &v_j = \sum_{i=1}^n\lambda_iy_i\relu{\bw_j^\top \bx_i+b_j},\label{eq:v_j}\\
    &\bw_j = v_j\sum_{i=1}^n\lambda_iy_i\bx_i\sigma'_{i,j}, \label{eq:w_j}\\
    &b_j=v_j\sum_{i=1}^n\lambda_iy_i\sigma'_{i,j}, \label{eq:b_j}
\end{align}
for all $j\in[k]$.


\begin{lemma}\label{lma:training_points_in_interval}
    Let $\Phi$ be a 2-layer homogeneous network that satisfy the KKT conditions. Let $x_l < x_{l+1}$ be 2 adjacent marginal training points. The number of breaking points in the interval $[x_l,x_{l+1}]$ is at most 2, i.e.\ $|\{-\frac{b_j}{w_j}:~x_l \leq -\frac{b_j}{w_j} \leq x_{l+1}\}| \leq 2$. Moreover, if there are 2 breaking points, the neurons that form the breaking points must have different signs.
\end{lemma}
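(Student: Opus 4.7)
The plan is to combine the pigeonhole principle with the KKT equations \eqref{eq:w_j} and \eqref{eq:b_j}, crucially exploiting that non-marginal training points contribute nothing to their right-hand sides since their Lagrange multipliers vanish by \eqref{eq:zero_lam}. Suppose for contradiction that three distinct breakpoints $t_1, t_2, t_3 \in [x_l, x_{l+1}]$ arise from neurons indexed by $j_1, j_2, j_3$. None of the $w_{j_s}$ can be zero (otherwise the breakpoint $-b_{j_s}/w_{j_s}$ would be undefined or infinite, hence not in $[x_l, x_{l+1}]$), so by pigeonhole at least two of them share a sign; WLOG assume $w_{j_1}, w_{j_2} > 0$.

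Since $x_l$ and $x_{l+1}$ are adjacent marginal training points, every index $i$ with $\lambda_i > 0$ satisfies $x_i \leq x_l$ or $x_i \geq x_{l+1}$. A neuron with positive $w_j$ and breakpoint inside $[x_l, x_{l+1}]$ is therefore inactive at every marginal training point with $x_i \leq x_l$ and active at every marginal training point with $x_i \geq x_{l+1}$. Hence $\sigma'_{i, j_1} = \sigma'_{i, j_2}$ for every $i$ with $\lambda_i > 0$. Substituting this common activation pattern into \eqref{eq:w_j} and \eqref{eq:b_j} yields $w_{j_1}/v_{j_1} = w_{j_2}/v_{j_2}$ and $b_{j_1}/v_{j_1} = b_{j_2}/v_{j_2}$, and thus $-b_{j_1}/w_{j_1} = -b_{j_2}/w_{j_2}$, contradicting $t_1 \neq t_2$. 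The \emph{moreover} statement is immediate from the same computation: if two distinct breakpoints in $[x_l, x_{l+1}]$ came from neurons whose $w$-coefficients shared a sign, the identity above would again collapse them into a single value, so the two surviving breakpoints must come from neurons of opposite signs.

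The main obstacle is the degenerate case in which some breakpoint $t_s$ coincides exactly with $x_l$ or $x_{l+1}$, where the ReLU subgradient $\sigma'_{i,j}$ is only required to lie in $[0,1]$ rather than being determined to be $0$ or $1$. I would resolve this by noting that the KKT conditions hold for \emph{some} valid selection of subgradients, and in particular one may pick the same boundary value for $j_1$ and $j_2$; since their activation patterns on all other marginal training points already agree, the two vectors $(\sigma'_{i,j_1})_i$ and $(\sigma'_{i,j_2})_i$ remain identical on the support of $\lambda$, which is all that is needed for the sums in \eqref{eq:w_j} and \eqref{eq:b_j} to match and the contradiction to go through.
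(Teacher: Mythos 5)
Your proof is correct and takes essentially the same approach as the paper's: pigeonhole on the sign of $w$, observe that two same-signed neurons with breakpoints in $[x_l, x_{l+1}]$ share an activation pattern on the support of $\lambda$, and invoke \eqref{eq:w_j} and \eqref{eq:b_j} to force their breakpoints to coincide. One caveat on your degenerate-case sketch: the subgradient at a boundary breakpoint is not freely selectable --- it is pinned down, together with the $\lambda_i$, as part of what it means for the KKT stationarity equations to hold, so ``picking the same boundary value for $j_1$ and $j_2$'' is not automatically available --- though the paper's own proof silently elides this edge case as well.
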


\begin{proof}
    Let $c_{j_1}(x)=v_{j_1}[w_{j_1}x+b_{j_1}]_+$ and $c_{j_2}(x)=v_{j_2}[w_{j_2}x+b_{j_2}]_+$ be 2 neurons with $w_{j_1} < 0$ and $w_{j_2} < 0$ such that their breaking points are between $x_l$ and $x_{l+1}$. Both $c_{j_1}$ and $c_{j_2}$ are determined by all training points that are smaller than $x_{l+1}$.
    Let us examine their breaking point $-\frac{b_l}{w_l}$ and $-\frac{b_{l+1}}{w_{l+1}}$:
    From \eqref{eq:w_j} and \eqref{eq:b_j} we get that 
    $$-\frac{b_{j_1}}{w_{j_1}} = -\frac{v_{j_1}\sum_{i=1}^l\lambda_i y_i}{v_{j_1}\sum_{i=1}^l\lambda_i y_i x_i} = -\frac{\sum_{i=1}^l\lambda_i y_i}{\sum_{i=1}^l\lambda_i y_i x_i} = -\frac{v_{j_2}\sum_{i=1}^l\lambda_i y_i}{v_{j_2}\sum_{i=1}^l\lambda_i y_i x_i} = -\frac{b_{j_2}}{w_{j_2}}$$
    This means the neurons have the same breaking point and are active on the same region, which means they are the same neuron.
    
    The same argument can be made to show that if $w_l > 0$ and $w_{l+1} > 0$ the neurons have the same breaking point.\newline
    We conclude that in this interval we can have at most one neuron with $w > 0$ and at most one neuron with $w < 0$ with breaking points in the interval $[x_l, x_{l+1}]$.
\end{proof}

\begin{lemma}\label{lma:networkNotConstant}
    Let $x_1 < x_2 < \dots < x_n$ be the training points on the margin and $\Phi(x;\theta)$ be a 2-layers NN. If The network $\Phi(x;\theta)$ satisfies the KKT conditions and is not constant in any interval; then the number of times it crosses the margin is at most $6n$.
\end{lemma}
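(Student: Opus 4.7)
The plan is to reduce counting margin crossings to counting linear pieces of $\Phi(\btheta;\cdot)$, and then to bound the number of breakpoints of this piecewise-linear function. Since $\Phi(\btheta;\cdot)$ is assumed to be non-constant on every interval of linearity, each linear piece is a non-horizontal line segment, so it meets each of the two horizontal lines $y = m$ and $y = -m$ in at most one point, contributing at most $2$ crossings per piece. Thus the number of margin crossings is at most twice the number of linear pieces, which in turn is one more than the number of distinct breakpoints of the network. The task therefore reduces to showing there are $O(n)$ breakpoints, with a small enough constant to yield $6n$.

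The first step is to bound the number of breakpoints lying in $[x_1, x_n]$. Applying \lemref{lma:training_points_in_interval} to each of the $n-1$ gaps between consecutive marginal training points yields at most $2(n-1)$ such breakpoints. For breakpoints outside $[x_1, x_n]$, I would rerun the merging argument from \lemref{lma:training_points_in_interval}. Specifically, for any breakpoint $-b_j/w_j > x_n$: if $w_j > 0$, then $\relu{w_j x + b_j}$ is inactive on every training point, so \eqref{eq:v_j} forces $v_j = 0$ and the neuron contributes nothing to $\Phi$ and can be discarded; if $w_j < 0$, then the ReLU is active on every training point, so any two such neurons have identical activation patterns on the training data, and plugging into \eqref{eq:w_j} and \eqref{eq:b_j} shows their breakpoints must coincide, exactly as in the proof of \lemref{lma:training_points_in_interval}. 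A symmetric argument handles breakpoints strictly below $x_1$. Consequently, at most one effective breakpoint lies above $x_n$ and at most one lies below $x_1$.

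Combining the two counts gives at most $2(n-1) + 2 = 2n$ breakpoints in total, hence at most $2n+1$ linear pieces, and therefore at most $2(2n+1) = 4n+2 \le 6n$ margin crossings for every $n \ge 1$.

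The step I expect to require the most care is the boundary case, i.e.\ verifying that the merging argument from \lemref{lma:training_points_in_interval} still applies when both candidate breakpoints lie to the same side of every training point. That argument only relies on the two neurons having identical activation patterns on the training data, which remains true in this regime, so the reduction goes through -- but it should be written out explicitly rather than invoked verbatim, since \lemref{lma:training_points_in_interval} is stated only for breakpoints between two marginal training points.
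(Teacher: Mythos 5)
Your proof is correct and rests on the same two ingredients as the paper's: \lemref{lma:training_points_in_interval} to bound breakpoints between consecutive marginal points, and the observation that a non-horizontal linear piece meets each of the lines $y=m$, $y=-m$ at most once. Where you differ is in the bookkeeping, and yours is actually tighter and more careful. The paper counts crossings interval-by-interval (at most $6$ per gap, plus $6$ in each tail), simply asserting the tail bound without justification, and its arithmetic ``$6(n-2)+12=6n$'' appears to miscount, since there are $n-1$ interior gaps between $n$ points (which would give $6n+6$, not $6n$). You instead count breakpoints globally -- at most $2(n-1)$ inside $[x_1,x_n]$, plus at most one effective breakpoint on each side by the merging/vanishing argument -- which gives at most $2n+1$ linear pieces and $4n+2$ crossings, cleanly within $6n$ for all $n\ge 1$. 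Your explicit treatment of the tail regions fills the gap the paper glosses over: for a neuron with breakpoint to the right of $x_n$, either $w_j>0$ and \eqref{eq:v_j} forces $v_j=0$ (no contribution), or $w_j<0$ and the neuron is active on all points with $\lambda_i>0$, so \eqref{eq:w_j} and \eqref{eq:b_j} pin down a single common breakpoint; the symmetric argument covers the left tail. One small point to make explicit when writing this up: the ``active on all training points'' claim should be phrased as ``active on all training points with $\lambda_i>0$'' (equivalently, all marginal training points), since points with $\lambda_i=0$ drop out of the KKT sums and may lie outside $[x_1,x_n]$. This does not affect the conclusion.
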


\begin{proof}
    between each $x_l,~x_{l+1}$ there are at most 2 breaking points, i.e.\ the networks cross the margin at most 6 times in the interval $[x_l,~x_{l+1}]$ (3 times the margin $y=1$ and 3 times the margin $y=-1$). Before the point $x_1$ and after the point $x_n$, the network crosses the line at most 6 times in each interval. So, if we sum up all the crosses, we see that the network crosses the margin at most $6\cdot (n-2) + 12 = 6n$
\end{proof}
\begin{figure}
    \centering
    \includegraphics[width=0.5\linewidth]{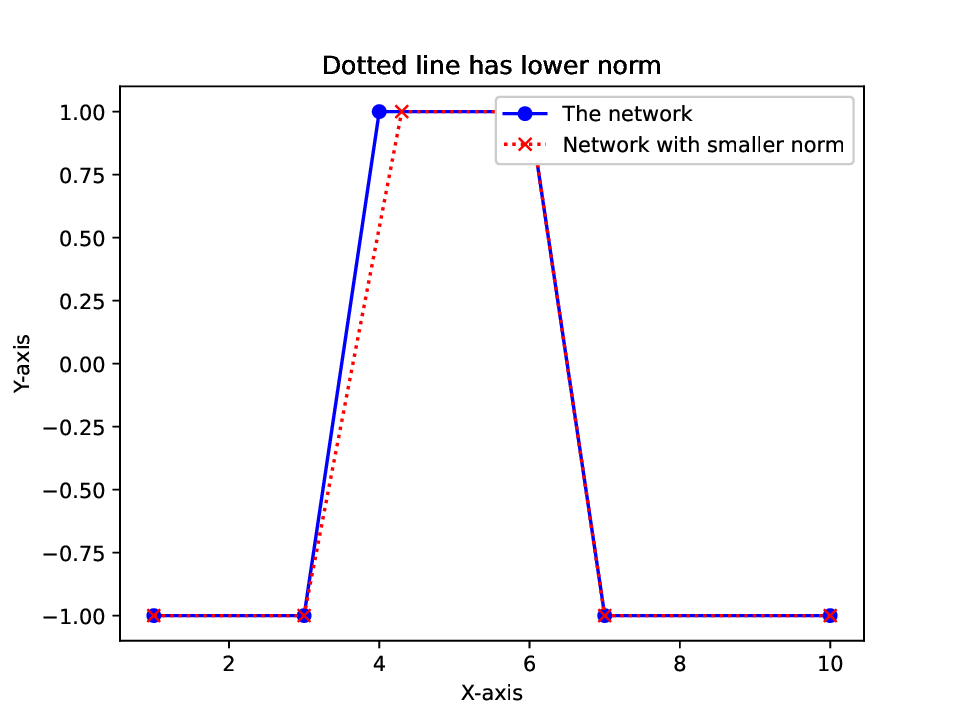}
    \caption{The blue network is a network which the breaking point is not a training point. The dotted-red network has smaller norm.}
    \label{fig:localMin}
\end{figure}
\begin{proof}[Proof of \thmref{thm:non_constant_intervals}]
    Assume towards contradiction that there are no training points in the interval $[-\frac{b_{i-1}}{w_{i-1}}, -\frac{b_{i+1}}{w_{i+1}}]$.
    Since there are 3 breaking points, two of the neurons must have the same sign. Assume w.l.o.g.\ that $sgn(w_{i-1}) = sgn(w_i)$ (all other cases are similar). Since there are no marginal training data in $[-\frac{b_{i-1}}{w_{i-1}}, -\frac{b_{i+1}}{w_{i+1}}]$, they are active on the same set of training points, which means by \eqref{eq:w_j} and \eqref{eq:b_j} that $-\frac{b_{i-1}}{w_{i-1}} = -\frac{b_i}{w_i}$.
    
    Each interval crosses the margin at most twice, so the number points lying on the margin is at most 4.
\end{proof}
\begin{proof}[Proof of \thmref{thm:local_minimum}]
    This proof follows the same logic as the proof of Lemma~A.6 in \citet{smallNormPaper}.
    
    Assume towards contradiction that neither $-\frac{b_i}{w_i}$ nor $-\frac{b_{i+1}}{w_{i+1}}$ are in the training set, if $x \in [-\frac{b_i}{w_i},-\frac{b_{i+1}}{w_{i+1}}]$ then $x \in (-\frac{b_i}{w_i},-\frac{b_{i+1}}{w_{i+1}})$.
    
     Note that $sgn(w_{i-1})=-sgn(w_i)$ because there is no training point in the interval $(-\frac{b_i}{w_i}, -\frac{b_{i+1}}{w_{i+1}})$ so by \ref{lma:training_points_in_interval} they must have different signs.

     Also note that there must be a training point either in $[-\frac{b_{i-2}}{w_{i-2}},-\frac{b_{i-1}}{w_{i-1}}]$ or in $[-\frac{b_i}{w_i},-\frac{b_{i+1}}{w_{i+1}}]$ (or in both). If it is not the case there are at least 3 breaking points between to training data points, contradiction to \ref{lma:training_points_in_interval}.
    
    \textbf{CASE 1}: $v_i^2 + \frac{v_i w_i v_{i-1}}{w_{i-1}} + \frac{b_i}{1-\delta}(\frac{w_ib_{i-1}}{w_{i-1}} - b_i) - \frac{w_1 v_i w_i}{v_1} - \frac{b_1 b_{i-1} v_i w_i}{ v_1 w_{i-1}} > 0$\newline
    Define the following neural network:
    \begin{align*}
        \Phi(\btheta_\delta;x) := &\sum_{j \in [n] \setminus \{i-1,i,1\}}v_j \relu{w_j \cdot x + b_j} + \left(1-\delta\frac{v_iw_i}{v_{i-1}w_{i-1}}\right)v_{i-1} \relu{w_{i-1}x +b_{i-1}} + \\
        &(1-\delta) v_i \relu{w_i x + b_i-\frac{\delta}{1-\delta}\left(\frac{w_ib_{i-1}}{w_{i-1}}-b_i\right)} + \\
        &v_1 \relu{\left(w_1 + \delta \frac{v_i w_i}{v_1}\right)x + \left(b_1+\delta \frac{v_i w_i b_{i-1}}{v_1w_{i-1}}\right)}
    \end{align*}
    For small enough $\delta$, the new breaking points do not cross any training point so for any training point $x_j$ we have that $\Phi(\btheta;x_j) = \Phi(\btheta_\delta, x_j)$ and in particular $\Phi(\btheta_\delta, x)$ satisfies the margin condition for each training point $x_j$. Also note that $\|\Phi(\btheta; x) - \Phi(\btheta_\delta;x)\|^2 \rightarrow 0$ as $\delta \rightarrow 0$. Let us compute $\|\Phi(\btheta_\delta, x)\|^2$:
    \begin{align*}
        \|\Phi(\btheta_\delta;x)\|^2 = &\sum_{j \in [n] \setminus \{i-1, i, 1\}} (v_j^2 + w_j^2 + b_j^2) + \left(1-\delta\frac{v_iw_i}{v_{i-1}w_{i-1}}\right)^2v_{i-1}^2 + w_{i-1}^2 + b_{i-1}^2 + \\
        &(1-\delta)^2v_i^2 + w_i^2 + \left(b_i-\frac{\delta}{1-\delta}(\frac{w_ib_{i-1}}{w_{i-1}}-b_i)\right)^2 + \\
        &v_1^2 + \left(w_1 + \delta \frac{v_iw_i}{v_1}\right)^2 + \left(b_1 + \delta \frac{v_iw_ib_{i-1}}{v_1w_{i-1}}\right)^2 = \\
        & \|\Phi(\btheta;x)\|^2 -2 \delta \left(v_i^2 + \frac{v_i w_i v_{i-1}}{w_{i-1}} + \frac{b_i}{1-\delta}(\frac{w_ib_{i-1}}{w_{i-1}} - b_i) - \frac{w_1 v_i w_i}{v_1} - \frac{b_1 b_{i-1} v_i w_i}{ v_1 w_{i-1}}\right) + O(\delta^2)\\ 
        &< \|\Phi(\btheta;x)\|^2
    \end{align*}

    \textbf{CASE 2}: $v_i^2 + \frac{v_i w_i v_{i-1}}{w_{i-1}} + \frac{b_i}{1-\delta}(\frac{w_ib_{i-1}}{w_{i-1}} - b_i) - \frac{w_1 v_i w_i}{v_1} - \frac{b_1 b_{i-1} v_i w_i}{ v_1 w_{i-1}} < 0$\newline
    Define the following neural network:
    \begin{align*}
        \Phi(\btheta_\delta;x) := &\sum_{j \in [n] \setminus \{i-1,i,1\}}v_j \relu{w_j \cdot x + b_j} + \left(1+\delta\frac{v_iw_i}{v_{i-1}w_{i-1}}\right)v_{i-1} \relu{w_{i-1}x +b_{i-1}} + \\
        &(1+\delta) v_i \relu{w_i x + b_i+\frac{\delta}{1+\delta}\left(\frac{w_ib_{i-1}}{w_{i-1}}-b_i\right)} + \\
        &v_1 \relu{\left(w_1 - \delta \frac{v_i w_i}{v_1}\right)x + \left(b_1-\delta \frac{v_i w_i b_{i-1}}{v_1w_{i-1}}\right)}
    \end{align*}
    The norm $\|\Phi(\btheta_\delta;x)\|^2$ is:
    \begin{align*}
        \|\Phi(\btheta_\delta;x)\|^2 = &\sum_{j \in [n] \setminus \{i-1, i, 1\}} (v_j^2 + w_j^2 + b_j^2) + \left(1+\delta\frac{v_iw_i}{v_{i-1}w_{i-1}}\right)^2v_{i-1}^2 + w_{i-1}^2 + b_{i-1}^2 + \\
        &(1+\delta)^2v_i^2 + w_i^2 + \left(b_i+\frac{\delta}{1-\delta}(\frac{w_ib_{i-1}}{w_{i-1}}-b_i)\right)^2 + \\
        &v_1^2 + \left(w_1 - \delta \frac{v_iw_i}{v_1}\right)^2 + \left(b_1 - \delta \frac{v_iw_ib_{i-1}}{v_1w_{i-1}}\right)^2 = \\
        & \|\Phi(\btheta;x)\|^2 -2 \delta \left(-v_i^2 - \frac{v_i w_i v_{i-1}}{w_{i-1}} - \frac{b_i}{1-\delta}(\frac{w_ib_{i-1}}{w_{i-1}} - b_i) + \frac{w_1 v_i w_i}{v_1} + \frac{b_1 b_{i-1} v_i w_i}{ v_1 w_{i-1}}\right) + O(\delta^2) \\
        & < \|\Phi(\btheta; x)\|^2
    \end{align*}

    \textbf{CASE 3}: $v_i^2 + \frac{v_i w_i v_{i-1}}{w_{i-1}} + \frac{b_i}{1-\delta}(\frac{w_ib_{i-1}}{w_{i-1}} - b_i) - \frac{w_1 v_i w_i}{v_1} - \frac{b_1 b_{i-1} v_i w_i}{ v_1 w_{i-1}} = 0$\newline
    In this case, define the following neural network:
    \begin{align*}
        \Phi(\btheta_\delta;x) := &\sum_{j \in [n] \setminus \{i-1,i,1\}}v_j \relu{w_j \cdot x + b_j} + \\
        &(1-\delta)v_{i-1}\relu{w_{i-1}x + b_{i-1} - \frac{\delta}{1-\delta}\left(\frac{w_{i-1}b_i}{w_i}-b_{i-1}\right)} + \\
        & \left(1-\delta \frac{v_{i-1}w_{i-1}}{v_i w_i}\right)v_i \relu{w_i x + b_i} + \\
        & v_1 \relu{\left(w_1 + \delta \frac{v_{i-1}w_{i-1}}{v_1}\right)x + b_1 + \delta \frac{v_{i-1}w_{i-1}b_i}{v_1 w_i}}
    \end{align*}
    Before computing the norm, note two observations:
    \begin{enumerate}
        \item By assumption, $v_iw_i = -v_{i-1}w_{i-1}$ and hence $\frac{v_i}{w_{i-1}} = - \frac{v_{i-1}}{w_i}$, \label{eq:assumption1}
        \item By definition of case 3, $v_i^2 + \frac{v_i w_i v_{i-1}}{w_{i-1}} + \frac{b_i}{1-\delta} \left(\frac{w_ib_{i-1}}{w_{i-1}} - b_i\right) - \frac{w_1 v_i w_i}{v_1} - \frac{b_1 b_{i-1} v_i w_i}{ v_1 w_{i-1}} = 0$, \label{eq:assumption2}
    \end{enumerate}
    Now let us compute the norm:
    \begin{align*}
        &\|\Phi(\btheta_\delta;x)\|^2 = \sum_{j \in [n] \setminus \{i-1, i, 1\}} (v_j^2 + w_j^2 + b_j^2) + (1-\delta)^2 v_{i-1}^2 + w_{i-1}^2 + \left(b_{i-1} - \frac{\delta}{1-\delta}(\frac{w_{i-1}b_i}{w_i}-b_{i-1})\right)^2 +\\
        & \left(1-\delta \frac{v_{i-1}w_{i-1}}{v_i w_i}\right)^2v_i^2 + w_i^2 + b_i^2 + \\
        &\left(w_1 + \delta \frac{v_{i-1}w_{i-1}}{v_1}\right)^2 + \left(b_1 + \delta \frac{v_{i-1}w_{i-1}b_i}{v_1 w_i}\right)^2 = \\
        &\|\Phi(\btheta; x)\|^2 - 2 \delta \left(v_{i-1}^2 + \frac{b_{i-1}}{1-\delta}(\frac{w_{i-1b_i}}{w_i}-b_{i-1}) + \frac{v_{i-1}w_{i-1}v_i}{w_i} - \frac{w_1v_{i-1}w_{i-1}}{v_1} - \frac{b_1b_iv_{i-1}w_{i-1}}{v_1 w_i}\right) + O(\delta^2)
    \end{align*}
    We need to show that \[v_{i-1}^2 + \frac{b_{i-1}}{1-\delta}\left(\frac{w_{i-1b_i}}{w_i}-b_{i-1}\right) + \frac{v_{i-1}w_{i-1}v_i}{w_i} - \frac{w_1v_{i-1}w_{i-1}}{v_1} - \frac{b_1b_iv_{i-1}w_{i-1}}{v_1 w_i} \neq 0\] (if $v_{i-1}^2 + \frac{b_{i-1}}{1-\delta}(\frac{w_{i-1b_i}}{w_i}-b_{i-1}) + \frac{v_{i-1}w_{i-1}v_i}{w_i} - \frac{w_1v_{i-1}w_{i-1}}{v_1} - \frac{b_1b_iv_{i-1}w_{i-1}}{v_1 w_i} < 0$ then, as in the previous cases, we change every $\delta$ to $-\delta$ and every $-\delta$ to $\delta$).\newline
    By observation \ref{eq:assumption1} we know that:
    \begin{align}
        &\frac{v_{i-1}w_{i-1}v_i}{w_i} = -\frac{v_iw_iv_i}{w_i} = -v_i^2 \label{eq:square}\\
        &\frac{v_iw_iv_{i-1}}{w_{i-1}} = -\frac{v_{i-1}w_{i-1}v_i}{w_{i-1}} = -v_{i-1}^2 \label{eq:square2}
    \end{align} 
    Combine this with observation 2 we get: 
    \begin{align}
        &v_i^2 + \frac{v_i w_i v_{i-1}}{w_{i-1}} + \frac{b_i}{1-\delta}(\frac{w_ib_{i-1}}{w_{i-1}} - b_i) - \frac{w_1 v_i w_i}{v_1} - \frac{b_1 b_{i-1} v_i w_i}{ v_1 w_{i-1}} = 0 \nonumber\\
        & \Rightarrow \frac{b_i}{1-\delta}(\frac{w_ib_{i-1}}{w_{i-1}}-b_i) - \frac{b_1b_{i-1}v_iw_i}{v_1w_{i-1}} = \frac{w_1v_iw_i}{v_1} - \frac{v_iw_iv_{i-1}}{w_{i-1}} - v_i^2 \nonumber\\
        & \Rightarrow \frac{b_i}{1-\delta}(\frac{w_ib_{i-1}}{w_{i-1}}-b_i) - \frac{b_1b_{i-1}v_iw_i}{v_1w_{i-1}} = v_{i-1}^2-v_i^2 + \frac{w_1v_iw_i}{v_1}, \label{eq:squareDifference}
    \end{align}
    where \eqref{eq:squareDifference} follows by substitution of \eqref{eq:square2}. Rewriting the equation in case 3 using \eqref{eq:square}, we need to show that 
    \[v_{i-1}^2 - v_i^2 + \frac{w_1v_iw_i}{v_1} + \frac{b_{i-1}}{1-\delta}(\frac{w_{i-1}b_i}{w_i}-b_{i-1}) - \frac{b_1b_iv_{i-1}w_{i-1}}{v_1 w_i} \neq 0\] and using \eqref{eq:squareDifference}, we can further simplify it to 
    \[\frac{b_i}{1-\delta}(\frac{w_ib_{i-1}}{w_{i-1}}-b_i) - \frac{b_1b_{i-1}v_iw_i}{v_1w_{i-1}} + \frac{b_{i-1}}{1-\delta}(\frac{w_{i-1}b_i}{w_i}-b_{i-1}) + \frac{b_1b_iv_iw_i}{v_1 w_i} \neq 0\] 
    That expression can be rewritten as \[\frac{b_1v_i(b_iw_{i-1}-b_{i-1}w_i)}{v_1w_{i-1}} + \frac{1}{1-\delta}(-b_{i-1}^2 + \frac{b_{i-1}b_iw_i}{w_{i-1}} + \frac{b_{i-1}b_iw_{i-1}}{w_i} - b_i^2)\]
    The only way this expression is equal to 0 for every sufficiently small $\delta > 0$ is when both summands are 0. let us look at the second summand.
    \begin{align*}
        &\frac{1}{1-\delta}(-b_{i-1}^2 + \frac{b_{i-1}b_iw_i}{w_{i-1}} + \frac{b_{i-1}b_iw_{i-1}}{w_i} - b_i^2) = \frac{1}{1-\delta}(-b_{i-1}^2 + b_{i-1}b_i(\frac{w_i}{w_{i-1}} + \frac{w_{i-1}}{w_i}) - b_i^2) \leq \\
        &\frac{1}{1-\delta}(-b_{i-1}^2 - 2b_{i-1}b_i - b_i^2) = -\frac{1}{1-\delta}(b_{i-1} + b_i)^2
    \end{align*}
    Where the inequality stems from the inequality $x + \frac{1}{x} \leq -2$ for every $x < 0$ (and equality holds when $x=-1$) where in our case $x=\frac{w_i}{w_{i-1}}$, and they have different signs so $\frac{w_i}{w_{i-1}} < 0$. For the summand to be 0 it must holds that $w_i = -w_{i-1}$ and $b_i = -b_{i-1}$, but that can not happen because if that would have happened then $-\frac{b_i}{w_i} = -\frac{b_{i-1}}{w_{i-1}}$; i.e., the two neurons have the same breakpoint.
    
    An example of a network with smaller norm can be found in Figure \ref{fig:localMin}.
\end{proof}
\begin{proof}[Proof of \thmref{thm:main_theorem_with_constant_intervals}]
    We prove that for each iteration, we add at least one training point to the set $S$. As the number of iteration is finite, and in each iteration the number of points added to $S$ are finite, $S$ is finite.

    If the condition in line 6 in Algorithm \ref{algorithm:large_set} is met, by \thmref{thm:non_constant_intervals} one of the points added to $S$ must be a training point, and the number of such points is at most 4.

    If both conditions at lines 9 and 11 are met, by \thmref{thm:local_minimum} either $y$ or $z$ is a training point.
    So the ratio of training points in $S$ is at least $\frac{1}{4}$
\end{proof}

\section{Proofs of lemmas and theorems in Section~\ref{sec:highDim}} \label{sec:highDimProofs}
We show an upper bound on the value $|\Phi(\btheta; \bx)|$ whenever $x$ is sampled according to $\mathcal{D}$ (that holds with high probability w.r.t.\ the initialization and $\bx$) and a lower bound whenever $\bx$ is in the training set (that holds with high probability w.r.t.\ the initialization). We prove that the lower bound is greater than the upper bound, thus giving us a way to differentiate between training and non training examples.

We use the same notations as in the previous section, which, for the sake of convenience, are specified again below.

\subsubsection*{Notations}
\begin{itemize}
    \item For a training point $\bx_i$, denote by $\sigma'_{i,j}$ the subgradient of $\relu{\bw_j^\top \bx_i + b_j}$. (As in 
    Section~\ref{sec:oneDimProofs})
    \item Let $J_+=\{j:v_j > 0\}$ and $J_-=\{j : v_j < 0\}$.
    \item Let $m$ be the value of the network's margin (as defined in Assumption~\ref{asmp:KKT}).
    \item Let $\delta = \max_{i \neq j} \left\{|\bx_i^\top \bx_j| \right\}$, $\Delta = \min_{i \in [n]} \left\{\|x_i\|^2 \right\}$ and $\Delta_{max} = \max_{i \in [n]} \{ \| x_i \|^2 \}$. 
    \item For $\bx \sim \mathcal{D}$ let $\delta_{\bx} = \max \{\delta, \max_{i \in [n]} \{|\bx_i^\top \bx | \}\}$.
\end{itemize}

\begin{lemma}\label{lma:fracIsoOne}
    Under Assumption~\ref{asmp:highDimAssumptions}, with probability at least $1-2\tau$
    $$O\left(\frac{n \cdot \delta}{\Delta}\right) = o_d(1)$$
\end{lemma}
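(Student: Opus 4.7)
The plan is to apply the union bound separately to each of the two parts of Assumption~\ref{asmp:highDimAssumptions}, then combine the two resulting high-probability events and take a ratio.

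First, I would control the numerator $n \cdot \delta = n \cdot \max_{i \neq j} |\bx_i^\top \bx_j|$. By Part~\ref{highDimPropOne} of Assumption~\ref{asmp:highDimAssumptions}, each of the $\binom{n}{2}$ unordered pairs $(i,j)$ satisfies $n \cdot |\bx_i^\top \bx_j| \leq o(d)$ with probability at least $1 - \tau/n^2$. A union bound over these $\binom{n}{2} \leq n^2/2$ pairs gives
\[
\Pr\!\left[\, n \cdot \delta \leq o(d)\,\right] \;\geq\; 1 - \binom{n}{2}\cdot \frac{\tau}{n^2} \;\geq\; 1 - \frac{\tau}{2}.
\]

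Second, I would control the denominator $\Delta = \min_{i \in [n]} \|\bx_i\|^2$. By Part~\ref{highDimPropTwo} of Assumption~\ref{asmp:highDimAssumptions}, each of the $n$ training points satisfies $\|\bx_i\|^2 \geq \Omega(d)$ with probability at least $1 - \tau/n$, so a union bound over $i \in [n]$ yields
\[
\Pr\!\left[\, \Delta \geq \Omega(d)\,\right] \;\geq\; 1 - n\cdot \frac{\tau}{n} \;=\; 1 - \tau.
\]

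A final union bound shows both events hold simultaneously with probability at least $1 - 2\tau$ (in fact $1 - 3\tau/2$, but the looser bound suffices). On this joint event,
\[
\frac{n\cdot \delta}{\Delta} \;\leq\; \frac{o(d)}{\Omega(d)} \;=\; o_d(1),
\]
which is exactly the claim. I do not expect any real obstacle here; the only subtlety is keeping the asymptotic notation honest, i.e.\ making sure the $o(d)$ in the numerator comes from a bound that is uniform over the $\binom{n}{2}$ pairs and that the $\Omega(d)$ in the denominator is uniform over the $n$ indices, which is precisely what the two union bounds deliver.
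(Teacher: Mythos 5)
Your proof is correct and follows essentially the same approach as the paper: a union bound over the $\binom{n}{2}$ pairs to control $n\cdot\delta$, a union bound over the $n$ indices to control $\Delta$, and a final union bound to combine the two events (the paper bounds the first failure probability by $\tau$ rather than $\tau/2$, but this is an immaterial difference).
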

\begin{proof}
    First, we prove, using the union bound, that $\Pr[n \cdot \delta \geq \Omega(d)] < \tau$. We have
    \begin{align*}
        &\Pr[n \cdot \delta \geq \Omega(d)] \leq \sum_{\substack{i,j=1 \\ i\neq j}}^{n} \Pr[n \cdot |\bx_i^\top \bx_j| \geq \Omega(d)] \leq \binom{n}{2} \cdot \frac{\tau}{n^2} < \tau
    \end{align*}
    Secondly, we prove using the union bound that $\Pr[\Delta < o(d)] < \tau$.
    \begin{align*}
        &\Pr[\Delta < o(d)] \leq \sum_{i=1}^n \Pr[\|\bx_i\|^2 < o(d)] \leq n \cdot \frac{\tau}{n} = \tau.
    \end{align*}
    Now, using the union bound again, we get 
    \begin{align*}
        &\Pr\left[\frac{n \cdot \delta}{\Delta} > \Omega_d(1)\right] \leq \Pr[\Delta < o(d)] + \Pr[n \cdot \delta \geq \Omega(d)] \leq 2\tau.
    \end{align*}
    And hence with probability at least $1-2\tau$ we have that
    $$O\left(\frac{n \cdot \delta}{\Delta}\right) = o_d(1)$$
\end{proof}

\begin{lemma}\label{lma:fracOfNewSampleUpperBound}
    Let $x \sim \mathcal{D}$. Under Assumption~\ref{asmp:highDimAssumptions}, with probability at least $1-2\tau$, we have
    $$O\left(\frac{n \cdot \delta_\bx}{\Delta}\right) = o_d(1).$$
\end{lemma}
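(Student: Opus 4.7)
The plan is to mirror the proof of the previous lemma (which handled $\delta$ instead of $\delta_\bx$), with the only change being that we need a slightly larger union bound to account for the extra $n$ inner products of the form $\bx_i^\top \bx$ hidden inside the definition of $\delta_\bx$.

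First I would control the numerator. Recall that $\delta_\bx = \max\{\delta, \max_{i\in[n]} |\bx_i^\top \bx|\}$, so the event $\{n\cdot \delta_\bx \ge \Omega(d)\}$ is contained in the union of the $\binom{n}{2}$ training-training events $\{n\cdot |\bx_i^\top \bx_j|\ge \Omega(d)\}$ for $i\neq j$, together with the $n$ training-test events $\{n\cdot |\bx_i^\top \bx|\ge \Omega(d)\}$. Since $\bx$ and each $\bx_i$ are i.i.d.\ from $\Dcal$, Assumption~\ref{asmp:highDimAssumptions}.\ref{highDimPropOne} applies to each of these pairs, so a union bound gives
\[
\Pr[n\cdot \delta_\bx \ge \Omega(d)] \;\le\; \left(\binom{n}{2}+n\right)\cdot \frac{\tau}{n^2} \;=\; \frac{n(n+1)}{2n^2}\tau \;\le\; \tau.
\]
Second, the denominator is handled exactly as before: by Assumption~\ref{asmp:highDimAssumptions}.\ref{highDimPropTwo} and a union bound over the $n$ training points, $\Pr[\Delta < o(d)] \le n\cdot \tau/n = \tau$.

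Finally, a union bound on the two bad events above gives $\Pr[n\cdot \delta_\bx / \Delta \ge \Omega_d(1)] \le 2\tau$, so with probability at least $1-2\tau$ we have $O(n\cdot \delta_\bx/\Delta) = o_d(1)$, as required. There is no real obstacle here; the only subtlety is making sure the union bound absorbs the extra $n$ cross-terms into the same $\tau$ budget, which works because $\binom{n}{2}+n \le n^2$ for all $n\ge 1$.
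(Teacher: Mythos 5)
Your proof is correct and follows essentially the same route as the paper: a union bound over the $\binom{n}{2}$ training--training pairs plus the $n$ training--test pairs to control $n\cdot\delta_\bx$, a union bound over the $n$ training points to control $\Delta$, and a final union bound on the two bad events. The only cosmetic difference is that you observe $\binom{n}{2}+n\le n^2$ holds for all $n\ge 1$, whereas the paper invokes $n\ge 3$ at that step, which is more restrictive than necessary.
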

\begin{proof}
    First, we prove, using the union bound, that $\Pr[\Delta < o(d)] < \tau$.
    \begin{align*}
        &\Pr[\Delta < o(d)] \leq \sum_{i=1}^n \Pr[\|\bx_i\|^2 < o(d)] \leq n \cdot \frac{\tau}{n} = \tau.
    \end{align*}
    Second, we prove using another union bound that $\Pr[n \cdot \delta_\bx \geq \Omega(d)] < \tau$. We have
    \begin{align*}
        &\Pr[n \cdot \delta_\bx \geq \Omega(d)] \leq \sum_{\substack{i,j=1 \\ i\neq j}}^{n} \Pr[n \cdot |\bx_i^\top \bx_j| \geq \Omega(d)] + \sum_{i=1}^n \Pr[n \cdot |\bx_i^\top \bx| \geq \Omega(d)] \\
        &\leq \binom{n+1}{2} \cdot \frac{\tau}{n^2} < \tau.
    \end{align*}
    Where in last inequality we used the fact that $n \geq 3$.
    Now, using the union bound again, we get 
    \begin{align*}
        &\Pr\left[\frac{n \cdot \delta_\bx}{\Delta} > \Omega_d(1)\right] \leq \Pr[\Delta < o(d)] + \Pr[n \cdot \delta_\bx \geq \Omega(d)] \leq 2\tau.
    \end{align*}
    And hence with probability at least $1-2\tau$ we have that
    $$O\left(\frac{n \cdot \delta_\bx}{\Delta}\right) = o_d(1).$$
\end{proof}

The following 2 lemmas follow arguments from \cite{lemmasOnLambdas}. 
\begin{lemma}\label{lma:upper_bound_on_sum}
    With probability at least $1 - 2\tau$ over the choice of the training set, for all $l \in [n]$ we have
    $$\max\left\{\sum_{j \in J_+}v_j^2 \lambda_l \sigma'_{l,j}, \sum_{j \in J_-}v_j^2 \lambda_l \sigma'_{l,j}\right\} \leq \frac{m}{\Delta + 1-2(\delta+1)(n-1)}.$$
\end{lemma}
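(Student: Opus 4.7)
The plan is to exploit complementary slackness from the KKT conditions on each margin point to write $m$ as an explicit quadratic form in the network parameters, and then isolate the diagonal contribution using the near-orthogonality of the dataset.

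First, since every summand in both maxima is nonnegative and contains the factor $\lambda_l$, the inequality is trivial when $\lambda_l=0$, so I assume $\lambda_l>0$. By complementary slackness \eqref{eq:zero_lam} this gives $y_l\Phi(\btheta;\bx_l)=m$. I would substitute the KKT identities \eqref{eq:w_j} and \eqref{eq:b_j} into $\bw_j^\top\bx_l+b_j$ to obtain
\begin{equation*}
\bw_j^\top \bx_l + b_j \;=\; v_j \sum_{i=1}^n \lambda_i y_i (\bx_i^\top \bx_l + 1)\sigma'_{i,j},
\end{equation*}
and then use $\relu{\bw_j^\top\bx_l+b_j}=\sigma'_{l,j}(\bw_j^\top\bx_l+b_j)$ together with $\Phi(\btheta;\bx_l)=\sum_j v_j\relu{\bw_j^\top\bx_l+b_j}$ to get, after swapping the order of summation,
\begin{equation*}
m \;=\; \sum_{i=1}^n \lambda_i y_iy_l (\bx_i^\top \bx_l + 1)\sum_{j=1}^k v_j^2 \sigma'_{l,j}\sigma'_{i,j}.
\end{equation*}
Isolating the $i=l$ summand and writing $A_\pm^{l,i}\coloneqq\sum_{j\in J_\pm} v_j^2\sigma'_{l,j}\sigma'_{i,j}\ge 0$, this rearranges into
\begin{equation*}
\lambda_l(\|\bx_l\|^2+1)\bigl(A_+^{l,l}+A_-^{l,l}\bigr) \;=\; m - \sum_{i\neq l}\lambda_iy_iy_l(\bx_i^\top\bx_l+1)\bigl(A_+^{l,i}+A_-^{l,i}\bigr).
\end{equation*}

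Next, I would condition on the event, of probability at least $1-2\tau$ by essentially the same union bound used in Lemma~\ref{lma:fracIsoOne}, that $\|\bx_l\|^2\ge\Delta$ and $|\bx_i^\top\bx_l|\le\delta$ hold simultaneously for all $l$ and all $i\neq l$. On this event $|\bx_i^\top\bx_l+1|\le\delta+1$, and since $\sigma'_{l,j}\in[0,1]$ we have $A_\pm^{l,i}\le A_\pm^{i,i}$. Plugging these bounds into the identity above,
\begin{equation*}
(\Delta+1)\lambda_l\bigl(A_+^{l,l}+A_-^{l,l}\bigr) \;\leq\; m + (\delta+1)\sum_{i\neq l}\lambda_i\bigl(A_+^{i,i}+A_-^{i,i}\bigr).
\end{equation*}

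The last and most delicate step is to close this family of inequalities into a uniform bound on each of $\lambda_l A_+^{l,l}$ and $\lambda_l A_-^{l,l}$; the obstacle is that the scalar identity $y_l\Phi(\btheta;\bx_l)=m$ only pins down the signed combination of $J_+$ and $J_-$ contributions, so the two quantities cannot be separated locally at a single $l$. My fix is to pass to the joint maximum $S\coloneqq\max_{l\in[n]}\max\{\lambda_l A_+^{l,l},\,\lambda_l A_-^{l,l}\}$ and pick an index $l^\star$ (together with a sign) achieving it. Then $\lambda_{l^\star}(A_+^{l^\star,l^\star}+A_-^{l^\star,l^\star})\ge S$, while for every $i$, $\lambda_i(A_+^{i,i}+A_-^{i,i})\le 2S$ simply by definition of $S$. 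Applying the previous inequality at $l=l^\star$ yields $(\Delta+1)S\le m+2(\delta+1)(n-1)S$, whence $S\le m/(\Delta+1-2(\delta+1)(n-1))$ whenever the denominator is positive; since $\lambda_l A_+^{l,l}\le S$ and $\lambda_l A_-^{l,l}\le S$ for every $l$, this gives the stated bound. The factor of $2$ in the denominator is precisely the price of having to control the two sign classes $J_+$ and $J_-$ through a single scalar maximum.
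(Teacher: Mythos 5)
Your proof takes a genuinely different route than the paper's. Where the paper proceeds by WLOG taking the larger of the two sign-class maxima, using the one-sided inequality $\relu{z}\ge z$ on the $J_+$ side (and bounding the $J_-$ contribution from below), and then splitting into cases on $y_k=\pm1$ with a contradiction argument for the wrong sign, you instead use the exact identity $\relu{z}=\sigma'z$ to write $m$ as an equality, treat $J_+$ and $J_-$ symmetrically, and close the argument with a single joint maximum $S$ over both indices $l$ and both sign classes. This symmetrizes the argument and eliminates both the WLOG and the case split. Both routes yield the same factor of $2$ in the denominator, but for you it arises from $\lambda_i(A_+^{i,i}+A_-^{i,i})\le 2S$, whereas the paper pays the cross-term penalty $(\delta+1)(n-1)\alpha$ twice, once for each sign class.

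There is, however, a gap you should close. Your $i=l$ term is $\lambda_l(\|\bx_l\|^2+1)\sum_{j\in J_\pm}v_j^2(\sigma'_{l,j})^2$, with a squared subgradient, while the lemma's target carries a single $\sigma'_{l,j}$; and your step $A_\pm^{l,i}\le A_\pm^{i,i}$ (i.e.\ $\sum_j v_j^2\sigma'_{l,j}\sigma'_{i,j}\le\sum_j v_j^2(\sigma'_{i,j})^2$) does not follow merely from $\sigma'_{l,j}\in[0,1]$: it fails whenever $\sigma'_{i,j}$ is fractional and $\sigma'_{l,j}>\sigma'_{i,j}$, which the paper's convention allows precisely when a training point sits on a ReLU kink. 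Both issues trace to reusing the same symbol for two logically independent things: the factor in the pointwise decomposition $\relu{z}=\sigma'z$ of the forward pass, and the subgradient entering the KKT identity for $\bw_j,b_j$. The fix is cheap: in the forward-pass decomposition use any $\{0,1\}$-valued indicator $\tilde\sigma'_{l,j}$ with $\tilde\sigma'_{l,j}=1$ at ties (valid since the identity only needs to hold where the ReLU argument vanishes anyway), while keeping the KKT subgradients $\sigma'_{i,j}$ inside the expansion of $\bw_j^\top\bx_l+b_j$. Then the diagonal term becomes $\sum_j v_j^2\tilde\sigma'_{l,j}\sigma'_{l,j}=\sum_j v_j^2\sigma'_{l,j}$, exactly the lemma's quantity, and the cross terms obey $\sum_j v_j^2\tilde\sigma'_{l,j}\sigma'_{i,j}\le\sum_j v_j^2\sigma'_{i,j}$ because $\tilde\sigma'\le1$, so your extremal argument closes cleanly. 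The paper sidesteps the issue entirely by never squaring a subgradient: its $\relu{z}\ge z$ drops the forward-pass factor for the $J_+$ part at the cost of turning the identity into a lower bound on $m$, which is all that is needed.
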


\begin{proof}
     W.l.o.g.\ $\max_{i \in [n]}\left(\sum_{j \in J_+}v_j^2\lambda_i\sigma'_{i,j}\right) \geq  \max_{i \in [n]}\left(\sum_{j \in J_-}v_j^2\lambda_i\sigma'_{i,j}\right)$ (the other direction is similar). Denote $\alpha = \max_{i \in [n]}\left(\sum_{j \in J_+}v_j^2\lambda_i\sigma'_{i,j}\right)$ and $k \in \text{argmax}_{i \in [n]}\left(\sum_{j \in J_+}v_j^2\lambda_i\sigma'_{i,j}\right)$.\newline
    If $\lambda_k=0$, we are done. Otherwise, by KKT we know that $y_k\Phi(\btheta; x_k)=m$.\newline
    By \eqref{eq:w_j} and \eqref{eq:b_j} we have for all $j$
    \begin{align}
        &\bw_j^\top\bx_k + b_j = \sum_{i=1}^n \lambda_iy_i\sigma'_{i,j}v_j(\bx_i^\top \bx_k + 1) = \lambda_ky_k\sigma'_{k,j}v_j(\|\bx_k\|^2+1) + \sum_{i \neq k}\lambda_iy_i\sigma'_{i,j}v_j(\bx_i^\top\bx_k+1)\label{eq:basic_kkt}
    \end{align}
    Consider 2 cases:\newline
    \textbf{CASE 1:} assume $y_k=1$.
    \begin{align}
        m = y_k\Phi(\btheta;\bx_k) = \sum_{i=1}^nv_i\relu{\bw_i^\top\bx_k+b_i}
        \geq \sum_{j \in J_+}v_j(\bw_j^\top\bx_k+b_j) + \sum_{j \in J_-}v_j \relu{\bw_j^\top\bx_k + b_j}. \label{eq:relu_ineq}
    \end{align}
    Using the assumption that $y_k=1$ and \eqref{eq:basic_kkt} we get
    \begin{align}
        &\sum_{j \in J_+}v_j(\bw_j^\top \bx_k + b_j)=\sum_{j \in J_+}\left(\lambda_k\sigma'_{k,j}v_j^2(\|\bx_k\|^2+1) + \sum_{i \neq k}\lambda_iy_i\sigma'_{i,j}v_j^2(\bx_i^\top \bx_k + 1)\right) \nonumber \\
        &\geq \sum_{j \in J_+} \lambda_k v_j^2 \sigma'_{k,j}(\Delta+1) - (\delta+1)\sum_{j \in J_+}\sum_{i \neq k}\lambda_i\sigma'_{i,j}v_j^2 \nonumber \\
        &\geq (\Delta+1) \alpha -(\delta+1)(n-1) \alpha. \label{eq:jplus_ineq_positive_label}
    \end{align}
    Using $y_k=1$ and \eqref{eq:basic_kkt} again we get
    \begin{align}
        &\sum_{j \in J_-}v_j \relu{\bw_j^\top\bx_k + b_j} = \sum_{j \in J_-}v_j \relu{\lambda_k y_k \sigma'_{k,j}v_j(\|x_k\|^2+1) + \sum_{i \neq k}\lambda_i y_i \sigma'_{i,j}v_j(\bx_i^\top \bx_k + 1)} \nonumber \\
        &\geq \sum_{j \in J_-}v_j \relu{\sum_{i \neq k}\lambda_i y_i \sigma'_{i,j}v_j(\bx_i^\top \bx_k + 1)} \geq \sum_{j \in J_-}v_j \relu{\sum_{i \neq k}\lambda_i \sigma'_{i,j} |v_j|\cdot |\bx_i^\top \bx_k + 1|} \nonumber \\
        &\geq \sum_{j \in j_-}v_j \relu{\sum_{i \neq k}\lambda_i \sigma'_{i,j}|v_j|(\delta+1)} \nonumber \\
        &= -(\delta+1)\sum_{j \in j_-}\sum_{i \neq k}\lambda_i \sigma'_{i,j}v_j^2 \geq -(\delta+1)(n-1)\alpha. \label{eq:jminus_ineq_positive_label}
    \end{align}
    Combining \eqref{eq:relu_ineq}, \eqref{eq:jplus_ineq_positive_label} 
    and \eqref{eq:jminus_ineq_positive_label} 
    we get
    \begin{align*}
        m &\geq (\Delta+1)\alpha - (\delta+1)(n-1)\alpha - (\delta + 1)(n-1)\alpha = \alpha(\Delta+1-2(\delta + 1 )(n-1)) \\
        &\Rightarrow \alpha \leq \frac{m}{\Delta + 1-2(\delta+1)(n-1)}.
    \end{align*}
    \newline
    \textbf{CASE 2:} Assume $y_k=-1$.\newline
    First, we show that for every $j \in J_+$ we have  $$\lambda_k \sigma'_{k,j} v_j \leq \frac{\delta + 1}{\Delta + 1} \sum_{i \neq k} \lambda_i \sigma'_{i,j} v_j.$$
    Fix some $j \in J_+$. If $\sigma'_{j,k}=0$ then 
    $$\lambda_k \sigma'_{k,j} v_j =0 \leq \frac{\delta + 1}{\Delta + 1} \sum_{i \neq k} \lambda_i \sigma'_{i,j} v_j.$$
    Otherwise, by the definition of $\sigma'_{k,j}$ we have $\bw_j^\top\bx_k + b_j \geq 0$.
    \begin{align*}
        & 0 \leq \bw_j^\top\bx_k + b_j = \sum_{i \neq k}\lambda_iy_i\sigma'_{i,j}v_j(\bx_i^\top\bx_k + 1) + \lambda_ky_k\sigma'_{k,j}v_j(\|\bx_k\|^2 + 1)\\
        &\leq \sum_{i \neq k}\lambda_i\sigma'_{i,j}v_j(\delta+1) - \lambda_k\sigma'_{k,j}v_j(\Delta+1) \\
        & \Rightarrow \lambda_k\sigma'_{k,j}v_j \leq \frac{\delta+1}{\Delta+1}\sum_{i \neq k}\lambda_i\sigma'_{i,j}v_j.
    \end{align*}
    Now we can upper bound the sum $ \sum_{j \in J_+} \lambda_k \sigma'_{k,j} v_j^2.$
    \begin{align*}
        \sum_{j \in J_+} \lambda_k \sigma'_{k,j} v_j^2 &\leq \frac{\delta+1}{\Delta+1}\sum_{j \in J_+} \sum_{i \neq k} \lambda_i \sigma'_{i,j} v_j^2 \\
        & \leq \frac{\delta+1}{\Delta+1} (n-1) \cdot \max_{i \in [n]} \left( \sum_{j \in J_+} \lambda_i \sigma'_{i,j} v_j^2 \right) \\
        & < \max_{i \in [n]} \left( \sum_{j \in J_+} \lambda_i \sigma'_{i,j} v_j^2 \right) = \sum_{j \in J_+} \lambda_k \sigma'_{k,j} v_j^2
    \end{align*}
    Where the last inequality holds with probability at least $1 - 2\tau$ (using \ref{lma:fracIsoOne} and the last equality in the definition of $k$). We get $\sum_{j \in J_+} \lambda_k \sigma'_{k,j} v_j^2 <  \sum_{j \in J_+} \lambda_k \sigma'_{k,j} v_j^2$ - a contradiction. Consequently, $y_k=1$ and we have already proved that case.
\end{proof}

\begin{lemma}\label{lma:lower_bound_on_sum}
    With probability at least $1 - 2\tau$ over the choice of the training set, for all $l \in [n]$ such that $y_l=1$ we have
    $$\sum_{j \in J_+}v_j^2 \lambda_l \sigma'_{l,j} \geq \left(m-(\delta+1)(n-1)\frac{m}{\Delta + 1-2(\delta+1)(n-1)}\right) \cdot \frac{1}{\Delta_{max}+1},$$
    and for all $l \in [n]$ such that $y_l=-1$ we have
    $$\sum_{j \in J_-}v_j^2 \lambda_l \sigma'_{l,j} \geq \left(m-(\delta+1)(n-1)\frac{m}{\Delta + 1-2(\delta+1)(n-1)}\right) \cdot \frac{1}{\Delta_{max}+1}.$$
\end{lemma}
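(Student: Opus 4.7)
The plan is to start from the margin constraint $m \leq y_l \Phi(\btheta;\bx_l)$ guaranteed by Assumption~\ref{asmp:KKT}, expand each ReLU term via the KKT expressions~\eqref{eq:w_j}--\eqref{eq:b_j}, separate the diagonal $i = l$ contribution from the off-diagonal $i \neq l$ cross-terms, and then use Lemma~\ref{lma:upper_bound_on_sum} to control the latter. Rearranging will yield the desired lower bound on $\sum_{j \in J_+} v_j^2 \lambda_l \sigma'_{l,j}$. I carry out the case $y_l = 1$; the case $y_l = -1$ is entirely symmetric under swapping the roles of $J_+$ and $J_-$.

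The first step is to split the network output by the sign of $v_j$. Since $v_j [\bw_j^\top \bx_l + b_j]_+ \leq 0$ for every $j \in J_-$ and $y_l = 1$, we obtain $m \leq \sum_{j \in J_+} v_j [\bw_j^\top \bx_l + b_j]_+$. Writing $[z]_+ = \sigma'_{l,j}\, z$ and substituting~\eqref{eq:w_j}--\eqref{eq:b_j} yields
\[
v_j [\bw_j^\top \bx_l + b_j]_+ = v_j^2 \sigma'_{l,j} \sum_{i=1}^n \lambda_i y_i \sigma'_{i,j}(\bx_i^\top \bx_l + 1).
\]
Isolate the $i = l$ term, which equals $v_j^2 (\sigma'_{l,j})^2 \lambda_l (\|\bx_l\|^2 + 1)$ and is at most $v_j^2 \sigma'_{l,j} \lambda_l (\Delta_{max} + 1)$, using $(\sigma'_{l,j})^2 \leq \sigma'_{l,j}$ (as $\sigma'_{l,j} \in [0,1]$) and $\|\bx_l\|^2 \leq \Delta_{max}$.

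For the off-diagonal contribution, bound each term by $(\delta+1)\lambda_i \sigma'_{i,j}$ using $y_i \in \{\pm 1\}$ and $|\bx_i^\top \bx_l| \leq \delta$, drop $\sigma'_{l,j} \leq 1$, sum over $j \in J_+$, and swap the order of summation to get $(\delta+1) \sum_{i \neq l} \sum_{j \in J_+} v_j^2 \lambda_i \sigma'_{i,j}$. Applying Lemma~\ref{lma:upper_bound_on_sum} to each inner sum bounds the total off-diagonal contribution by $(\delta+1)(n-1) \cdot \frac{m}{\Delta + 1 - 2(\delta+1)(n-1)}$. Putting everything together gives
\[
m \leq (\Delta_{max}+1) \sum_{j \in J_+} v_j^2 \sigma'_{l,j} \lambda_l + (\delta+1)(n-1) \cdot \frac{m}{\Delta + 1 - 2(\delta+1)(n-1)},
\]
and rearranging yields the claim. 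The high-probability $1 - 2\tau$ event is inherited directly from the invocation of Lemma~\ref{lma:upper_bound_on_sum}.

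The main obstacle is essentially bookkeeping rather than conceptual: because we are upper-bounding the right-hand side of $m \leq \cdots$, every slackening step---$(\sigma'_{l,j})^2 \leq \sigma'_{l,j}$, $\sigma'_{l,j} \leq 1$, $|\bx_i^\top \bx_l + 1| \leq \delta + 1$, $\|\bx_l\|^2 \leq \Delta_{max}$, and the application of the previous lemma---must be oriented in the correct direction to preserve the overall inequality. A minor subtlety is that the self-term is controlled through $\Delta_{max}$ (an upper bound on $\|\bx_l\|^2$) while the cross-terms are controlled through $\Delta$ (a lower bound that enters via Lemma~\ref{lma:upper_bound_on_sum}), so both quantities legitimately appear in the final expression and should not be conflated.
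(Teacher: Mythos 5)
Your proof is correct and follows essentially the same route as the paper: start from the margin inequality restricted to $J_+$, expand via the KKT expressions for $\bw_j$ and $b_j$, split the diagonal term (controlled through $\Delta_{max}$) from the off-diagonal terms (controlled through Lemma~\ref{lma:upper_bound_on_sum}), and rearrange. The only cosmetic difference is that you use the identity $\relu{z}=\sigma'_{l,j}\,z$ together with $(\sigma'_{l,j})^2\le\sigma'_{l,j}\le1$, whereas the paper passes through $\relu{z}\le|z|$ and the triangle inequality; the two yield the same bound.
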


\begin{proof}
    We begin by showing the above for $J_+$ first. Let $k \in [n]$ be such that $y_k=1$.
    We have
    $$m \leq \Phi(\btheta; \bx_k) = \sum_{j \in J}v_j \relu{\bw_j^\top \bx_k + b_j} \leq \sum_{j \in J+}v_j \relu{\bw_j^\top \bx_k + b_j} \leq \sum_{j \in J_+}v_j |\bw_j^\top\bx_k + b_j|.$$
    Let us bound it from above as follows
    \begin{align*}
        &\sum_{j \in J_+}v_j \left| \lambda_k y_k \sigma'_{k,j}v_j(\|\bx_k\|^2+1) + \sum_{i \neq k}\lambda_i y_i \sigma'_{i,j}v_j(\bx_i^\top \bx_k + 1) \right| \\
        &\leq \sum_{j \in J_+}v_j\left(\lambda_k \sigma'_{k,j}v_j(\|\bx_k\|^2 + 1) + \sum_{i \neq k} \lambda_i \sigma'_{i,j}v_j |\bx_i^\top \bx_k + 1| \right) \\
        &= \sum_{j \in J_+}\left(\lambda_k \sigma'_{k,j}v_j^2(\|\bx_k\|^2 + 1) + \sum_{i \neq k} \lambda_i \sigma'_{i,j}v_j^2 |\bx_i^\top \bx_k + 1| \right) \\
        &\leq \sum_{j \in J_+} \left((\Delta_{max}+1) \lambda_k \sigma'_{k,j}v_j^2 + (\delta+1) \sum_{i \neq k} \lambda_i \sigma'_{i,j}v_j^2 \right) \\
        &=(\Delta_{max}+1)\sum_{j \in J_+}\lambda_k \sigma'_{k,j}v_j^2 + (\delta+1)\sum_{j \in J_+}\sum_{i \neq k} \lambda_i \sigma'_{i,j}v_j^2
    \end{align*}
    Using \ref{lma:upper_bound_on_sum} we get with probability as least $1 - 2\tau$
    \begin{align*}
        &m \leq (\Delta_{max}+1) \sum_{j \in J_+} \lambda_k \sigma'_{k,j}v_j^2 + (\delta+1)(n-1)\frac{m}{\Delta + 1-2(\delta+1)(n-1)} \\
        &\Rightarrow \sum_{j \in J_+}\lambda_k \sigma'_{k,j}v_j^2 \geq \left(m-(\delta+1)(n-1)\frac{m}{\Delta + 1-2(\delta+1)(n-1)}\right) \cdot \frac{1}{\Delta_{max}+1}.
    \end{align*}
    Analogous arguments yield the same inequality for $J_-$.
\end{proof}

\begin{proof}[Proof of \thmref{thm:membershipInference}]
    Assume that $\bx$ is in the training data, i.e.\ there exists $k \in [n]$ such that $\bx=\bx_k$. Assume w.l.o.g.\ that $y_k\Phi(\btheta;\bx_k) > 0$, i.e.\ $y_k=1$ (the case $y_k=-1$ is similar).\newline

    With probability of at least $1 - 2\tau$ Lemma~\ref{lma:fracIsoOne}, Lemma~\ref{lma:upper_bound_on_sum} and Lemma~\ref{lma:lower_bound_on_sum} hold. From Lemma~\ref{lma:lower_bound_on_sum} we have that
    $$\sum_{j \in J_+} v_j^2 \lambda_k \sigma'_{k,j} \geq \left(m-(\delta+1)(n-1)\frac{m}{\Delta + 1-2(\delta+1)(n-1)}\right) \cdot \frac{1}{\Delta_{max}+1}$$ 
    By Lemma~\ref{lma:fracIsoOne} we have
    $$O\left(\frac{n \cdot \delta}{\Delta}\right) = o_d(1),$$
    which means that
    $$\left(m-(\delta+1)(n-1)\frac{m}{\Delta + 1-2(\delta+1)(n-1)}\right) \cdot \frac{1}{\Delta_{max}+1} > 0,$$
    implying that $\lambda_k > 0$ (since otherwise the above sum will equal zero), and also that $\bx_k$ is on the margin, and hence $\Phi(\btheta; \bx_k) = m$.

If $\bx \sim \mathcal{D}$, then
\begin{align*}
    &|\Phi(\btheta;x)| = \left | \sum_{j \in J_+}v_j \sum_{i \in [n]}\lambda_i y_i \sigma'_{i,j} v_j (\bx_i^\top \bx + 1) + \sum_{j \in J_-}v_j \sum_{i \in [n]}\lambda_i y_i \sigma'_{i,j} v_j (\bx_i^\top \bx + 1) \right | \\
    &\leq \sum_{j \in J_+} \sum_{i \in [n]} \lambda_i \sigma'_{i,j} v_j^2 |\bx_i^\top \bx + 1| + \sum_{j \in J_-} \sum_{i \in [n]} \lambda_i \sigma'_{i,j} v_j^2 |\bx_i^\top \bx + 1| \\
    &\leq 2 \cdot n \cdot (\delta_\bx + 1) \cdot \frac{m}{\Delta + 1-2(\delta+1)(n-1)} = O\left(\frac{n \cdot m \cdot \delta_\bx}{\Delta}\right)
\end{align*}
Where the second inequality holds by Lemma~\ref{lma:upper_bound_on_sum} and last equality holds by Lemma~\ref{lma:fracIsoOne}. By Lemma~\ref{lma:fracOfNewSampleUpperBound} we have with probability at least $1 - 2\tau$
\begin{align*}
    &O\left(\frac{n \cdot m \cdot \delta_\bx}{\Delta}\right) = m \cdot O\left(\frac{n \cdot \delta_\bx}{\Delta}\right) = o_d(m)
\end{align*}
Using the union bound on the previous events, we have that with probability at least $1 - 4\tau$, if $\bx \sim \mathcal{D}$ then $|\Phi(\btheta; \bx)| = o_d(m)$. 
\end{proof}

\begin{remark}[On the lower bound of the margin] \label{remark:lowerBoundOfMargin}
    From \thmref{thm:membershipInference} we know that w.h.p.\ at least $\frac{n}{2}$ training points lie on the margin. Our loss function is
    $$\ell(\Phi(\btheta;\bx) \cdot y) = \log(1+e^{-y \cdot \Phi(\btheta;\bx)})$$ so we have that
    \begin{align*}
        &\frac{1}{2e} > L(\btheta) = \frac{1}{n}\sum_{i=1}^n \ell(\Phi(\bx_i), y_i) \geq \frac{1}{n} \cdot \frac{n}{2} \cdot \log(1 + e^{-m}) \\
        & \Rightarrow \frac{1}{e} > \log(1 + e^{-m})
    \end{align*}
   Now we can derive a lower bound on $m$:
    \begin{align*}
        &\log(1 + e^{-m}) < \frac{1}{e} \Rightarrow 1 + e^{-m} < e^{e^{-1}} \Rightarrow e^{-m} < e^{e^{-1}} \Rightarrow m > \frac{1}{e}.
    \end{align*}
    An analogous argument shows a similar bound for the exponential loss $\ell(x) = e^{-x}$.
\end{remark}

\begin{remark}[On the upper bound of the margin]\label{remark:upperBoundOfMargin}
    When training a neural network using gradient-based methods, the training process usually halts once the gradient is sufficiently small. When considering the exponential or logistic losses as in our case, a large margin implies a small loss, which in turn implies that the gradient is small. This suggests that making further progress when the margin is large becomes very difficult, and the training process is expected to stop. More formally, recall the logistic loss function (a similar argument implies the same result for the exponential loss):
    $$\ell(\Phi(\btheta;\bx) \cdot y) = \log(1+e^{-y \cdot \Phi(\btheta;\bx)}).$$
    This function is monotonically decreasing in the expression $y\Phi(\btheta;\bx)$, so the loss is maximized for points that are on the margin, and we can upper bound
    $$\left|\frac{\partial \ell(\Phi(\btheta; \bx) \cdot y)}{\partial \Phi(\btheta;\bx)}\right|  = \left|\frac{-y \cdot \Phi(\btheta;\bx) \cdot e^{-y \cdot \Phi(\btheta;\bx)}}{1+e^{-y \cdot \Phi(\btheta;\bx)}} \right| \leq \left|\frac{me^{-m}}{1+e^{-m}} \right|.$$
    The above yields
    \begin{align*}
        &\left|\frac{\partial L(\btheta)}{\partial \btheta_j} \right| \leq \frac{1}{n}\sum_{i=1}^n\left|\frac{\partial \ell(\Phi(\btheta;\bx_i)\cdot y_i)}{\partial\Phi(\btheta;\bx_i)}\right|\cdot \left| \frac{\partial \Phi(\btheta; \bx_i)}{\partial \btheta_j}\right| \leq \operatorname{poly}(d) \cdot \left|\frac{me^{-m}}{1+e^{-m}} \right|,
    \end{align*}
    which allows us to bound the norm of the gradient by:
    $$\| \nabla_{\btheta}L(\btheta) \| \leq w \cdot \operatorname{poly}(d) \cdot \left|\frac{me^{-m}}{1+e^{-m}} \right|  = \operatorname{poly}(d) \cdot \left|\frac{me^{-m}}{1+e^{-m}} \right|,$$
    where $w$ denotes the width of the network which we assume to be polynomial in $d$ (since otherwise even making a prediction is computationally inefficient).
    
    If, for example, the margin is $m=\log^2{d} = o(\sqrt{d})$, we get that
    $$\| \nabla_{\btheta}L(\btheta) \| \leq \operatorname{poly}(d)\left|\frac{\log^2{d}e^{-\log^2{d}}}{{1+e^{-\log^2{d}}}} \right| \leq \operatorname{poly}(d) \log^2{d} \cdot e^{-\log^2{d}} = \operatorname{poly}(d) \log^2{d} \cdot d^{-\log{d}},$$
    which is smaller than any inverse polynomial in $d$. Hence, if we train for at most polynomially many iterations and label
    all the data points correctly (i.e.\ the margin is strictly positive), then training effectively stops when the margin
    reaches $O(\log^2{d}) = o(\sqrt{d})$, and all the data points on the margin (which consist
    of at least one point) will have an output of magnitude $O(\text{polylog}(d))$.

\end{remark}

\section{High-dimensional attacks in the statistically learnable case}\label{section:learnableCase}

In this appendix, we show that Item \ref{asmp:mixtureOfGaussians} exemplifies a setting where Assumption~\ref{asmp:highDimAssumptions} is satisfied, yet the distribution being considered is statistically learnable. This was shown in several recent works, which considered the optimization of a shallow neural network, in a setting similar to ours.

Consider, for example, the setting studied in \citet{xu2023benignoverfittinggrokkingrelu}. In that paper, the authors prove a generalization result under the assumption of a certain target distribution of a mixture of four Gaussians. Such a distribution is captured by Item \ref{asmp:mixtureOfGaussians} in our examples for distributions which satisfy Assumption~\ref{asmp:highDimAssumptions}, which indicates that our proposed membership inference attack will work. Specifically, to make sure that both Assumption~\ref{asmp:highDimAssumptions} and the requirements made in \citet{xu2023benignoverfittinggrokkingrelu} are satisfied, it must also hold that:
\begin{itemize}
    \item The norm of each mean satisfies $\| \bmu^{(i)}\|^2 \ge \Omega(n^{0.51}\sqrt{d})$.
    \item The dimension of the feature space satisfies $d \geq \Omega(n^2 \max\{\|\bmu^{(i)}\|^2\})$.
    \item The number of neurons satisfies $k \geq \Omega(n^{0.02})$.
\end{itemize}

Quite more precisely, their theorem states the following:
\begin{theorem}[\citet{xu2023benignoverfittinggrokkingrelu}, Theorem 3.1, informal]
    Suppose that the above assumptions are satisfied, then with high probability over the training set and the initialization of the weights, we have
    $$ \Pr_{(\bx,y)\sim \mathcal{D}}[y \neq \sign(\Phi(\btheta; \bx))] \leq \exp(-\Omega(n^{2.01}))$$
\end{theorem}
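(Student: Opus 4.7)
The plan is to follow the standard signal–noise decomposition that underlies modern benign-overfitting analyses of shallow ReLU networks. Write each hidden-layer weight as $\bw_j(t) = \bw_j(0) + \sum_{i=1}^{k}\alpha_{j,i}(t)\bmu^{(i)} + \sum_{l=1}^{n}\beta_{j,l}(t)\bxi_l$, where $\bxi_l \coloneqq \bx_l - \bmu^{(y_l)}$ is the noise residual of the $l$-th training point. This parameterization is preserved by gradient descent on the exponential/logistic loss, so the task reduces to tracking the evolution of the scalar coefficients $\alpha_{j,i}, \beta_{j,l}$. Before launching the dynamics, I would establish the concentration scaffolding: under $d \geq \Omega(n^2\max_i\|\bmu^{(i)}\|^2)$ and $\|\bmu^{(i)}\|^2 \geq \Omega(n^{0.51}\sqrt d)$, a union bound over Gaussian chaos inequalities yields $\|\bxi_l\|^2 = \Theta(d)$, $|\langle\bxi_l,\bxi_{l'}\rangle| = \tilde O(\sqrt d)$ for $l\neq l'$, and $|\langle\bxi_l,\bmu^{(i)}\rangle| = \tilde O(\|\bmu^{(i)}\|)$, all simultaneously with probability $1-o(1)$ over the training set. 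These inequalities produce the block structure that lets the signal and noise channels evolve almost independently.

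The first main step is an induction on the gradient-descent iterations $t$, with the invariant that the signal coefficients $\alpha_{j,y_l}$ for neurons aligned with class $y_l$ grow per update at rate proportional to $\|\bmu^{(y_l)}\|^2$ times the loss derivative summed over class-$y_l$ examples, whereas the noise coefficients $\beta_{j,l}$ grow per update at rate proportional to $\|\bxi_l\|^2 = \Theta(d)$ times the single loss derivative at example $l$. A secondary invariant, leaning on the over-parameterization $k \geq \Omega(n^{0.02})$ and a small initialization scale, shows that the ReLU activation patterns stay essentially frozen after a short warm-up, so the analysis resembles the linearized regime locally, but with signal components that are eventually driven beyond the linear scale. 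At the interpolation time one reads off bounds of the shape $|\beta_{j,l}| = \tilde O(1/d)$ and $|\alpha_{j,y_l}|$ growing just large enough that $y_l\Phi(\btheta;\bx_l) \geq 1$ with the margin dominated by the signal contribution rather than the noise memorization.

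For the generalization bound, take a fresh $(\bx,y)\sim\mathcal{D}$ with $\bx = \bmu^{(y)}+\bxi$, $\bxi\sim\Ncal(0,I)$ independent of the training set. Decompose $y\cdot\Phi(\btheta;\bx) = y\cdot S(\bx) + y\cdot N(\bx)$, where $S$ collects contributions from the $\alpha$-coefficients (via inner products against $\bmu^{(y)}$) and $N$ collects contributions from the $\beta$-coefficients (via inner products $\langle\bxi_l,\bxi\rangle$). The signal part is bounded below deterministically by a positive quantity of order $\|\bmu^{(y)}\|^2$, using the aligned-neuron count $k\geq\Omega(n^{0.02})$ together with the lower bound on $\|\bmu^{(y)}\|^2$. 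Conditional on the training set, $N(\bx)$ is a Gaussian in $\bxi$ with variance proxy $\Theta(d)\cdot\sum_{j,l}\beta_{j,l}^2 = \tilde O(n/d)$, derived from the $\beta$-bounds and the near-orthogonality of the $\bxi_l$. A standard Gaussian tail inequality then gives $\Pr[y\Phi(\btheta;\bx)<0] \leq \exp\bigl(-\Omega(S(\bx)^2/(d\sum\beta^2))\bigr)$; substituting the estimates and using $\|\bmu^{(y)}\|^2 \geq \Omega(n^{0.51}\sqrt d)$ pushes the exponent up to $\Omega(n^{2.01})$ after bookkeeping, yielding the stated rate.

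The hard part will be the inductive dynamics step: one must simultaneously control ReLU activation patterns (which are non-smooth and can flip mid-training), preserve the signal-dominance invariant as the per-example loss derivatives change across iterations, and prevent the aggregate $\sum_{j,l}\beta_{j,l}^2$ from growing so large that it swamps the concentration bound in the final tail inequality. The exact parameter thresholds $\|\bmu^{(i)}\|^2\geq\Omega(n^{0.51}\sqrt d)$, $d\geq\Omega(n^2\max_i\|\bmu^{(i)}\|^2)$, and $k\geq\Omega(n^{0.02})$ are engineered precisely to yield the separation of time-scales between signal and noise growth and a sufficient pool of neurons aligned with each class at initialization to carry the signal to test time; relaxing any of them breaks the final sub-Gaussian tail calculation.
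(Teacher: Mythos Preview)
This theorem is not proved in the present paper at all: it is quoted verbatim as an \emph{external} result, Theorem~3.1 of \citet{xu2023benignoverfittinggrokkingrelu}, and is invoked only to certify that the distributional example (mixture of Gaussians) in Appendix~\ref{section:learnableCase} is statistically learnable while still satisfying Assumption~\ref{asmp:highDimAssumptions}. There is therefore no ``paper's own proof'' to compare your proposal against.

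That said, your sketch is a faithful outline of the proof strategy actually used in the cited source and in the surrounding benign-overfitting literature (e.g.\ \citet{frei2022benign,xu2023benign}): the signal--noise decomposition $\bw_j = \bw_j(0)+\sum_i\alpha_{j,i}\bmu^{(i)}+\sum_l\beta_{j,l}\bxi_l$, the concentration scaffolding for $\|\bxi_l\|^2$, $\langle\bxi_l,\bxi_{l'}\rangle$, and $\langle\bxi_l,\bmu^{(i)}\rangle$, the inductive control of activation patterns and coefficient growth, and the final sub-Gaussian tail bound on the noise contribution at a fresh test point are exactly the ingredients of that analysis. Your identification of the delicate step---maintaining the signal-dominance invariant while activation patterns could in principle flip---is also on target. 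So if your goal is to reconstruct the proof of the cited theorem, your plan is sound; but be aware that within the present paper this statement is treated as a black box and no proof is expected or supplied.
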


These assumptions essentially imply Assumption~\ref{asmp:highDimAssumptions}.

Similarly, 
Assumption~\ref{asmp:highDimAssumptions}, and specifically Item \ref{asmp:mixtureOfGaussians} in our examples, also holds in other settings where generalization was proved in previous works:
\begin{itemize}
    \item \citet{xu2023benign, frei2022benign, JMLR:v22:20-974} proved generalization in a setting where the data distribution consists of two opposite Gaussians (or more broadly in an even more general setting) with covariance $I_d$ and means $\pm \bmu$, where $\norm{\bmu} = d^\beta$ with $\beta \in (0.25,0.5)$. Their sample size is $n = \tilde{\Omega}(1)$. This setting satisfies our condition from Item \ref{asmp:mixtureOfGaussians}. Specifically, the result of \citet{xu2023benign} holds for 2-layer ReLU networks.
    \item In \citet{frei2023benign} (see the discussion after Theorem 11 therein), the authors mention two specific settings that satisfy their theorem requirements, and thus good generalization performance can be achieved (and more specifically, in Corollaries 12 and 13, they further show that in these settings good generalization is achieved by the max-margin linear predictor and by a trained 2-layer leaky-ReLU network). Note that these settings satisfy our condition from Item \ref{asmp:mixtureOfGaussians}.
\end{itemize}

\section{Proofs of distributions}\label{section:proofsOfDistributions}
In this section we prove the examples in Section~\ref{sec:highDim}.

\paragraph{Uniform Distribution}
For the uniform distribution on $\sqrt{d} \cdot \mathbb{S}^{d-1}$,the next lemma shows why is satisfies our assumptions.\newline
The lemma is from \citet{NonRobustNetworks}, and we give a paraphrased version of it for the reader's convenience.

\begin{lemma}
    Let $\bx,\by \sim U(\sqrt{d} \cdot \mathbb{S}^{d-1})$. Then, with probability at least $1 - d^{1-\ln(d)/4} = 1 - o_d(1)$ we have $|\langle \bx, \by \rangle| \leq \sqrt{d} \cdot \log{d} = o(d)$. 
\end{lemma}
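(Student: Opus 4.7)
The plan is to exploit rotational invariance of the uniform distribution on the sphere to reduce the inner-product tail bound to a one-dimensional question about a single coordinate, then apply standard Gaussian concentration. First, I would condition on $\by$: since the law of $\bx$ is invariant under rotations, the distribution of $\langle \bx, \by\rangle$ depends on $\by$ only through its norm $\sqrt{d}$, so without loss of generality I may take $\by = \sqrt{d}\cdot \be_1$, and then $\langle \bx, \by\rangle = \sqrt{d}\cdot x_1$, where $x_1$ is the first coordinate of $\bx$. Writing $\bx = \sqrt{d}\cdot \bg/\|\bg\|$ with $\bg \sim \Ncal(0, I_d)$ (the standard Gaussian-to-sphere reduction) turns this into $\langle \bx, \by\rangle = d\cdot g_1/\|\bg\|$, so it suffices to show that $|g_1|/\|\bg\| \leq \log(d)/\sqrt{d}$ with the required probability.

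The two ingredients are then routine. A Gaussian tail bound $\Pr[|g_1| > t] \leq 2 e^{-t^2/2}$ applied with $t = \log(d)/\sqrt{2}$ yields $\Pr[|g_1| > \log(d)/\sqrt{2}] \leq 2 d^{-\ln(d)/4}$ (interpreting $\log$ as $\ln$). Standard chi-squared concentration (e.g., Laurent--Massart) gives $\Pr[\|\bg\|^2 < d/2] \leq e^{-cd}$ for an absolute constant $c > 0$, which is exponentially small in $d$ and thus dominated by the previous term. On the intersection of the two good events, $|g_1|/\|\bg\| \leq (\log(d)/\sqrt{2})/\sqrt{d/2} = \log(d)/\sqrt{d}$, and therefore $|\langle \bx, \by\rangle| = d\cdot |g_1|/\|\bg\| \leq \sqrt{d}\cdot \log d$. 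A union bound gives failure probability at most $2 d^{-\ln(d)/4} + e^{-cd} \leq d^{1-\ln(d)/4}$ for $d \geq 2$ large enough, absorbing the factor of $2$ into the leading $d$ and noting that the chi-squared error term is negligible.

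The main obstacle, such as it is, is purely bookkeeping: matching the precise constants in the stated bound $d^{1-\ln(d)/4}$, which may require a slightly different choice of the threshold $t$ or a finer handling of the chi-squared error term so that it can be cleanly folded into the leading factor of $d$. I would also verify the convention that $\log$ denotes the natural logarithm against the cited reference \citet{NonRobustNetworks}, since the stated failure probability uses $\ln$ explicitly while the conclusion uses $\log$.
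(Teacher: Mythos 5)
The paper itself does not prove this lemma; it is cited directly from \citet{NonRobustNetworks} as a ``paraphrased version,'' so there is no in-paper proof to compare against. Your argument is a correct and self-contained derivation. The reduction via rotational invariance to $\langle \bx, \by\rangle = d\, g_1/\|\bg\|$ with $\bg \sim \Ncal(0,I_d)$ is the standard route, and your constants check out: with $t = \ln(d)/\sqrt{2}$ the Gaussian tail gives $2e^{-t^2/2} = 2e^{-(\ln d)^2/4} = 2d^{-\ln(d)/4}$, the chi-squared bound $\Pr[\|\bg\|^2 < d/2] \le e^{-d/16}$ is exponentially small and therefore eventually dominated, and on the good event $d\,|g_1|/\|\bg\| \le d \cdot \bigl(\ln(d)/\sqrt{2}\bigr)\big/\sqrt{d/2} = \sqrt{d}\ln d$.

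One small point worth being explicit about: the inequality $2d^{-\ln(d)/4} + e^{-d/16} \le d^{1-\ln(d)/4}$ is equivalent to $2 + e^{-d/16 + (\ln d)^2/4} \le d$, which fails for very small $d$ (e.g., $d = 3$) but holds once $d$ is modestly large. Since the lemma's failure probability is itself only asserted to be $o_d(1)$ and the paper presents a ``paraphrased'' version of the original result, this is a harmless discrepancy in the non-asymptotic constant rather than a genuine gap; you flag it correctly as bookkeeping. Your observation that the stated bound mixes $\ln$ (in the probability) with $\log$ (in the conclusion), forcing the natural-log interpretation throughout, is also correct and worth keeping.
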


\begin{remark}
    For the uniform distribution, the training set size can be $n = o\left(\frac{\sqrt{d}}{\log{d}} \right)$ and we have $\tau~=~n^2~\cdot~d^{1-\ln(d)/4} =o_d(1)$
\end{remark}
\paragraph{Normal Distribution}
As for the normal distribution, the following two lemmas prove its correctness.

\begin{lemma}\label{lma:normalDistributionHelper}
    Let $\mathcal{N}=\mathcal{N}(\bmu, I)$ be a normal distribution on $\mathbb{R}^d$. Let $\bx,~\by \sim \mathcal{N}(\mu, I)$. Assume that $\|\bmu\|^2 = o(d)$.
    then with probability at least
    \begin{align*}
        &1 - 2\exp\left(-\frac{c_1}{16c_2^2} \cdot \frac{d^{2\epsilon}}{\|\mu\|^2} \right) - \max \left(2\exp \left(-\frac{c_1}{2c_2^2}d^{\epsilon} \right),2\exp\left( -\frac{c_1}{4c_2^4} \cdot d^{2\epsilon - 1}\right) \right) \\
        &- \max \left( 2\exp \left( -\frac{c_1}{c_2^4} d^{2\epsilon - 1}\right), 2\exp \left( -\frac{c_1}{c_2^2} d^\epsilon \right)\right) \\
        &= 1 - o_d(1)
    \end{align*}
    we have $|\langle \bx, \by \rangle| = o(d)$ and $\|\bx\|^2 = O(d)$, where $c_1,~c_2$  are constants independent of $d$, and $\frac{1}{2} < \epsilon < 1$.
\end{lemma}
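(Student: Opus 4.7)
The strategy is to decompose $\bx = \bmu + \bz_1$ and $\by = \bmu + \bz_2$ with $\bz_1, \bz_2 \sim \mathcal{N}(0, I)$ independent, and then reduce both claims to standard Gaussian concentration estimates applied to the pieces of the resulting expansion. For the inner product, expanding gives
\begin{equation*}
\langle \bx, \by \rangle = \|\bmu\|^2 + \langle \bmu, \bz_1 + \bz_2 \rangle + \langle \bz_1, \bz_2 \rangle.
\end{equation*}
Since $\|\bmu\|^2 = o(d)$ by hypothesis, it suffices to show that each of the two stochastic terms is at most $d^\epsilon = o(d)$ with high probability (which uses $\epsilon < 1$). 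Analogously, expanding $\|\bx\|^2 = \|\bmu\|^2 + 2\langle \bmu, \bz_1 \rangle + \|\bz_1\|^2$ reduces the norm claim to showing $\|\bz_1\|^2 = d + O(d^\epsilon) = O(d)$ and that the linear cross term is $o(d)$.

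The linear cross term $\langle \bmu, \bz_1 + \bz_2 \rangle$ is a one-dimensional centered Gaussian with variance $2\|\bmu\|^2$, so a Gaussian tail bound at level $d^\epsilon/2$ yields a failure probability at most $2\exp\!\bigl(-d^{2\epsilon}/(16\|\bmu\|^2)\bigr)$. After absorbing explicit numerical constants into the $c_1, c_2$ bookkeeping used throughout the paper, this reproduces exactly the first subtractive term in the statement. The analogous one-vector version $\langle \bmu, \bz_1 \rangle$ arising in the $\|\bx\|^2$ expansion is handled identically with only a constant factor change.

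The main obstacle, and the source of the $\max$-structured terms, is the bilinear and quadratic parts: $\langle \bz_1, \bz_2 \rangle = \sum_{i=1}^d Z_{1,i} Z_{2,i}$ and $\|\bz_1\|^2 - d = \sum_{i=1}^d (Z_{1,i}^2 - 1)$. Both are sums of $d$ independent, mean-zero, sub-exponential random variables whose $\psi_1$-norm is an absolute constant, which I would call $c_2$. Applying Bernstein's inequality at level $t = d^\epsilon/2$ produces the characteristic two-regime bound $2\exp\!\bigl(-c_1 \min(t^2/(c_2^2 d), t/c_2)\bigr)$, which after substituting $t = d^\epsilon/2$ and simplifying becomes $2\exp\!\bigl(-c_1 \min(d^{2\epsilon - 1}/c_2^2, d^\epsilon/c_2)\bigr)$; the differing powers of $c_2$ between the two $\max$ terms in the statement reflect the distinction between the bilinear summand (product of two independent Gaussians) and the quadratic summand (centered chi-squared), where the latter carries an extra factor of $c_2$ in its $\psi_1$-norm. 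Finally, a union bound over the three tail events, combined with the observations that $\|\bmu\|^2 = o(d)$ and $\epsilon > 1/2$ force each exponent to diverge as $d \to \infty$, yields the desired $1 - o_d(1)$ probability and simultaneously establishes $|\langle \bx, \by \rangle| = o(d)$ and $\|\bx\|^2 = O(d)$. The bookkeeping of constants between $c_1$, $c_2^2$ and $c_2^4$ is delicate but routine; the only genuine content is the sub-exponential norm control of $Z_{1,i} Z_{2,i}$ and $Z_{1,i}^2 - 1$ together with Bernstein's inequality.
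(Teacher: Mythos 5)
Your proof is correct and reaches the same conclusion, but it handles the inner-product estimate via a genuinely different decomposition than the paper. For $\|\bx\|^2$ the two proofs coincide: both write $\bx = \bmu + \bz$ with $\bz \sim \mathcal{N}(0,I)$, bound $\|\bz\|^2 - \E\|\bz\|^2$ by Hanson--Wright (equivalently Bernstein on $\sum_i(Z_i^2-1)$ in the diagonal case) and bound $\bmu^\top\bz$ by a one-dimensional Gaussian tail. For $\langle\bx,\by\rangle$, however, the paper does \emph{not} decompose; it regards each $x_i y_i$ as sub-exponential (product of two sub-Gaussians), centers the sum at $\sum_i\mu_i^2 = \|\bmu\|^2$, and applies Bernstein once to $\sum_i(x_iy_i - \mu_i^2)$. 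You instead write $\langle\bx,\by\rangle = \|\bmu\|^2 + \langle\bmu,\bz_1+\bz_2\rangle + \langle\bz_1,\bz_2\rangle$ and bound the three pieces separately: the mean term is deterministic and $o(d)$, the linear term is a centered Gaussian of variance $2\|\bmu\|^2$, and the bilinear noise term is a Bernstein-type sum of products of standard Gaussians.

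Your route is slightly cleaner for two reasons. First, it makes explicit where the $d^{2\epsilon}/\|\bmu\|^2$ exponent in the lemma statement comes from: it is precisely a one-dimensional Gaussian tail at scale $d^\epsilon$ relative to standard deviation $\Theta(\|\bmu\|)$, which in the paper's unified Bernstein application is hidden inside the centering $-\mu_i^2$. Second, and more substantively, your decomposition keeps all Bernstein arguments applied to products of \emph{standard} Gaussians, whose $\psi_1$-norms are absolute constants; the paper applies Bernstein to $x_i y_i$ where $x_i \sim \mathcal{N}(\mu_i,1)$, and asserts $\|x_i\|_{\psi_2}\le c_2$ with $c_2$ a dimension-free constant, which is not automatic if some coordinate $|\mu_i|$ grows with $d$ (the hypothesis $\|\bmu\|^2 = o(d)$ permits, e.g., $\mu_1 = \Theta(\sqrt{d})$). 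Your split sidesteps that entirely by isolating the $\bmu$-dependence into a scalar Gaussian. The only minor looseness in your write-up is the count of tail events for the final union bound (you have two linear-in-$\bmu$ events, not one, so there are four events rather than three), but since all four exponents diverge and constants are absorbed into $c_1, c_2$, this is purely bookkeeping.
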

\begin{proof}
    Let $\bx, \by \sim \mathcal{N}(\bmu, \Sigma)$ independently. \newline
    w.l.o.g.\ $\Sigma$ is diagonal, otherwise there is a unitary matrix $U$ such that $U\bx, U\by \sim \mathcal{N}(U\bmu, U\Sigma U^\top)$ where $U \Sigma U^\top$ is diagonal. Since $U$ is unitary we have that 
    \begin{align*}
        &\langle U\bx, U\by \rangle = \langle \bx, \by \rangle \\
        &\|U\bx\| = \|\bx\|
    \end{align*}
    So we can assume that $\Sigma$ is diagonal.\newline
    For comfort, we define some notations:
    \begin{itemize}
        \item The sub-Gaussian norm $\|\cdot\|_{\psi_2}$ for a sub-Gaussian random variable $\bx$ is defined by
        $$\|\bx\|_{\psi_2} = \inf\left\{t > 0 : E\left[\text{exp}\left(\frac{\bx^2}{t}\right)\right] \leq 2\right\}$$
        \item The sub-exponential norm $\|\cdot\|_{\psi_1}$ for a sub-exponential random variable $\bx$ is defined by
        $$\|\bx\|_{\psi_1} = \inf\left\{t > 0 : E\left[\text{exp}\left(\frac{|\bx|}{t}\right)\right] \leq 2\right\}$$
    \end{itemize}
    First, let us compute $E\left[\|\bx\|^2\right]$. Note that $$\|\bx\|^2=\sum_{i=1}^d\bx_i^2,$$ then $E[\bx_i^2] = E[\bx_i]^2 + \text{Var}(\bx_i) = \bmu_i^2 + 1$
    $$E\left[\|\bx\|^2\right] = E\left[\sum_{i=1}^d\bx_i^2\right] = \sum_{i=1}^d E[\bx_i^2] = \sum_{i=1}^d \text{Var}(\bx_i) + \bmu_i^2 = \text{tr}(I) + \|\bmu\|^2 = O(d)$$

    Note that we can write $\bx$ as $\bx=\bmu + \bz$ where $\bz \sim \mathcal{N}(0,I)$.
    We can write $\| \bx \|^2 = \|\bmu + \bz \| ^2 = \| \bmu \|^2 + 2 | \bmu^\top \bz| + \| \bz \|^2$.
    So we need to upper bound
    $$\|\bx\|^2 - E \left[ \| \bx \| ^2 \right] = \| \bmu \|^2 + 2 \bmu^\top \bz + \| \bz \|^2 - \| \bmu \|^2 - 2 \bmu^\top E[\bz] - E \left[ \| \bz \| ^2 \right] = \| \bz \| ^2 - E \left[ \|z \| ^2 \right] + 2 \bmu ^\top \bz$$
    Where in the last equality we used the fact that $E[\bz] = 0$

    From the union bound we get that for every $t > 0$
    \begin{align*}
        &\Pr \left[ \left| \bx^2 - E[\|\bx\|^2] \right| > t \right] = \Pr \left[ \left| \|\bz\|^2 - E[\| \bz \|^2] + 2 \bmu^\top \bz \right| > t \right] \\
        &\leq \Pr \left[ \left| \|\bz\|^2 - E[\| \bz \|^2] \right| + 2\left|\bmu^\top \bz \right| > t \right]  \\
        &\leq \Pr \left[ \left| \|\bz\|^2 - E[\| \bz \|^2] \right| > \frac{t}{2}\right] + \Pr \left[2\left|\bmu^\top \bz \right| > \frac{t}{2}\right]
    \end{align*}
    Let us bound the first term. To do so, we use Hanson-Wright Inequality (\cite{highDimProbabilityBook} Theorem 6.2.1).
    \begin{align*}
        \Pr \left[ \left| \|\bz\|^2 - E[\| \bz \|^2] \right| > \frac{t}{2}\right] \leq 2\text{exp}\left[-c_1 \min\left(\frac{t^2}{4 \cdot K^4 \cdot d}, \frac{t}{2 \cdot K^2}\right)\right]
    \end{align*}
    Where $K = \max_{i} \|\bx_i\|_{\psi_2} = c_2$ and $c_1,~c_2$ are constant independent of $d$.
    We set $t = d^{\epsilon}$ for $\frac{1}{2} < \epsilon < 1$.
    \paragraph{Case 1 - $\frac{t^2}{4 \cdot K^4 \cdot d}$ is the minimum}
    \begin{align*}
        &\Pr \left[ \left| \|\bz\|^2 - E[\| \bz \|^2] \right| > \frac{t}{2}\right] \leq 2\exp\left(-c_1 \frac{d^{2\epsilon}}{c_2^4 \cdot 4 \cdot d}\right) = 2\exp\left(-\frac{c_1}{4 \cdot c_2^4} \cdot d^{2\epsilon - 1}\right) = o_d(1)
    \end{align*}

    \paragraph{Case 2 - $\frac{t}{2 \cdot K^2}$ is the minimum}
    $$\Pr \left[ \left| \|\bz\|^2 - E[\| \bz \|^2] \right| > \frac{t}{2}\right] \leq 2\exp \left(-c_1\frac{d^{\epsilon}}{2 \cdot c_2^2} \right) = o_d(1)$$

    Now we upper bound the term $\Pr \left[2|\bmu^\top \bz| > \frac{t}{2}\right] = \Pr \left[|\bmu^\top \bz| > \frac{t}{4}\right]$.\newline
    From General Hoeffding's inequality (\cite{highDimProbabilityBook} Theorem 2.6.3) we get that
    \begin{align*}
        &\Pr\left[|\bmu^\top \bz|  > \frac{t}{4} \right] \leq 2\exp \left(-\frac{c_1t^2}{16 \cdot K^2 \cdot \| \bmu \|^2} \right)
    \end{align*}
    Where $K = \max_{i} \|\bx_i\|_{\psi_2} = c_2$ and $c_1,~c_2$ are constant independent of $d$.
    Putting it all together we get
    \begin{align*}
        &\Pr\left[|\bmu^\top \bz|  > \frac{t}{4} \right] \leq 2\exp \left(-\frac{c_1t^2}{16 \cdot K^2 \cdot \| \bmu \|^2} \right) \\
        &=2\exp \left(-\frac{c_1}{16c_2^2} \frac{d^{2\epsilon}}{\| \bmu \|^2}\right) = 2\exp \left(-\frac{c_1}{16c_2^2} \frac{d^{2\epsilon}}{\| \bmu \|^2}\right) = o_d(1)
    \end{align*}
    Where in last inequality we used the fact that $2\epsilon > 1$.
    
    All in all, we showed that $E[\| \bx \|^2] = O(d)$ and that with probability
    $$1 - \max\left( 2\exp\left( -\frac{c_1}{4c_2^2} \cdot d^{2\epsilon - 1} \right), 2\exp\left( -\frac{c_1}{2c_2^2}\cdot d^\epsilon \right) \right) - 2\exp\left( -\frac{c_1}{16c_2^2} \frac{d^{2\epsilon}}{\| \mu \|^2} \right) = 1 - o_d(1)$$
    we have that
    $$\left |\|\bx\|^2 - E[\|\bx\|^2] \right| < d^{\epsilon} = o(d)$$ and specifically  $\| \bx \|^2 = O(d)$
    
    Since $\bx$ is normal, each $\bx_i$ is sub-Gaussian (and the same for $\by$).\newline
    Let us have a look at $\bx^\top \by$: Since $\bx_i, \by_i$ are sub-Gaussians, $\bx_i \cdot \by_i$ is sub-exponential (\cite{highDimProbabilityBook}, Lemma 2.7.7). It is also known that a sum of sub-exponential random variables is in itself sub-exponential, so we get that
    $$\bx^\top \by = \sum_{i=1}^d x_i y_i$$
    is sub-exponential. By the centering lemma (\cite{highDimProbabilityBook} Exercise 2.7.10), $x_i y_i - E[x_i y_i] = x_i y_i - \mu_i^2$ is also sub-exponential, with mean zero. We can use Bernstein’s inequality (\cite{highDimProbabilityBook}, Theorem~2.8.1)  to get:
    \begin{align*}
        &\Pr\left[\left|\bx^\top \by - \|\bmu\|^2 \right| > t\right] = \Pr\left[\left| \sum_{i=1}^dx_i y_i - \mu_i^2 \right| > t\right] \\
        &\leq 2 \text{exp}\left[-c_1 \cdot \min\left(\frac{t}{\max_{i}{\|x_iy_i-\mu_i\|_{\psi_1}}}, \frac{t^2}{\sum_{i=1}^d\|x_iy_i-\mu_i\|_{\psi_1}^2}\right)\right] \\
        &\leq 2 \text{exp}\left[-c_1 \cdot \min\left(\frac{t}{\max_{i}{\|x_iy_i\|_{\psi_1}}}, \frac{t^2}{\sum_{i=1}^d\|x_iy_i\|_{\psi_1}^2}\right)\right] \\
        &\leq 2 \text{exp}\left[-c_1 \cdot \min\left(\frac{t}{\max_{i}{\|x_i\|_{\psi_2}\|y_i\|_{\psi_2}}}, \frac{t^2}{\sum_{i=1}^d\|x_i\|_{\psi_2}^2\|y_i\|_{\psi_2}^2}\right)\right] \\
        &= 2 \text{exp}\left[-c_1 \cdot \min\left(\frac{t}{c_2^2}, \frac{t^2}{\sum_{i=1}^dc_2^4}\right)\right]
    \end{align*}
    Where $c_1,~c_2$ are constants that do not depend on the dimension $d$. In the second inequality we used the fact that $\|\bx - E[\bx]\|_{\psi_1} \leq \|\bx\|_{\psi_1}$ (\cite{highDimProbabilityBook} Exercise 2.7.10) and in the third inequality we used the fact that $\|x_i y_i\|_{\psi_1} \leq \|x_i\|_{\psi_2} \|y_i\|_{\psi_2}$ (\cite{highDimProbabilityBook} Lemma 2.7.7).
    Setting $t = d^\epsilon$ for some $\frac{1}{2} < \epsilon < 1$ we get:
    \paragraph{Case 1 - $\frac{t}{c_2^2}$ is the minimum}
    $$\Pr[\left|\bx^\top \by - \|\bmu\|^2 \right| > d^{\epsilon}] \leq 2 \text{exp}\left[-c_1 \cdot \frac{d^{\epsilon}}{c_2^2}\right] = o_d(1)$$
    And since both $\|\bmu\|^2=o(d)$ and $d^{\epsilon}=o(d)$ we get that w.h.p.\ $\bx^\top \by = o(d)$
    
    \paragraph{Case 2 - $\frac{t^2}{\sum_{i=1}^d c_2^4}$ is the minimum}
    \begin{align*}
        &\Pr[\left|\bx^\top \by - \|\bmu\|^2 \right| > d^{\epsilon}] \leq 2 \text{exp}\left[-c_1 \cdot \frac{d^{2\epsilon}}{c_2^4 \cdot d}\right] \\
        &= 2 \text{exp}\left[-\frac{c_1}{c_2^4} \cdot d^{2\epsilon - 1}\right] = o_d(1)
    \end{align*}
    Using the union bound, with probability at least 
    \begin{align*}
        &1 - 2\exp\left(-\frac{c_1}{16c_2^2} \cdot \frac{d^{2\epsilon}}{\|\mu\|^2} \right) - \max \left(2\exp \left(-\frac{c_1}{2c_2^2}d^{\epsilon} \right),2\exp\left( -\frac{c_1}{4c_2^4} \cdot d^{2\epsilon - 1}\right) \right) \\
        &- \max \left( 2\exp \left( -\frac{c_1}{c_2^4} d^{2\epsilon - 1}\right), 2\exp \left( -\frac{c_1}{c_2^2} d^\epsilon \right)\right) \\
        &= 1 - o_d(1)
    \end{align*}
    we have $|\langle \bx, \by \rangle| = o(d)$ and $\|\bx\|^2 = O(d)$.
\end{proof}

\begin{remark}\label{remark:sizeOfTrainInNormalDistribution}
    we want $n \cdot |\bx^\top \by| = o(d)$ to hold, so

    $$n \cdot |\bx^\top \by| \leq n \cdot (\|\mu\|^2 + d^\epsilon)  = o(d) \Rightarrow n = \frac{o(d)}{\|\mu\|^2 + d^\epsilon}$$
\end{remark}

\begin{lemma}\label{lma:normalDistribution}
    Let $\mathcal{N}=\mathcal{N}(\bmu, I)$ be a normal distribution on $\mathbb{R}^d$. Let $\bx,~\by \sim \mathcal{N}(\bmu, I)$. Assume that $\|\bmu\|^2 = o(d)$, and $n = \frac{o(d)}{\| \bmu \|^2 + d^\epsilon}$ for $\frac{1}{2} < \epsilon < 1$.
    Denote
    \begin{align*}
        &k = 2\exp\left(-\frac{c_1}{16c_2^2} \cdot \frac{d^{2\epsilon}}{\|\mu\|^2} \right) + \max \left(2\exp \left(-\frac{c_1}{2c_2^2}d^{\epsilon} \right),2\exp\left( -\frac{c_1}{4c_2^4} \cdot d^{2\epsilon - 1}\right) \right) \\
        &+ \max \left( 2\exp \left( -\frac{c_1}{c_2^4} d^{2\epsilon - 1}\right), 2\exp \left( -\frac{c_1}{c_2^2} d^\epsilon \right)\right)
    \end{align*}
    where $c_1,~c_2$  are the constants from \lemref{lma:normalDistributionHelper}.
    Let $\tau = k \cdot n$.
    Then with probability at least $1 - \frac{\tau}{n^2}$ have $|n \cdot \langle \bx, \by \rangle| = o(d)$ and $\|\bx\|^2 = O(d)$. In particular, those $n$ and $\tau$ satisfy Assumption~\ref{asmp:highDimAssumptions}.
\end{lemma}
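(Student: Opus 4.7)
The plan is to obtain this lemma as a near-immediate consequence of \lemref{lma:normalDistributionHelper}, combining its per-pair concentration with the prescribed scaling of $n$. As a first step, I would apply \lemref{lma:normalDistributionHelper} to the pair $\bx,\by \sim \mathcal{N}(\bmu, I)$, which gives that with probability at least $1 - k$ both $|\bx^\top \by - \|\bmu\|^2| \leq d^\epsilon$ and $\bigl|\|\bx\|^2 - (d + \|\bmu\|^2)\bigr| \leq d^\epsilon$. These are the explicit quantitative bounds that the helper's proof actually establishes (before it is collapsed to the qualitative statements $|\langle\bx,\by\rangle| = o(d)$ and $\|\bx\|^2 = O(d)$), and carrying them forward is what lets the prescribed scaling of $n$ do its work.

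Next I would plug $n = o(d)/(\|\bmu\|^2 + d^\epsilon)$ into the triangle-inequality consequence $|\langle \bx, \by \rangle| \leq \|\bmu\|^2 + d^\epsilon$ to obtain
\begin{equation*}
n \cdot |\langle \bx, \by \rangle| \leq n\bigl(\|\bmu\|^2 + d^\epsilon\bigr) = o(d),
\end{equation*}
which is exactly Property~\ref{highDimPropOne} of Assumption~\ref{asmp:highDimAssumptions}. For Property~\ref{highDimPropTwo}, the lower half of the concentration of $\|\bx\|^2$ around $d + \|\bmu\|^2$ gives $\|\bx\|^2 \geq d + \|\bmu\|^2 - d^\epsilon = \Omega(d)$, using $\epsilon < 1$ and $\|\bmu\|^2 \geq 0$. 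Thus both properties are verified on the same good event.

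Finally I would match the probability quantifiers. Each summand in $k$ has the form $2\exp(-d^\alpha)$ for a positive power $\alpha$ of $d$ (a consequence of $2\epsilon > 1$, $\epsilon < 1$, and $\|\bmu\|^2 = o(d)$), so $k$ decays super-polynomially in $d$, whereas $n$ grows only polynomially in $d$. Consequently $\tau := k \cdot n$ satisfies $\tau = o_d(1)$, and because $k$ is much smaller than any inverse polynomial in $d$, the per-pair bound $\Pr[\,\cdot\,] \leq k$ coming from the helper is already stronger than the requirement $\tau/n^2 \geq k$; likewise $\tau/n \geq k$ is automatic.

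The only delicate point, which I expect to be the main bookkeeping obstacle, is the asymmetric probability scaling in Assumption~\ref{asmp:highDimAssumptions}: the inner-product condition is demanded with probability $\geq 1 - \tau/n^2$ (to accommodate the eventual union bound over $\binom{n}{2}$ pairs in \lemref{lma:fracIsoOne}), whereas \lemref{lma:normalDistributionHelper} produces only a per-pair failure bound of $k$. This mismatch is not a real obstacle but merely a bookkeeping step: because $k$ is super-polynomially small while $n$ is polynomial, one can, if needed, inflate $\tau$ to $k \cdot n^2$ (or an even larger polynomial multiple of $k$) without disturbing the qualitative $\tau = o_d(1)$ conclusion, so the desired form of Assumption~\ref{asmp:highDimAssumptions} is obtained.
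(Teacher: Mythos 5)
Your proof is correct and follows essentially the same route as the paper's: invoke \lemref{lma:normalDistributionHelper} for the per-pair quantitative bounds $|\bx^\top\by| \leq \|\bmu\|^2 + d^\epsilon$ and $\|\bx\|^2 = \Omega(d)$ (each with failure probability at most $k$), multiply through by the prescribed $n$, and then set $\tau$. You also correctly flagged the only subtle point — the statement's choice $\tau = k\cdot n$ makes $\tau/n^2 = k/n < k$, which is too small to be covered by the helper's failure bound, whereas $\tau = k\cdot n^2$ works; the paper's own proof in fact silently uses $\tau = kn^2$ despite the statement saying $kn$, so your ``inflate $\tau$'' remark identifies a genuine typo in the lemma statement rather than a gap in your argument.
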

\begin{proof}
     From \lemref{lma:normalDistributionHelper} we know that with probability at least $1-k$ we have that $|\langle \bx, \by \rangle| \leq \| \mu \|^2 + d^\epsilon$, so with probability at least $1 - k$ we have that $n \cdot |\langle \bx, \by \rangle| =  \frac{o(d)}{\| \mu \|^2 + d^\epsilon} \cdot |\langle \bx, \by \rangle| \leq o(d)$. We also know from \lemref{lma:normalDistributionHelper} that with probability at least $1-k$ we have that $\|\bx\|^2 = \Omega(d)$. Setting $\tau = k \cdot n^2 = o_d(1)$ completes the proof.
\end{proof}
\paragraph{Mixture of $k$ Gaussians}
We prove the case where we have 2 Gaussians, but the proof is similar for any number of Gaussians.
\begin{lemma}\label{lma:mixtureNormalDistribution}
    Let $\mathcal{N}=\pi \mathcal{N}(\bmu^{(1)}, I) + (1-\pi) \mathcal{N}(\bmu^{(2)}, I)$ where $0 \leq \pi \leq 1$ be a mixture of normal distributions on $\mathbb{R}^d$. Assume the following:
    \begin{itemize}
        \item $\|\bmu^{(1)}\|^2 = o(d)$, $\|\bmu^{(2)}\|^2 = o(d)$
        \item $n = \frac{o(d)}{\max(\| \mu^{(1)} \|^2, \| \mu^{(2
        )} \|^2) + d^\epsilon}$ for $\frac{1}{2} < \epsilon < 1$.
        \item $k$ defined as in \lemref{lma:normalDistribution}
        \item $\tau = k \cdot n^2$
    \end{itemize}
    then with probability at least $1 - \frac{\tau}{n^2}$ we have $n \cdot |\langle \bx, \by \rangle| = o(d)$ and $\|\bx\|^2 = O(d)$
\end{lemma}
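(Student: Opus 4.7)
The plan is to reduce the mixture case to the single Gaussian case handled in \lemref{lma:normalDistribution}, by conditioning on which mixture component each of $\bx$ and $\by$ is drawn from. Formally, introduce indicator random variables $I_\bx, I_\by \in \{1,2\}$ specifying the component assignments, so that conditional on $\{I_\bx=a, I_\by=b\}$ we have $\bx \sim \mathcal{N}(\bmu^{(a)}, I)$ and $\by \sim \mathcal{N}(\bmu^{(b)}, I)$ independently. There are four cases $(a,b) \in \{1,2\}^2$; I will handle each and union-bound over all of them, which only incurs a multiplicative constant in the failure probability.

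For the norm $\|\bx\|^2$, note that conditional on $I_\bx = a$, we have $\bx \sim \mathcal{N}(\bmu^{(a)}, I)$ with $\|\bmu^{(a)}\|^2 = o(d)$, so \lemref{lma:normalDistributionHelper} applied in that conditional world already gives $\|\bx\|^2 = O(d)$ with the desired high probability. For the inner product in the ``same component'' cases $(a,b) \in \{(1,1),(2,2)\}$, \lemref{lma:normalDistributionHelper} applies directly to yield $|\langle \bx, \by\rangle| \le \|\bmu^{(a)}\|^2 + d^\epsilon$. For the ``cross component'' cases $(a,b) \in \{(1,2),(2,1)\}$, I would repeat the argument of \lemref{lma:normalDistributionHelper} with the decomposition $\bx = \bmu^{(a)} + \bz_x$, $\by = \bmu^{(b)} + \bz_y$ with $\bz_x, \bz_y \sim \mathcal{N}(0,I)$ independent. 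Expanding,
\begin{equation*}
\langle \bx, \by\rangle = \langle \bmu^{(a)}, \bmu^{(b)}\rangle + \langle \bmu^{(a)}, \bz_y\rangle + \langle \bz_x, \bmu^{(b)}\rangle + \langle \bz_x, \bz_y\rangle.
\end{equation*}
The first term is deterministic and bounded by Cauchy--Schwarz: $|\langle \bmu^{(a)}, \bmu^{(b)}\rangle| \le \|\bmu^{(a)}\|\,\|\bmu^{(b)}\| \le \max_i\|\bmu^{(i)}\|^2$. The two linear terms are sub-Gaussian and controlled by Hoeffding's inequality exactly as in the proof of \lemref{lma:normalDistributionHelper}, using the assumption $\|\bmu^{(i)}\|^2 = o(d)$. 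The bilinear term $\langle \bz_x, \bz_y\rangle$ has mean zero and is controlled by the Bernstein / Hanson--Wright step used in \lemref{lma:normalDistributionHelper}. Combining these yields $|\langle \bx, \by\rangle| \le \max_i\|\bmu^{(i)}\|^2 + d^\epsilon$ with failure probability of the same order as in \lemref{lma:normalDistributionHelper}.

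Taking a union bound over the four component-assignment cases keeps the failure probability at most $4k = O(k)$, which is still $o_d(1)$. Then, plugging in the hypothesis $n = o(d)/(\max_i\|\bmu^{(i)}\|^2 + d^\epsilon)$, we obtain
\begin{equation*}
n \cdot |\langle \bx, \by\rangle| \le \frac{o(d)}{\max_i\|\bmu^{(i)}\|^2 + d^\epsilon} \cdot \bigl(\max_i\|\bmu^{(i)}\|^2 + d^\epsilon\bigr) = o(d),
\end{equation*}
and $\|\bx\|^2 = O(d)$, as required. Finally, setting $\tau = k \cdot n^2$ exactly as in \lemref{lma:normalDistribution} (where the constant absorbs the factor of $4$ from the case analysis) gives the claimed failure probability $\tau/n^2$.

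I expect the main (mild) obstacle to be bookkeeping around the cross-component terms, in particular verifying that the Hoeffding bound on $\langle \bmu^{(a)}, \bz_y\rangle$ remains $o_d(1)$-small under the assumption $\|\bmu^{(i)}\|^2 = o(d)$ for the scale $t = d^\epsilon$ with $\epsilon > 1/2$. Since this mirrors the calculation $\Pr[|\bmu^\top \bz| > t/4] \le 2\exp(-c_1 t^2/(16 c_2^2 \|\bmu\|^2))$ already carried out in \lemref{lma:normalDistributionHelper}, and $d^{2\epsilon}/\|\bmu^{(i)}\|^2 \to \infty$ under our assumptions, this step goes through without issue. Everything else is a routine extension of the Gaussian case via conditioning and union bounding, and the proof for $k$ Gaussians (as opposed to $2$) is identical, with the union bound now being over the $k^2$ pairs of components.
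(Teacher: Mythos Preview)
Your proposal is correct and takes essentially the same approach as the paper: condition on the component assignments of $\bx$ and $\by$, reduce the same-component cases directly to \lemref{lma:normalDistributionHelper}, and in the cross-component cases observe that the coordinates are still sub-Gaussian so the same Hoeffding/Bernstein argument goes through. The paper's proof is slightly terser about the cross case (it just says ``use the same logic as in \lemref{lma:normalDistributionHelper}''), whereas you spell out the four-term expansion and Cauchy--Schwarz on $\langle \bmu^{(a)},\bmu^{(b)}\rangle$, but the substance is identical.
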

\begin{proof}
    Let $\bx, \by \sim \pi \mathcal{N}(\bmu^{(1)}, I) + (1-\pi) \mathcal{N}(\bmu^{(2)}, I)$ where $0 \leq \pi \leq 1$. Let us compute $E\left[\|\bx\|^2 \right]$.
    We can think of $\bx$ as 
    $$\bx = \begin{cases}
                \bx_1, & \text{with probability } \pi \\
                \bx_2, & \text{with probability } 1-\pi
            \end{cases}$$
    where $\bx_1 \sim \mathcal{N}(\bmu^{(1)}, I)$ and $\bx_2 \sim \mathcal{N}(\bmu^{(2)}, I)$. From the law of total expectation we get
    $$E[\|\bx\|^2] = \pi E[\|\bx_1\|^2] + (1-\pi) E[\|\bx_2\|^2]$$
    and from \ref{lma:normalDistribution} we get
    $$E[\|\bx\|^2] = \pi \cdot \left(\|\bmu^{(1)}\|^2 + \text{tr}(I) \right) + (1-\pi) \cdot \left(\|\bmu^{(2)}\|^2 + \text{tr}(I) \right) = O(d)$$
    Denote $A = \{\bx : \left | \|\bx\|^2 - E[\| \bx\|]^2 \right | > d^{\epsilon}\}$ where $\frac{1}{2} < \epsilon < 1$.\newline
    From the law of total probability we get:
    \begin{align*}
        &p(A) = p(A | \bx = \bx_1) \cdot \pi + p(A | \bx = \bx_2) \cdot (1-\pi) \\
        & = 1 - \max\left( 2\exp\left( -\frac{c_1}{4c_2^2} \cdot d^{2\epsilon - 1} \right), 2\exp\left( -\frac{c_1}{2c_2^2}\cdot d^\epsilon \right) \right) - 2\exp\left( -\frac{c_1}{16c_2^2} \frac{d^{2\epsilon}}{\| \mu \|^2} \right) = 1 - o_d(1)
    \end{align*}
    and specifically, $\|\bx\|^2 = O(d)$.
    
    Now, let us show that $E[\bx^\top \by] = o(d)$:
    \begin{align*}
        &E[\bx^\top \by] = E[\bx^\top] E[\by] = \left(\pi \bmu^{(1)} + (1-\pi) \bmu^{(2)}\right)^\top\left(\pi \bmu^{(1)} + (1-\pi) \bmu^{(2)}\right) \\
        & = \pi^2\|\bmu^{(1)}\|^2 + 2\pi(1-\pi){\bmu^{(1)}}^\top \bmu^{(2)} + (1-\pi)^2 \|\bmu^{(2)}\|^2 \\
        & = \pi^2 o(d) + 2\pi(1-\pi)o(d) + (1-\pi)^2o(d) = o(d)
    \end{align*}
    We divide the proof into 4 cases.
    
    \textbf{Case 1}: $\bx,~\by \sim \mathcal{N}(\bmu^{(1)}, I)$
    
    In this case, both points came from the same normal distribution, which we have already proven.
    
    \textbf{Case 2}: $\bx \sim \mathcal{N}(\bmu^{(1)}, I)$ and $\by \sim \mathcal{N}(\bmu^{(2)}, I)$
    
    For every $i$ we have that $x_i$ and $y_i$ are sub-Gaussians and $\|x_i\|_{\psi_2} \leq c,~~\|y_i\|_{\psi_2} \leq c$, so we can use the same logic as in \lemref{lma:normalDistributionHelper} do prove that $\bx^\top \by = o(d)$ with the same probability.
    
    \textbf{Case 3}: $\bx \sim \mathcal{N}(\bmu^{(2)}, I)$ and $\by \sim \mathcal{N}(\bmu^{(1)}, I)$
    
    Same as case 2.
    
    \textbf{Case 4}: $\bx \sim \mathcal{N}(\bmu^{(2)}, I)$ and $\by \sim \mathcal{N}(\bmu^{(2)}, I)$
    
    Same as case 1
    
    Similar to \ref{remark:sizeOfTrainInNormalDistribution}, with probability at least $1 - k$ we have that 
    $$n \cdot \langle \bx, \by \rangle \leq n \cdot \max(\| \mu^{(1)} \|^2, \| \mu^{(2)}\|^2) + d^\epsilon = o(d) \Rightarrow n = \frac{o(d)}{\max \left\{\|\bmu^{(1)}\|^2, \|\bmu^{(2)}\|^2 \right \} + d^\epsilon}$$
    and also that $\| \bx \|^2 = \Omega(d)$. Setting $\tau = k \cdot n^2 = o_d(1)$ completes the proof.
\end{proof}

\end{document}